\documentclass{article}
\usepackage[margin=1.2in]{geometry}
\usepackage{setspace}
\usepackage{amssymb}

\usepackage{multirow}
\usepackage{colortbl}
\usepackage{threeparttable}

\usepackage{microtype}
\usepackage{graphicx}
\usepackage{subfigure}
\usepackage{booktabs}
\usepackage{hyperref}

\usepackage[authoryear]{natbib}

\usepackage{amsmath}
\usepackage{amssymb}
\usepackage{mathtools}
\usepackage{amsthm}
\usepackage{amsfonts}

\usepackage[capitalize,noabbrev]{cleveref}

\theoremstyle{plain}
\newtheorem{theorem}{Theorem}[section]
\newtheorem{proposition}[theorem]{Proposition}
\newtheorem{lemma}[theorem]{Lemma}
\newtheorem{corollary}[theorem]{Corollary}
\theoremstyle{definition}

\newtheorem{remark}{Remark}

\usepackage{todonotes}

\usepackage[algo2e,lined,boxed,commentsnumbered,ruled]{algorithm2e}
\usepackage{setspace}
\RestyleAlgo{ruled}

\usepackage{accents}

\usepackage{enumitem}
\usepackage{nicefrac}       
\usepackage{textcomp}

\usepackage{comment}

\definecolor{mygray}{gray}{0.8}

\allowdisplaybreaks

\DeclareMathOperator*{\argmax}{argmax}

\newcommand{\numberthis}{\addtocounter{equation}{1}\tag{\theequation}}

\newcommand{\lrset}[1]{\left\{ #1 \right\}}
\newcommand{\lrp}[1]{\left( #1 \right)}
\newcommand{\lrs}[1]{\left[ #1 \right]}

\newcommand{\KLinf}{\operatorname{KL_{inf}}}

\newcommand{\KL}{\operatorname{KL}}

\newcommand{\betau}{\beta_u}
\newcommand{\betal}{\beta_l}

\newcommand{\Exp}[2]{\mathbb{E}_{#1}\lrs{#2}}

\newcommand{\N}{\mathbb{N}}

\newcommand{\calE}{\mathcal E}

\title{
 Cover meets Robbins while Betting on Bounded Data:\\ $\ln n$  Regret and Almost Sure $\ln\ln n$ Regret
}
\date{}
\author{Shubhada Agrawal$^1$, Aaditya Ramdas$^2$\\
$^1$Indian Institute of Science, $^2$Carnegie Mellon University\\
\texttt{shubhada@iisc.ac.in, aramdas@cmu.edu}\\
\smallskip \\
\today
}

\begin{document}
\maketitle


\begin{abstract}
      Consider betting against a sequence of data in $[0,1]$, where one is allowed to make any bet that is fair if the data have a conditional mean $m_0 \in (0,1)$. Cover's universal portfolio algorithm delivers a worst-case regret of $O(\ln n)$  compared to the best constant bet in hindsight, and this bound is unimprovable against adversarially generated data. In this work, we present a novel mixture betting strategy that combines insights from Robbins and Cover, and exhibits a different behavior: it eventually produces a regret of $O(\ln \ln n)$ on \emph{almost} all paths (a measure-one set of paths if each conditional mean equals $m_0$ and intrinsic variance increases to $\infty$), but has an $O(\log n)$ regret on the complement (a measure zero set of paths). Our paper appears to be the first to point out the value in hedging two very different strategies to achieve a best-of-both-worlds adaptivity to stochastic data and protection against adversarial data. We contrast our results to those in~\citet{agrawal2025regret} for a sub-Gaussian mixture on unbounded data: their worst-case regret has to be unbounded, but a similar hedging delivers both an optimal betting growth-rate and an almost sure $\ln\ln n$ regret on stochastic data. Finally, our strategy witnesses a sharp game-theoretic upper law of the iterated logarithm, analogous to~\cite{shafer2005probability}. 

\end{abstract}

\section{Introduction}\label{sec:intro}
Let $X_1, X_2,\dots$\ be a stream of observations taking values in $[0,1]$, and let $m_0\in (0,1)$. 
Define
\[W_n(\lambda):=  \prod\nolimits_{i=1}^n (1-\lambda(X_i - m_0)) \quad \text{ for } \quad \lambda \in I_{m_0} := [-\tfrac{1}{m_0},\tfrac{1}{1-m_0}], \] with \(W_0(\lambda) = 1\). Notice that for such a $\lambda$ and $y \in [0,1]$, $(1-\lambda(y-m_0)) \ge 0$. The process $W_n(\lambda)$ can be interpreted as a non-negative wealth process of a bettor betting a fixed fraction $\lambda$ of the current wealth at each time $i \le n$, starting from a unit wealth. We refer the reader to Appendix~\ref{app:bettinggame} for details of the underlying betting game. 

Next, for any prior $\pi$ on $\lambda \in I_{m_0}$, define the mixture (wealth) process
\[
W_n := \int_{I_{m_0}}  W_n(\lambda) \pi(\lambda) d\lambda, \quad \text{ with } \quad W_0 = 1,
\numberthis\label{eq:mixtureprocess}\]
and let the maximum attainable wealth (of the fixed betting strategy that bets according to the best-in-hindsight at each time) be given by $W^*_n := \sup_{\lambda \in I_{m_0}}~ W_n(\lambda)$ which equals $\sup_{\lambda\in I_{m_0}} ~\prod_{i=1}^n (1-\lambda(X_i - m_0))$. Then the regret of the mixture wealth $W_n$ is defined as $R_n := \ln W_n^*-\ln W_n$. 

Finally, for $\alpha \in (0,1)$, consider the set $\calE_\alpha = \{\sup_{n \ge 1}~ \ln W_n \le \ln \tfrac{1}{\alpha}\}$, on which the log-wealth $W_n$ remains uniformly bounded. We call $\calE_\alpha$ the ``Ville event'', representing the set of paths on which the wealth remains controlled. Note that we make no assumptions about the data stream. In particular, it is an arbitrary sequence in $[0,1]$. Hence, all the quantities defined above are deterministic. However, whenever the sequence $(X_i)$ is assumed to be drawn from some probability distribution, all capitalized variables defined above should be interpreted as random variables.

The present work is closely connected to recent developments linking betting wealth processes and regret analysis. The two works most closely related to ours are \citet{orabona2023tight} and \citet{agrawal2025regret}. We discuss the relationship and distinctions between our results and these works, along with other related literature in detail, in Section~\ref{sec:literature}. \\

\noindent{\bf Key contributions. } The comparison between a Cover-style and a Robbins-style mixture reveals a general tension between worst-case guarantees, typical-path regret, and asymptotic growth. In particular, one achieves optimal $O(\ln n)$ regret uniformly over all paths and an asymptotically optimal growth rate in the stochastic setting, but does not improve on typical paths, while another attains $O(\ln \ln n)$ regret on typical paths at the cost of linear regret elsewhere, and a suboptimal asymptotic growth rate. At a high level, our work identifies these previously underappreciated tensions. We show that in the sub-Gaussian setting of \cite{agrawal2025regret}, a simple aggregation provably resolves this tension. Further, in the bounded setting, which is the focus of this work, a suitably modified Robbins prior already attains the best-of-both-worlds guarantee, while the same aggregation principle continues to provide a simple unifying construction.


Building on this perspective, for completeness, in Appendix~\ref{app:OJ}, we present a path-wise regret bound that holds for every sequence in $[0,1]$, with respect to the restricted comparator class $\lambda \in [-1,1] \subsetneq I_{m_0}$, which makes explicit the regret bound that implicitly underlies the analysis of \cite{orabona2023tight}. However, this regret bound does not yield a comparison with the overall best-in-hindsight strategy. So we extend the analysis and show that the regret with respect to the overall best can in fact be linear on certain paths (Proposition~\ref{prop:lilregretoje_alpha}, Remark~\ref{rem:linearOJregret}). To address this linear regret, we consider a simple convex combination (say, a 50-50 mixture) of a uniform distribution on the full class of bets $\lambda \in I_{m_0}$, and Robbins prior on $[-1,1]$, and establish an explicit path-wise regret bound for this modified mixture  (Proposition~\ref{prop:BOB}), establishing a best-of-both-worlds bound. We also propose a modified Robbins' mixture wealth process supported on $I_{m_0}$ (Section~\ref{sec:regretbound}) that avoids linear regret. 

Our theory proceeds in two steps. First, we give path-wise regret for the uniform mixture wealth process in Theorem~\ref{th:ville_gaussianmixture} (Section~\ref{sec:logregret}) and that for the proposed modified Robbins' prior wealth process in Theorem~\ref{th:bddregretbound} (Section~\ref{sec:regretbound}), and derive several nontrivial consequences (Theorem~\ref{th:lilasregretV}). The former attains $O(\ln n)$ path-wise regret. While the latter improves and achieves a low regret of $O(\ln\ln V_n)$ (for an intrinsic variance process $V_n$, introduced later), on every path in the set $\mathcal E_\alpha$, which, under natural stochastic assumptions induced by the mixture process, holds with high probability. Thus, we obtain a path-wise low-regret guarantee on every path inside a high-probability event. Furthermore, we show that its regret is eventually $O(\ln\ln V_n)$ on almost every path, i.e.\ on every path in a set $\mathcal E_0$, which is measure one under suitable stochastic assumptions. In particular, under bounded-support distributions, the set of paths exhibiting a logarithmic regret is a null set. 

Our explicit path-wise regret bound also implies a game-theoretic version of the law of iterated logarithm (LIL): either the LIL holds on a particular sample path, or the wealth process blows up to infinity on that sample path. In fact, we provide an explicit wealth process that witnesses LIL. Our work, therefore, further connects worst-case regret analysis with stochastic betting methods from game-theoretic statistics. 

Second, in Section~\ref{sec:BOB} we discuss two trade-offs that are apparent from the performance of the uniform mixture wealth and the Robbins' mixture wealth considered by \citet{orabona2023tight}, and elucidate how these are different from those in the sub-Gaussian setting studied in \citet{agrawal2025regret}. While the modified Robbins' mixture achieves the best-of-both-worlds guarantee for bounded data, the proposed convex-combination mixture construction achieves it more broadly (Proposition~\ref{prop:BOB}); both in the sub-Gaussian setting of \citet{agrawal2025regret} as well as the bounded data setting considered in this work. To the best of our knowledge, prior work has not explicitly proposed this type of mixture and analyzed the resulting best-of-both-worlds guarantees.\\

\noindent{\bf Limitations.} Our analysis focuses on bounded data and specific mixture constructions, but illustrates a broader principle. Some finite-time constants are not optimized. Extending these ideas and establishing such a general principle in broader unbounded, heavy-tailed, or dependent-data settings remains an interesting direction for future work. 


\section{Preliminaries}
To contextualize our results, we begin by reviewing the necessary background and refer the reader to Appendix~\ref{app:bettinggame} for a detailed description of the betting framework, including the performance criteria and the associated notion of regret.
\subsection{Background}
We first recall some well-known limit theorems that characterize the behavior of a stochastic process, such as the Strong Law of Large Numbers (SLLNs) and Law of Iterated Logarithm (LIL), and their self-normalized versions, notions of martingales and supermartingales, as well as other necessary background that will help in comparing different betting strategies studied in this work.   

When working with stochastic data, we will work on a filtered measurable space $(\Omega, \mathcal F)$, where $\mathcal F = (\mathcal F_n)_{n\ge 1}$ is a filtration. Let $\Pi$ be a family of probability measures defined on $(\Omega, \mathcal F)$. In this work, since the data are assumed to be bounded, $\Omega = [0,1]^\infty$. Further, at each time $n$, we observe a single observation $X_n$ so that $\mathcal F_n = \sigma(X_1, \dots, X_n)$. A stochastic process $M=(M_n)_{n\ge0}$ is called adapted if $M_n$ is $\mathcal F_n$-measurable for every $n$. \\

\noindent{\bf Non-negative Martingales (NM), supermartingales (NSM), and wealth processes.} For a probability measure $P\in\Pi$, the process $M$ is called a $P$-supermartingale if $\mathbb E_P[|M_n|]<\infty$ for all $n$, and $\mathbb E_P[M_n \mid \mathcal F_{n-1}] \le M_{n-1}$, {$P$-almost surely.} If instead, equality holds for every $n$, then $M$ is a $P$-martingale.  
Given a class of measures $\mathcal P \subset \Pi$, we say that $M$ is a $\mathcal P$-(super)martingale if it is a $P$-(super)martingale for each $P\in\mathcal P$. When the underlying measure is clear from context, we simply refer to $M$ as a (super)martingale. NMs and NSMs admit a natural interpretation in terms of wealth processes in sequential betting games, which we discuss next. 

A wealth process $(W_n)_{n\ge0}$ is an adapted process with $W_0=1$ and $W_n\ge0$ for all $n$. One may view $W_n$ as the capital of a gambler who starts with unit wealth and repeatedly stakes a fraction of their current capital on future observations. In a fair (or conservative) game, the resulting wealth process forms a NM (or NSM) with respect to the data filtration. Ville’s inequality then provides a time-uniform control: $P[\sup_n W_n \ge \frac{1}{\alpha}] \le \alpha$. 

Conversely, any NM $(M_n)_{n\ge0}$ with $M_0=1$ corresponds to a wealth process generated by a sequence of multiplicative bets in a fair game. To see this, let the per-round stake at time $n$ be given by the e-variable \citep[\S1]{ramdas2024hypothesis}
$E_n := ({M_n}/{M_{n-1}})\mathbf 1\{M_{n-1}>0\}$, with $(0/0:=0)$, so that $M_n = \prod_{i=1}^n E_i$.  
The martingale property of $M$ corresponds to the fairness of the betting strategy. This viewpoint plays a central role in game-theoretic probability and statistics, where NMs are often interpreted as capital processes or test martingales, and also appears in stochastic coin-betting methods in online learning. \\

\noindent{\bf Strong law of large numbers (SLLN) and iterated logarithm (LIL). } When $X_i$ are iid from some distribution with support in $[0,1]$ and mean $m_0$, the SLLN guarantees that $\lim_n\tfrac{1}{n}\sum\nolimits_{i=1}^n (X_i - m_0) = 0$, almost surely, while the LIL refines this as: $$\limsup\nolimits_{n\to\infty} {|\sum\nolimits_{i=1}^n (X_i-m_0)|}/({\sigma\sqrt{2n\ln\ln n}}) = 1, \quad \text{almost surely},$$ where $\sigma^2$ is the variance of the unknown distribution. These convergence results also extend to self-normalized processes; one can derive a self-normalized SLLN and LIL involving $\tfrac{1}{V_n}\sum\nolimits_{i=1}^n (X_i-m_0)$ and $(\sum\nolimits_{i=1}^n (X_i - m_0))/\sqrt{V_n \ln\ln V_n}$, for an inherent non-negative and non-decreasing variance process, for example, $V_n := \sum\nolimits_{i=1}^n (X_i-m_0)^2$. \\

\noindent{\bf KL-inf and its properties.} A specific information projection function (infimum of Kullback Leibler or $\KL$ divergences) plays a crucial role in the growth rate analysis of the different wealth processes in this work. We introduce it now. 

Given two probability measures, the KL divergence between $Q$ and $P$ (denoted $\KL(Q,P)$) measures the statistical divergence between them. Mathematically, $\KL(Q,P) = \mathbb{E}_{Q}[\ln\frac{dQ}{dP}(X)]$, and it exists iff $Q$ is absolutely continuous with respect to $P$. Given any collection $\cal P$ of probability measures, and another probability measure $Q \notin \mathcal P$, define $\KLinf(Q,\mathcal P) := \inf\{\KL(Q,P): P \in \mathcal P\}$. The $\KLinf$ optimization problem (dual formulation and its structural and topological properties) has been studied extensively for specific classes of $\cal P$ in the stochastic multi-armed bandit literature. We refer the reader to \cite[\S 4]{thesis} for an exposition. 

Let $\mathcal P[0,1]$ denote the collection of probability measures with support in $[0,1]$. 
For $Q \in \mathcal P[0,1]$ and $m_0 \in [0,1]$, define
\begin{align*} 
    \KLinf(Q,m_0) &:= \inf\limits_{\substack{P' \in \mathcal P[0,1]; ~ \mathbb{E}_{P'}[X] = m_0}}~\KL(Q,P') \numberthis\label{eq:klinf} \\
    &= \max\limits_{\lambda \in I_{m_0}}~\Exp{Q}{\ln(1-\lambda (X-m_0) )} \tag{\cite{honda2010asymptotically}}.
\end{align*}
First, observe that if $m_0 = \mathbb{E}_Q[X]$, then $\KLinf(Q, m_0) = 0$. This is because $Q$ is itself a feasible solution for the optimization defining $\KLinf$. Further, in this case, $\lambda = 0$ achieves the maximum. Next, \citet{honda2010asymptotically} show that for $m_0 \in (0,1)$, the function $\KLinf(\cdot, m_0)$ is continuous in the weak topology on $\mathcal P[0,1]$. In particular, if $\hat{Q}_n$ denotes the empirical distribution obtained using $n$ iid samples from $Q\in\mathcal P[0,1]$, then for any $x\in (0,1)$, $\KLinf(\hat{Q}_n, x) \to \KLinf(Q,x)$ almost surely as $n\to \infty$ (since $\hat{Q}_n \to Q$ almost surely, in the weak topology). 

We now briefly review the relevant literature. A full treatment of the connections among these works, which span multiple sub-fields, is left for future work. We refer the reader to \cite{agrawal2025regret} for a broader discussion. 

\subsection{Literature Review}\label{sec:literature}

The use of mixture wealth processes for constructing confidence sequences (i.e., sequences of confidence intervals) for the mean dates back at least to \citet{darling1967confidence}, who studied parametric settings. The specific nonparametric mixture wealth process in~\eqref{eq:mixtureprocess} has also been explored in prior work \citep{agrawal2020optimal,waudby2024estimating,orabona2023tight}. In particular,  \cite{orabona2023tight} employ this wealth process with a mixing distribution originally introduced by  \cite{robbins1968iterated} to construct a confidence sequence (CS) for the mean of a distribution supported on $[0,1]$ with width $O(\sqrt{(\ln\ln n)/n})$. While there exist many other methods to derive CSs with those widths~(for example, \citet{howard2021time}, \citet{kaufmann_mm_21}, and~\citet{waudby2024estimating}), their proof is unique in that it proceeds by deriving an implicit, data-dependent regret bound for the wealth process (against a restricted comparator class than one considered here), which is shown to be $O(\ln\ln n)$ on a certain high probability event (termed ``Ville'' event in this paper).


Following this line of work, we make explicit what is implicit in \citet{orabona2023tight} and extend it to a broader setting. While inspired by their approach, our results differ in three key aspects: (i) we provide explicit regret guarantees that are not directly available from their analysis, (ii) we compete with the full class of admissible bets rather than a restricted subset considered by the authors, and (iii) most importantly, our construction achieves $O(\ln\ln n)$ regret on typical paths and a worst-case regret of $O(\ln n)$, whereas their guarantees, tailored only to typical-path behavior,  may incur linear regret in the worst case. 

Next, in a recent work, \citet{agrawal2025regret} studied regret of mixture wealth processes in the sub-Gaussian setting, where the data are not assumed to be bounded. They first show that achieving $O(\ln n)$ regret on typical paths is straightforward using a Gaussian-mixture wealth process. In the stochastic setting, this mixture also achieves the optimal asymptotic growth rate. However, its worst-case regret may still be linear. Next, while a specific Robbins' mixture proposed therein can achieve improved $O(\ln\ln n)$ regret on typical paths, it may incur linear worst-case regret, and has a suboptimal growth rate in stochastic settings. In contrast, in the bounded setting considered here, we show that it is possible to simultaneously achieve $O(\ln n)$ worst-case regret and $O(\ln\ln n)$ regret on typical paths via a simple aggregation of two classical mixture strategies. In addition, the aggregate strategy also achieves the optimal growth rate in the stochastic settings. We elaborate on these seeming trade-offs in Section~\ref{sec:tradeoffs}, and show that they can in fact be resolved.

In the online learning and information theory literature, \cite{cover1991universal, cover2002universal} establish deterministic regret guarantees for mixture strategies in sequential prediction and portfolio selection. In particular, they prove an $O(\ln n)$ path-wise regret bound against all sequences of observations from $[0,1]$. Subsequent works propose alternative mixture wealth processes that improve the constant terms in the path-wise regret bound of $O(\ln n)$. 

A related line of work concerns scale-adaptive online learning algorithms based on mixtures over learning rates, most notably \cite{koolen2015second}. In the adversarial prediction-with-expert advice setting, they derive deterministic second-order regret bounds of the form $O(\sqrt{V_n (\text{complexity} + C_n)}),$ where $V_n$ is a data dependent variance term, $C_n$ denotes the cost of adapting to the unknown learning-rate scale, and the complexity term reflects the intrinsic difficulty of the comparator class, i.e., the regret that would remain even if the optimal learning-rate scale were known beforehand (e.g., $\ln K$ for $K$ experts). By placing a specific heavy-near-zero prior on the learning rates, \cite{koolen2015second} achieve $C_n = O(\ln \ln V_n)$. Thus, the resulting regret bound exhibits an iterated-logarithmic flavor in the learning-rate adaptation term. But the overall worst-case regret is $O(\sqrt{V_n\ln\ln V_n})$.



Finally, Shafer and Vovk \citep[Chapter~5]{shafer2005probability} establish a game-theoretic version of the LIL that holds path-wise and applies to general betting games, including the bounded betting game considered here. Their result shows that any path violating the LIL admits a betting strategy whose capital diverges. However, their analysis does not provide explicit finite-time guarantees, nor does it yield a quantitative comparison to the wealth of the best fixed strategy in hindsight. In contrast, we construct an explicit mixture wealth process and derive path-wise regret bounds relative to the hindsight-optimal strategy that hold for every $n$. In our framework, the LIL emerges as a consequence of controlling this regret, yielding an explicit $O(\ln\ln V_n)$ rate on a large set of paths.

\section{Warm Up: Logarithmic  Regret}\label{sec:logregret}
We begin with a simple setting when the prior $\pi$ on $\lambda$ is a uniform distribution on the interval $I_{m_0}$. Then, the mixture (wealth) process from~\eqref{eq:mixtureprocess} becomes
\[
W_n = \frac{1}{m_0(1-m_0)} \int_{I_{m_0}}~ \prod\limits_{i=1}^n (1-\lambda(X_i - m_0))d\lambda.
\numberthis\label{eq:mixV}\]

Let $V_n := \sum_{i=1}^n (X_i - m_0)^2$, and $S_n := \sum_{i=1}^n (X_i - m_0)$. We initialize as $V_0 = 0$ and $S_0 = 0$, and let $S_n = 0$, $R_n = 0$ whenever $V_n = 0$. 
\begin{theorem}\label{th:ville_gaussianmixture}
    For $W_n$ defined in~\eqref{eq:mixV}, and for all $n\ge 0$, 
    \begin{align*} R_n 
    &:= \max_{\lambda\in I_{m_0}}~\sum\limits_{i=1}^n \ln(1-\lambda(X_i - m_0)) - \ln W_n \le \ln(1+n)+ 1. \numberthis\label{eq:regretbound} 
    \end{align*}
    In particular, on the event
    \(\calE_0 := \lrset{  \limsup\nolimits_n W_n < \infty},
    \) dividing the above inequality by $n$ and taking limit as $n \to \infty$, we get  
    $|S_n|/n \to 0$ path-wise. Finally, if  data are drawn from a distribution $P$ with support in $[0,1]$ such that $(W_n)_{n\ge 1}$ is a nonnegative supermartingale, then  $\calE_0$ is a measure $1$ event. Contrapositively, if there is a path on which $|S_n| /n \to 0$ does not hold, then $\limsup_n W_n = \infty$ on that path. 
\end{theorem}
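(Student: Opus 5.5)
Since $\lambda\mapsto \ln W_n(\lambda)=\sum_{i=1}^n \ln(1-\lambda(X_i-m_0))$ is a sum of concave functions, the plan is the standard Cover-type argument. We compare $W_n$ with the best constant bet by keeping only the mass of the uniform prior near the maximizer $\lambda^*\in I_{m_0}$ of $W_n(\lambda)$, which exists by compactness and continuity. Write $I_{m_0}=[a,b]$ with $a=-1/m_0$, $b=1/(1-m_0)$, so $|I_{m_0}|=b-a=1/(m_0(1-m_0))$. The uniform density is therefore $m_0(1-m_0)$, and we read \eqref{eq:mixV} as the average $\frac{1}{|I_{m_0}|}\int_{I_{m_0}}W_n(\lambda)\,d\lambda$. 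For $\theta\in[0,1]$, set $\lambda_\theta=(1-\theta)\lambda^*+\theta\mu$, where $\mu$ is the endpoint of $I_{m_0}$ farthest from $\lambda^*$. (Any fixed point of $I_{m_0}$ works; choosing the far endpoint only makes the shrinking segment cover more length.)

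The key pointwise inequality is that each factor is affine in $\lambda$ and nonnegative on $I_{m_0}$:
\[
1-\lambda_\theta(X_i-m_0)=(1-\theta)\bigl(1-\lambda^*(X_i-m_0)\bigr)+\theta\bigl(1-\mu(X_i-m_0)\bigr)\ge(1-\theta)\bigl(1-\lambda^*(X_i-m_0)\bigr).
\]
Hence $W_n(\lambda_\theta)\ge(1-\theta)^nW_n^*$. The set $\{\lambda_\theta:\theta\in[0,1]\}$ is a segment of length $|\mu-\lambda^*|\ge |I_{m_0}|/2$, and $d\lambda=|\mu-\lambda^*|\,d\theta$. Therefore
\[
W_n\ge \frac{|\mu-\lambda^*|}{|I_{m_0}|}\int_0^1(1-\theta)^n\,d\theta\; W_n^* \ge \frac{1}{2(n+1)}W_n^*,
\]
which gives $R_n\le \ln(n+1)+\ln 2\le \ln(1+n)+1$. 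The case $n=0$ and the convention $R_n=0$ when $V_n=0$ are trivial, since then every $W_n(\lambda)=1$. The one point to check carefully is the normalization of the prior in \eqref{eq:mixV}. If the constant there is meant literally, I would state the bound with the appropriate additive constant; alternatively, using a fixed interior comparison point instead of the far endpoint can give a slightly better constant.

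For the consequences, on $\calE_0$ we have $\sup_n\ln W_n<\infty$, so $\ln W_n^*\le C+\ln(1+n)+1$ for some path-dependent $C$, and dividing by $n$ gives $\frac1n\ln W_n^*\to0$. To convert this into a statement about $S_n$, we lower bound $W_n^*$ by a small fixed bet. For $|\lambda|\le 1/2$ and $|x|\le1$ we have $\ln(1-\lambda x)\ge -\lambda x-\lambda^2x^2$, so
\[
\ln W_n^*\ge \sup_{|\lambda|\le 1/2}\bigl(-\lambda S_n-\lambda^2V_n\bigr)\ge \sup_{|\lambda|\le1/2}\bigl(|\lambda||S_n|-\lambda^2 n\bigr).
\]
Taking $\lambda=\pm\epsilon$ with sign matching $-S_n$ gives $\epsilon|S_n|/n-\epsilon^2\le \frac1n\ln W_n^*\to0$. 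Thus $\limsup|S_n|/n\le\epsilon$ for every $\epsilon\in(0,1/2]$, and so $|S_n|/n\to0$. The contrapositive statement is the same implication read backwards.

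Finally, if $(W_n)$ is a nonnegative $P$-supermartingale with $W_0=1$, the supermartingale convergence theorem gives that $W_n$ converges almost surely to a finite limit; equivalently, Ville's inequality gives $P(\sup_nW_n\ge1/\alpha)\le\alpha$ for every $\alpha$. Either way $P(\calE_0)=1$. The main obstacle is only the constant bookkeeping in the first step; the rest is routine.
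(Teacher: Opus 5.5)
Your proof is correct, but it differs from the paper's in two places.

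\textbf{The regret bound.} For \eqref{eq:regretbound} the paper gives no argument of its own; it simply invokes \citet[Lemma F.1]{agrawal2021optimal}. Your Cover-type shrinkage argument, using that each factor is affine and $W_n(\lambda_\theta)\ge(1-\theta)^nW_n^*$, is a self-contained proof of that lemma. You can in fact drop the $\ln 2$. Split $I_{m_0}$ at $\lambda^*$ and shrink toward \emph{both} endpoints. The two segments then have total length $|I_{m_0}|$, which gives $W_n\ge W_n^*/(n+1)$, i.e.\ $R_n\le\ln(n+1)$.

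\textbf{The normalization.} You are right that $|I_{m_0}|=1/(m_0(1-m_0))$, so the printed prefactor in \eqref{eq:mixV} should be $m_0(1-m_0)$. However, the literal prefactor only inflates $W_n$ by $(m_0(1-m_0))^{-2}\ge 16$. Under that reading the stated bound holds a fortiori, so no extra additive constant is needed.

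\textbf{The SLLN step.} The paper lower bounds $\ln W_n^*$ by the variance-adaptive quantity $S_n^2/(\tfrac43|S_n|+2V_n)$ from Lemma~\ref{lem:lbwealthstar}. It then obtains $\limsup_n f(a_n)b_n=0$ with $a_n=|S_n|/V_n$ and $b_n=|S_n|/n$, and concludes by a subsequence contradiction using $V_n/n\le1$. You instead test a fixed small bet $\lambda=\pm\epsilon$, use $V_n\le n$ directly, and let $\epsilon\downarrow0$. Your route is shorter and avoids the ratio-and-subsequence argument.

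What the paper's route buys is reuse. Its optimized lower bound, with leading term $S_n^2/(2V_n)$, is exactly what produces the sharp constant in the upper LIL of Theorem~\ref{th:lilasregretV}. A fixed-$\epsilon$ bet suffices for the SLLN but could not deliver that constant.

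\textbf{The measure-one claim.} You also make this explicit via Doob's convergence theorem or Ville's inequality, which the paper's appendix leaves implicit.
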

First, note that unlike in the sub-Gaussian setting, here the regret is $O(\ln n)$ \emph{on all paths}. Next, Theorem~\ref{th:ville_gaussianmixture} proves a deterministic statement of the form: either 
$|S_n|/n \to 0$ as $n\to \infty$, or $\limsup_n W_n = \infty$. In particular, when the data are stochastic, it provides an explicit \emph{witness} supermartingale for the violation of the strong law, meaning that if the SLLN is violated on some path, $\limsup_n W_n = \infty$ on that path. As in the sub-Gaussian setting, here we have a single supermartingale $Z=(W_n)_{n\ge 1}$ that serves as a witness for the entire class of distributions $\mathcal P$ with support in $[0,1]$ and mean $m_0$.  

We refer the reader to Appendix~\ref{app:proof_warmuplogTregret} for a complete proof of the theorem. While the uniform mixture achieves $O(\ln n)$ regret on all paths, \citet{orabona2023tight} use a Robbins' prior to construct confidence sequences with $O(\ln \ln V_n)$ widths. In Appendix~\ref{app:OJ}, we show that this mixture achieves an improved regret of $O(\ln\ln  V_n)$ on typical paths, but can incur linear regret on the complement (Remark~\ref{rem:linearOJregret}). This motivates the modified mixture construction introduced in the next section, which achieves $O(\ln\ln V_n)$ regret on typical paths while retaining a worst-case regret of $O(\ln n)$.

\section{Iterated Logarithmic Conditional Regret: Modified Robbins' Mixture}\label{sec:regretbound}
Recall, $I_{m_0}$, $(S_n, V_n)$ from around~\eqref{eq:mixtureprocess} and~\eqref{eq:mixV}, respectively. Let \( W_n := \int\nolimits_{I_{m_0}} W_n(\lambda) \pi(\lambda) d\lambda \), with
\[ \pi(\lambda) = \begin{cases}
    \frac{\ln\ln(6.6e)}{2\lambda \ln\lrp{\frac{6.6e}{(1-m_0) \lambda}}\lrs{ \ln\ln\lrp{ \frac{6.6e}{(1-m_0)\lambda} } }^2}, & 0 < \lambda \le \frac{1}{1-m_0},\\
    \frac{\ln\ln(6.6e)}{2|\lambda| \ln\lrp{\frac{6.6e}{m_0 |\lambda|}} \lrs{ \ln\ln \lrp{\frac{6.6e}{m_0 |\lambda|}} }^2 }, & -\frac{1}{m_0} \le \lambda < 0.
\end{cases} \numberthis\label{eq:mixwealthbdd} \]
The mixture distribution $\pi$ above is a modified Robbins' heavy-near-zero prior \citep[Example 3]{robbins1970statistical}, scaled to have support in $I_{m_0}$. Further, note that it is not symmetric around $0$, but it is monotonically decreasing for $\lambda \in (0, \tfrac1{1-m_0}]$ and increasing for $\lambda \in [-\tfrac1{m_0}, 0)$. 

Let $\lambda^*_n$ denote the hindsight-optimal bet that maximizes the wealth, i.e., $$\lambda^*_n \in \argmax_{\lambda\in I_{m_0}}~ \prod_{i=1}^n \lrp{1 - \lambda (X_i - m_0)}\quad \text{ and } W^*_n = W_n(\lambda^*_n).$$ In this section, we present an explicit bound on the regret $R_n:=  \ln W^*_n - \ln W_n$ of the strategy whose wealth at time $n$ is the mixture wealth $W_n$ defined above (mixed with respect to the prior in~\eqref{eq:mixwealthbdd}), with respect to that of the hindsight optimal wealth $W^*_n$. 

To this end, let $\betal := \min\{m_0, 1-m_0\}$, $\betau := \max\{m_0, 1-m_0\}$,  $\operatorname{Bd}(I_{m_0}) = \{-\frac{1}{m_0}, \frac{1}{1-m_0}\}$ denote the boundary of the set $I_{m_0}$, and $I^\circ_{m_0} = (-\frac{1}{m_0}, \frac{1}{1-m_0})$ denote the interior.

\begin{theorem}\label{th:bddregretbound}
    For all $n\ge 1$, 
    \[   
        R_n  \le 
        \begin{cases}
        \frac{2}{\betal^2} + 1 +  \ln\lrp{\frac{8}{\ln\ln(6.6e)}} + \ln\ln\lrp{\frac{14e \betau}{\betal}\sqrt{1+V_n}}\\
        ~~ + 2\ln\ln\ln\lrp{\frac{14 e \betau}{\betal}\sqrt{1+V_n}}, & \text{if } |S_n| \!<\! \sqrt{2V_n} ~\&~ \lambda^*_n \!\in\! I^\circ_{m_0} \\
        \frac{1}{\betal^2} - \ln\lrp{\frac{\ln\ln(6.6e)}{4 \ln(6.6e)[\ln\ln(6.6e)]^2}}, &\text{if } |S_n| \!<\! \sqrt{2V_n}~\&~ \lambda^*_n \!\in \!\operatorname{Bd}(I_{m_0})\\
        1  + \ln\lrp{\frac{4}{\ln\ln (6.6 e)}} + \ln\ln\lrp{\frac{14 e}{\betal}\lrp{1+\sqrt{V_n}}} \\
        ~~ + \ln\lrp{\frac{20 |S_n|}{3 \sqrt{\frac43 |S_n| + 2 V_n}} }  + 2\ln\ln\ln\lrp{\frac{14 e}{\betal}\lrp{1+\sqrt{V_n}}}, &\text{if } \sqrt{2V_n} \le |S_n| \le \frac{\betal}{5}V_n\\
        \frac{1}{2} \ln W^*_n + \ln 4 + \ln\ln(6.6 e) +  \ln\ln\ln(6.6 e) \\
        ~~ + 2\ln\lrp{2\betau + \frac{5}{\betal}}, &\text{if } \sqrt{2V_n} \le |S_n|~\&~ \frac{\betal}{5}V_n < |S_n|.
        \end{cases}\numberthis\label{eq:pathwiseregretrobbins}
    \]
    Moreover, on $\calE_\alpha$, there exist constants $C_1 > 0$, $C_2 > 0$ and $K_\alpha$, such that 
    \[ \forall n\ge 1, \quad R_n \le K_\alpha + C_1 \ln\ln(1+V_n) + C_2 \ln\ln\ln(1+V_n). \numberthis\label{eq:bounded_lnlnt_alpha}  \]
    Furthermore,  if at each time, the data are drawn from a distribution $P$ so that  $(W_n)_{n\ge 1}$ is a nonnegative supermartingale, then  $P[\calE_\alpha] \ge 1-\alpha$. 
\end{theorem}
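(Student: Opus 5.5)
The plan is to lower-bound the mixture $W_n=\int W_n(\lambda)\pi(\lambda)d\lambda$ by restricting the integral to a small interval $J$ around (or adjacent to) $\lambda^*_n$. On $J$ we lower-bound $\ln W_n(\lambda)$ by a second-order expansion around $\lambda^*_n$, and we lower-bound $\pi$ on $J$ using its monotonicity in $|\lambda|$: on each side of $0$ the minimum of $\pi$ over $J$ is attained at the endpoint farthest from $0$. The expansion we will use is the elementary inequality $\ln(1+x)\ge x - x^2$ for $x\ge -1/2$ (or its variant $\ln(1+x)\ge x-\tfrac{x^2}{2(1+x)}$). Writing $1-\lambda(X_i-m_0) = (1-\lambda^*_n(X_i-m_0))(1+u_i)$ with $u_i = -(\lambda-\lambda^*_n)(X_i-m_0)/(1-\lambda^*_n(X_i-m_0))$, the first-order optimality condition for $\lambda^*_n$ makes $\sum_i u_i$ vanish (interior case) or have a favorable sign (boundary case). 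So
\[
\ln W_n(\lambda)\ge \ln W^*_n - c\,(\lambda-\lambda^*_n)^2\sum_i \frac{(X_i-m_0)^2}{(1-\lambda^*_n(X_i-m_0))^2}.
\]
The weighted quadratic term then has to be compared to $V_n$. This is where the $\betal^{-2}$ constants enter: when $|\lambda^*_n|$ is small the denominators are at least a constant, and otherwise we use the boundedness of $I_{m_0}$.

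The cases of \eqref{eq:pathwiseregretrobbins} would be handled in order.

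\textbf{Case 1} ($|S_n|<\sqrt{2V_n}$, interior optimum). First we show $|\lambda^*_n|\lesssim |S_n|/V_n\lesssim V_n^{-1/2}$, using the optimality condition together with $\ln(1-x)\le -x$ type bounds. Then we take $J$ to be an interval of width $\asymp 1/\sqrt{1+V_n}$ around $\lambda^*_n$. On $J$ the quadratic loss is $O(\betal^{-2})$, and $\pi\gtrsim \frac{\ln\ln(6.6e)}{|\lambda|\ln(1/|\lambda|)(\ln\ln(1/|\lambda|))^2}$ with $|\lambda|\asymp V_n^{-1/2}$. Multiplying by the width gives the $\ln\ln\sqrt{1+V_n}+2\ln\ln\ln\sqrt{1+V_n}$ terms.

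\textbf{Case 2} (boundary optimum with small $|S_n|$). Here $V_n$ is forced to be bounded, since the optimum lies at $\pm 1/\betau$-scale. We take $J$ to be a constant-length interval inside $I_{m_0}$ adjacent to the boundary. The boundary KKT sign keeps the linear term nonnegative, so only constants appear.

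\textbf{Case 3} ($\sqrt{2V_n}\le|S_n|\le \tfrac{\betal}{5}V_n$). We show that $\lambda^*_n\asymp S_n/V_n$ lies in the interior with $|\lambda^*_n|\le \betal/2$-ish, so the denominators $1-\lambda(X_i-m_0)$ stay bounded below on $J$. We take $J$ of width $\asymp 1/\sqrt{V_n+\tfrac43|S_n|}$ (a Bernstein-type curvature estimate). Then $\pi$ evaluated at $|\lambda|\asymp |S_n|/V_n$, times the width, yields the $\ln\bigl(|S_n|/\sqrt{\tfrac43|S_n|+2V_n}\bigr)$ and iterated-log terms in $V_n$.

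\textbf{Case 4} ($|S_n|$ linear in $V_n$). We do not try to track curvature. Instead we restrict to $\lambda\in[\lambda^*_n/2,\lambda^*_n]$ (same sign). Concavity of $\ln W_n(\cdot)$ together with $W_n(0)=1$ gives $\ln W_n(\lambda)\ge \tfrac12\ln W^*_n$ there. The prior mass of that interval is bounded below by a constant depending on $\betal,\betau$ because $|\lambda^*_n|$ is bounded away from $0$ (again via optimality, $|\lambda^*_n|\gtrsim |S_n|/V_n \ge \betal/5$). This gives the $\tfrac12\ln W^*_n$ bound.

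For \eqref{eq:bounded_lnlnt_alpha}: on $\calE_\alpha$ we have $\ln W_n\le\ln(1/\alpha)$, hence $\ln W^*_n\le \ln(1/\alpha)+R_n$. In case 4 this gives $R_n\le \tfrac12 R_n + O(1)+\tfrac12\ln(1/\alpha)$, so $R_n$ is bounded. In case 3 we also need $|S_n|/\sqrt{V_n}=O(\sqrt{\ln\ln V_n})$. This follows from $\ln W^*_n\gtrsim S_n^2/V_n$ (evaluate $W_n(\lambda)$ at $\lambda\propto S_n/V_n$) combined with $\ln W^*_n\le \ln(1/\alpha)+R_n$ and the case-3 bound, which is self-bounding since the log-ratio term is only logarithmic in $S_n^2/V_n$. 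Finally, $P[\calE_\alpha]\ge1-\alpha$ is Ville's inequality applied to the nonnegative supermartingale $W_n$ with $W_0=1$.

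The main obstacle is case 3: obtaining a clean curvature bound uniformly over $J$, with explicit constants, when $\lambda^*_n$ is not small. We would first try to prove $|\lambda^*_n|\le \tfrac{5}{2}|S_n|/V_n\le \betal/2$ via the optimality equation and $x/(1+x)$ bounds, and then use $\ln(1+x)\ge x-x^2$ on $J$.
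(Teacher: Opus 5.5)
Your cases 3 and 4, the self-bounding argument on $\calE_\alpha$, and the appeal to Ville's inequality follow the paper closely. Cases 1 and 2, however, have a genuine gap. Your expansion bounds the loss on $J$ by $(\lambda-\lambda^*_n)^2\sum_i (X_i-m_0)^2/(1-\lambda^*_n(X_i-m_0))^2$ and needs $u_i\ge-\tfrac12$. It therefore requires a lower bound on the factors $1-\lambda^*_n(X_i-m_0)$, and boundedness of $I_{m_0}$ provides none. As $\lambda\to\frac{1}{1-m_0}$, the factor $1-\lambda(X_i-m_0)$ tends to $(1-X_i)/(1-m_0)$, which vanishes for $X_i$ near $1$ (and symmetrically at $-1/m_0$).

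This is exactly the regime of cases 1--2. In case 2, $\lambda^*_n\in\operatorname{Bd}(I_{m_0})$. In case 1, the bound $|\lambda^*_n|\le|S_n|/(\betal^2V_n)<\sqrt2/(\betal^2\sqrt{V_n})$ pushes $\lambda^*_n$ toward $0$ only once $V_n\gtrsim\betal^{-4}$. For $V_n=O(1)$ an interior maximizer can sit next to the boundary: $m_0=\tfrac12$, $X_1=0$, $X_2=0.67$ gives $\lambda^*_2\approx1.94$ with $|S_2|=0.33<\sqrt{2V_2}\approx0.75$. An interval of width $\asymp1/\sqrt{1+V_n}$ centred there can leave $I_{m_0}$.

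Your sign claim in case 2 is also backwards. Since $\lambda^*_n$ maximizes the concave $f_n$ over $I_{m_0}$, $(\lambda-\lambda^*_n)f_n'(\lambda^*_n)\le0$ for every feasible $\lambda$. So $\sum_iu_i=(\lambda-\lambda^*_n)f_n'(\lambda^*_n)$ is a loss, not a gain. Both points can be patched: at the right endpoint $0\le f_n'(\lambda^*_n)\le f_n'(0)=|S_n|$, and the first-order condition gives $|\lambda^*_n|\sum_i(X_i-m_0)^2/(1-\lambda^*_n(X_i-m_0))\le|S_n|$, which bounds the factors below when $|S_n|$ is small. Your plan contains neither argument. (Also, the variant $\ln(1+x)\ge x-\frac{x^2}{2(1+x)}$ holds only for $x\le0$.)

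The simpler idea the paper uses, and which your proposal misses, is that in cases 1--2 the benchmark $W^*_n$ is itself $O(1)$, so nothing has to be captured near $\lambda^*_n$. By $\ln(1-x)\le-x$, $\ln W^*_n\le-\lambda^*_nS_n=|\lambda^*_n||S_n|$. Combine this with $|\lambda^*_n|\le|S_n|/(\betal^2V_n)$ in the interior (Lemma~\ref{lem:boundlambdastar}) or $V_n\le|\lambda^*_n||S_n|$ on the boundary (Lemma~\ref{lem:boundryimpliesratiobound}). Then $|S_n|<\sqrt{2V_n}$ gives $\ln W^*_n<2/\betal^2$; this, not a curvature estimate, is where the $\betal^{-2}$ terms come from.

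In case 1 the paper then integrates over $J_n=[\rho_n/2,\rho_n]$ on the side of $-S_n$, with $\rho_n=1/(2\betau\sqrt{1+V_n})$. There $\ln W_n(\lambda)\ge-\lambda S_n-\lambda^2V_n\ge-1$ no matter where $\lambda^*_n$ lies, and the prior mass $\tfrac{\rho_n}{2}\pi(\pm\rho_n)$ produces the iterated-log terms. In case 2 it uses the log-concavity bound $W_n\ge\pi(\lambda^*_n)|\lambda^*_n|\sqrt{W^*_n}$ of Lemma~\ref{lem:regbound1}; your case-4 concavity argument works equally well. Here $|\lambda^*_n|\pi(\lambda^*_n)$ is an absolute constant at the endpoints.

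In case 3 the paper likewise expands only on the segment from $\lambda^*_n$ toward $0$, where $|\lambda|\le|\lambda^*_n|\le2|S_n|/V_n\le\tfrac25$ keeps every factor above $\tfrac35$. With these replacements your outline goes through, though with constants different from those displayed in \eqref{eq:pathwiseregretrobbins}.
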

We note that the constants such as $6.6e$, $14 e$, $\tfrac{20}3$, etc., are artifacts of our proof and are not optimized. The condition above that $(W_n)_{n\ge 1}$ be an NSM is easily met. For example, $P$ could be supported on $[0,1]$ and satisfy $\Exp{P}{X_n \mid \mathcal F_{n-1}}=m_0$ for all $n \geq 1$. But it could include cases where the conditional mean may not always equal $m_0$, see Appendix~\ref{app:supermartingale}.
\begin{proof}
    We prove the bounds on $R_n$ in the different cases in Lemmas~\ref{lem:bound_smalldrift_interior},~\ref{lem:bound_smalldrift_bdry},~\ref{lem:bounded_meddrift}, and~\ref{lem:bounded_largedrift}. Clearly, the bounds in the first two cases satisfy the inequality in~\eqref{eq:bounded_lnlnt_alpha} unconditionally since these bounds are independent of $S_n$ and scale as $\ln\ln V_n$. For $n\ge 1$ such that either $\sqrt{2V_n} \le |S_n| \le \betal V_n/5$, or $ \sqrt{2V_n} \le |S_n|$ and $\betal V_n/5\le|S_n|$, Corollary~\ref{cor:bounded_meddrift_loglog_Kalpha}, and Lemma~\ref{lem:bounded_largedrift}, respectively, prove the bound in~\eqref{eq:bounded_lnlnt_alpha}. Finally, when the data are (stochastic) such that $\{W_n\}_{n\ge 1}$ is a non-negative supermartingale, Ville's inequality implies that $P[\calE_\alpha] \ge 1-\alpha$, proving the theorem. 
\end{proof}
Theorem~\ref{th:bddregretbound} above proves a path-wise regret bound (Eq.~\eqref{eq:pathwiseregretrobbins}), as well as a conditional regret bound of $O(\ln \ln V_n)$ (Eq.~\eqref{eq:bounded_lnlnt_alpha}) that holds for every $n$ on a set $\calE_\alpha$. If the data are stochastic, then $\calE_\alpha$ is large. In the following theorem, we show that the $O(\ln\ln V_n)$ conditional regret bound holds eventually on a larger set $\widetilde{\calE_0}$ where the process $W_n$ remains finite, but possibly not uniformly bounded, and $V_n \to \infty$ as $n\to \infty$. This further implies that 
on $\widetilde{\calE_0}$, $|S_n|/V_n \to 0$ and $\limsup_n |S_n|/\sqrt{V_n\ln\ln V_n} \le \sqrt{2}$. In other words, either both of these aforementioned convergences hold or the process $W_n$ explodes. In particular, if the data are stochastic with appropriate assumptions, the set $\widetilde{\calE_0}$ is a set of measure $1$, and we conclude that the process $W_n$ acts as a witness for the self-normalized SLLN and (upper) LIL. These are formalized in the following theorem.

\begin{theorem}\label{th:lilasregretV}
Consider the event
\(
\widetilde{\calE_0} := \lrset{ V_n \uparrow \infty;~ \limsup_n W_n < \infty}.
\)
We have that on $\widetilde{\calE_0}$, 
\(R_n \leq \ln\ln V_n (1+ o(1))\) eventually. Dividing this by $V_n$ and taking the limit as $n \to \infty$, we get that $\lim_n |S_n|/V_n = 0$ path-wise on $\widetilde{\calE_0}$. Next, dividing by $\ln\ln V_n$ and taking limit as $n \to \infty$, we get  $\limsup_n {|S_n|}/{\sqrt{2 V_n \ln \ln V_n}} \le 1$ path-wise on $\widetilde{\calE_0}$. As a corollary, if at each time, the data are drawn from a distribution $P$  such that $(W_n)_{n\ge 1}$ is a nonnegative supermartingale, and if $V_n\to\infty$ almost surely, then $\widetilde{\calE_0}$ is a measure 1 event, thus implying the self-normalized SLLN and LIL for bounded data. 
\end{theorem}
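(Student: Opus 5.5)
Fix a path in $\widetilde{\calE_0}$ and let $M := \sup_n W_n$, which is finite: $\limsup_n W_n<\infty$ and each $W_n$ is finite, since $W_n(\lambda)\le \prod_i (1+|\lambda|)$ is bounded on $I_{m_0}$. The plan is to reuse the case analysis of Theorem~\ref{th:bddregretbound} along this path, with $\alpha$ replaced by $1/M$. Because $V_n\uparrow\infty$, for large $n$ the terms $\ln\ln(\tfrac{14e\betau}{\betal}\sqrt{1+V_n})$ and $\ln\ln(\tfrac{14e}{\betal}(1+\sqrt{V_n}))$ equal $\ln\ln V_n+O(1)$; note that $\ln\sqrt{V_n}=\tfrac12\ln V_n$ only shifts $\ln\ln$ by a constant. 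The triple-log terms are $o(\ln\ln V_n)$, so the first two cases of \eqref{eq:pathwiseregretrobbins} give $R_n\le \ln\ln V_n(1+o(1))$.

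The remaining two cases need the bound $\ln W_n\le \ln M$.

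In the fourth case, $R_n\le \tfrac12\ln W_n^*+c$ combines with $\ln W_n^*=R_n+\ln W_n\le R_n+\ln M$ to give $R_n\le \ln M+2c=O(1)$.

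In the third case, only $\ln\frac{20|S_n|}{3\sqrt{4|S_n|/3+2V_n}}$ is not already $\ln\ln V_n(1+o(1))$. I would bound it as in Corollary~\ref{cor:bounded_meddrift_loglog_Kalpha}, using a quadratic lower bound on $\ln W_n^*$. For $|\lambda|\le \betal/2$ one has $\ln(1-\lambda y)\ge -\lambda y-\lambda^2y^2$. Taking $\lambda=-\mathrm{sgn}(S_n)\,|S_n|/(2V_n)$, which is admissible since $|S_n|\le\betal V_n/5$, yields $\ln W_n^*\gtrsim S_n^2/V_n$. Writing $x:=|S_n|/\sqrt{V_n}$, this gives $\ln W_n^*\ge c\,x^2$, while the extra term is $\ln x+O(1)$. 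Then $R_n\le \ln\ln V_n(1+o(1))+\ln x+O(1)$ and $R_n\ge c x^2-\ln M$. Since $\ln x\le \tfrac12\ln(R_n+\ln M)+O(1)$ and $\tfrac12\ln(R_n+\ln M)=o(R_n)$ when $R_n$ is large, we get $R_n\le \ln\ln V_n(1+o(1))$. This absorption, which keeps the leading constant equal to $1$, is the step I expect to be most delicate, and I would rely on the Corollary's structure there.

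For the consequences, use $\ln W_n^*\le R_n+\ln M\le \ln\ln V_n(1+o(1))$. Applied with $x$ as above, this shows $x^2$ is at most of order $\ln\ln V_n$, so $|S_n|/V_n=x/\sqrt{V_n}\to0$. For the sharp LIL constant, once $|S_n|/V_n\to 0$ take $\lambda=-S_n/V_n\cdot(1-\epsilon)$-scaled, i.e., optimize with $\ln(1-\lambda y)\ge -\lambda y-\tfrac{\lambda^2y^2}{2}(1+O(|\lambda|))$ at $\lambda\to0$. This gives $\ln W_n^*\ge \frac{S_n^2}{2V_n}(1-o(1))$, hence $\frac{S_n^2}{2V_n\ln\ln V_n}\le 1+o(1)$ and therefore $\limsup_n |S_n|/\sqrt{2V_n\ln\ln V_n}\le1$. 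Finally, if $(W_n)$ is a nonnegative supermartingale, it converges almost surely to a finite limit, so $P[\limsup_n W_n<\infty]=1$. Intersecting with the almost sure event $\{V_n\to\infty\}$ shows $\widetilde{\calE_0}$ has measure one.
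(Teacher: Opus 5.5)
Your bound on $R_n$ itself goes through. On the fourth branch, the self-bounding step $R_n\le \tfrac12(R_n+\ln M)+c$ gives $R_n=O(1)$. On the third branch, absorbing $\ln x\le \tfrac12\ln(R_n+\ln M)+O(1)=o(R_n)$ keeps the leading constant equal to $1$. Do not defer to Corollary~\ref{cor:bounded_meddrift_loglog_Kalpha} there: its absorption via $\ln y\le y^2/4$ only yields $\tfrac43\ln\ln V_n$. Your route also orders things differently from the paper, which is fine. You get the regret first and then deduce the SLLN and LIL from $\ln W_n^*\le R_n+\ln M$. The paper instead lower-bounds $\ln W_n$ branch-wise and rules out the last branch. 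It then derives $|S_n|/V_n\to0$ and the LIL, and only afterwards uses the LIL to show the $\ln\frac{20|S_n|}{3\sqrt{\cdot}}$ term is $o(\ln\ln V_n)$.

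The gap is in the consequences. Your lower bound $\ln W_n^*\ge x^2/4$ comes from $\lambda=-\mathrm{sgn}(S_n)|S_n|/(2V_n)$, which you justified only when $|S_n|\le \betal V_n/5$. On the fourth branch you have only $R_n=O(1)$. Nothing in your argument prevents that branch from recurring infinitely often, and at those times $|S_n|/V_n>\betal/5$. So ``$|S_n|/V_n\to0$'' does not follow. Your LIL step, which starts ``once $|S_n|/V_n\to0$'', inherits the gap; infinitely many fourth-branch times would in fact violate the LIL outright.

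The missing step is exactly the one the paper makes explicit. On the fourth branch, Lemma~\ref{lem:lbwealthstar} gives $\ln W_n^*\ge \frac{S_n^2}{\frac43|S_n|+2V_n}\ge \frac{\betal^2}{25(\frac{4\betal}{15}+2)}V_n$, since $t\mapsto t^2/(\frac43 t+2)$ is increasing. This contradicts $\ln W_n^*\le R_n+\ln M=O(1)$ once $V_n\to\infty$, so the fourth branch occurs only finitely often.

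More simply, Lemma~\ref{lem:lbwealthstar} holds for every $n$, so use it in place of your case-restricted bound. Combined with your regret bound it gives $\frac{(|S_n|/V_n)^2}{\frac43|S_n|/V_n+2}\le \frac{\ln\ln V_n(1+o(1))+\ln M}{V_n}\to0$, hence $|S_n|/V_n\to0$ directly. After that, your second-order expansion at $\lambda=-S_n/V_n$ gives the sharp constant as you describe.
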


As mentioned earlier, the condition above that $(W_n)_{n\ge 1}$ be an NSM is easily met (see Appendix~\ref{app:supermartingale}). As in Theorem~\ref{th:bddregretbound}, Theorem~\ref{th:lilasregretV} demonstrates a deterministic statement of the form: either $R_n = O(\ln\ln V_n)$ (as $V_n \uparrow \infty$), or the wealth process $W_n$ explodes. In case of stochastic data, it provides an explicit supermartingale witnessing the violation of the SLLN and LIL: if either of these is violated on some path with $V_n \uparrow \infty$, then $\limsup_n W_n = \infty$ on that path. See~\Cref{app:proof_lilasregret} for a proof.

\begin{remark}
    Recall that $\calE_\alpha$ denotes the set of paths on which the wealth process is uniformly bounded, while $\calE_0$ allows for eventual boundedness without a uniform bound. Finally, $\widetilde{\calE_0}$ further restricts to the paths with diverging internal variance $V_n$. 
\end{remark}

\section{Trade-offs Between Regret and Growth Rate and their Resolution}\label{sec:tradeoffs}
We are now ready to present two trade-offs apparent from the performance of uniform and Robbins' mixture processes: (i) achieving $\ln n$ regret on \emph{all paths} versus $\ln \ln n$ regret on a \emph{measure one set} of paths and linear on the complement set, and (ii) achieving the optimal wealth growth rate in the stochastic setting with $\ln n$ regret strategy versus a suboptimal growth rate with the $\ln \ln n$ strategy. We elucidate how these tradeoffs are subtly different from the sub-Gaussian setting studied in~\cite{agrawal2025regret}. Then, we show that it is actually very easy to achieve a best-of-both-worlds performance, resolving the apparent trade-offs in this setting as well as the sub-Gaussian one. We begin with a discussion on the growth rates of the wealth processes studied in this work.

\subsection{Growth rates for uniform and the modified Robbins' mixture wealths}
When the data are drawn iid from a fixed distribution $Q$, the (asymptotic) growth rate of process $W_n$ against $Q$ (defined as a $Q$-a.s. constant lower limit of $\liminf_n \tfrac{1}{n}\ln W_n$)  quantifies the rate at which the wealth grows exponentially under $Q$. In parallel, the $Q$-a.s. constant lower limit of $\liminf_n \tfrac{1}{V_n} \ln W_n$ is referred to as the self-normalized growth-rate of $W_n$ against the alternative $Q$. 

In this section, we compare the wealth processes defined in~\eqref{eq:mixV} and~\eqref{eq:mixwealthbdd} at the level of their growth rates, or their self-normalized versions. We note that one can similarly derive growth rates for \citet{orabona2023tight}'s mixture wealth, where the aforementioned trade-offs are clearly visible (Remark~\ref{rem:linearOJregret}). We defer the corresponding regret analysis and discussion to Appendix~\ref{app:OJRegretFull}.

Unlike the rest of this paper, in this section we assume that the data are drawn iid from a fixed distribution $Q$ with support in $[0,1]$ and mean $m$, for some $m\in [0,1]$ and $m\ne m_0$. Notice that for such stochastic $Q$, $\lim_n V_n/n = \operatorname{Var}(Q) + (m-m_0)^2$ almost surely (by SLLN), where $\operatorname{Var}(Q)$ is the variance of distribution $Q$, and we also have $\lim_n |S_n|/n = |m-m_0|$, almost surely (by SLLN).\\

\noindent{\bf Uniform Mixture.} For the uniform-mixture wealth $W_n$, using inequality~\eqref{eq:regretbound} from Theorem~\ref{th:ville_gaussianmixture},
\begin{align*}
\liminf\limits_{n\rightarrow\infty} \frac{\ln W_n}{V_n} 
&\ge \liminf\limits_{n\rightarrow \infty} \lrp{\frac{n}{V_n} \max\limits_{\lambda\in \left[-\frac{1}{m_0}, \frac{1}{1-m_0}\right]} ~\frac{1}{n}\sum\limits_{i=1}^n \ln(1-\lambda(X_i - m_0)) - \frac{\ln(1+n)+1}{V_n}}\\
&= \frac{\KLinf(Q,m_0)}{\operatorname{Var}(Q) + (m-m_0)^2},\numberthis\label{eq:loggrowthrate}
\end{align*}
where the last equality uses that $\lim_n \KLinf(\hat{Q}_n, m_0) = \KLinf(Q,m_0)$  (continuity of $\KLinf(\cdot, m_0)$ in weak topology). Following the same steps, we also get $\liminf\nolimits_{n\to \infty} ({\ln W_n})/{n} \ge \KLinf(Q, m_0)$. However, as we saw in Theorem~\ref{th:ville_gaussianmixture}, this wealth process has a regret of $O(\ln n)$ on every path, and does not achieve the smaller $O(\ln\ln n)$ (or $O(\ln\ln V_n)$) regret on any path.\\

\noindent{\bf Modified Robbins' Mixture.} Consider the mixture wealth $W_n$ using prior in~\eqref{eq:mixwealthbdd}. From Theorem~\ref{th:bddregretbound}, 
\[ \limsup\limits_{n\rightarrow \infty}~ \frac{R_n}{V_n} \le \begin{cases}
\limsup\limits_{n\rightarrow \infty}~ \frac{1}{2}\frac{\ln W^*_n}{V_n}, & \text{ if } \sqrt{2} \le \limsup_n \frac{|S_n|}{\sqrt{V_n}}~\&~\frac{\betal}{5}  < \limsup_n \tfrac{|S_n|}{V_n} \numberthis\label{eq:regretupperbound} \\
    0, & \text{ otherwise}.
\end{cases} \]

%

Consider $Q$ with mean $m$ far from $m_0$ such that the first condition in the above inequality eventually holds almost surely, that is, 
\[ \min\lrset{\tfrac{m_0}{5}, \tfrac{1-m_0}{5}} < \frac{|m-m_0|}{\operatorname{Var}(Q) + (m-m_0)^2}.\numberthis\label{eq:cond}\] 
Observe that the first part of this condition holds vacuously for iid data in this setting. Moreover, it is easy to verify that the distributions satisfying the above condition exist; for example, choosing $Q=\delta_1$ (or sufficiently concentrated distributions near $1$) satisfies the condition for all $m_0\in(0,1)$. 

Using~\eqref{eq:regretupperbound}, consider the following inequalities on the growth rate for $W_n$, when the data is generated iid from $Q$ satisfying~\eqref{eq:cond}: 
\begin{align*}
    \liminf\limits_{n\to \infty}\frac{\ln W_n}{V_n} 
    &\ge \liminf\limits_{n\to \infty} \frac{\ln W^*_n}{V_n} -\limsup\limits_{n\to \infty} \frac{\ln W^*_n}{2V_n} \tag{Using~\eqref{eq:regretupperbound}}\\
    &= \liminf\limits_{n\to \infty} \frac{n\KLinf(\hat{Q}_n, m_0)}{V_n} -\limsup\limits_{n\to \infty} \frac{n\KLinf(\hat{Q}_n,m_0)}{2V_n}.
\end{align*}
Here, the last equality uses $\ln W^*_n = n \KLinf(\hat{Q}_n, m_0)$, the dual formulation introduced around~\eqref{eq:klinf}. 

Since $\lim_n \KLinf(\hat{Q}_n, m_0) = \KLinf(Q, m_0)$ and $\lim_n {V_n}/{n} = \operatorname{Var}(Q) + (m-m_0)^2$, the lower bound on the growth rate is at least
\(\tfrac{0.5 \KLinf(Q,m_0)}{\operatorname{Var}(Q) + (m-m_0)^2}, \)
which is half of the self-normalized growth rate of the uniform mixture wealth process (see, Eq.~\eqref{eq:loggrowthrate}). Similarly, we also get $\liminf\nolimits_{n\to\infty} ({\ln W_n})/{n} \ge 0.5 \KLinf(Q,m_0)$, which is half that for the uniform-mixture wealth.


We note that the bound in~\eqref{eq:regretupperbound} gives a linear \emph{upper bound} on the regret in the first case. Thus, a direct use of Theorem~\ref{th:bddregretbound} gives only a loose \emph{lower bound} on the growth rate. In fact, this looseness is a proof artifact. Remark~\ref{rem:lognRobbinsregret} below shows that the regret for the modified Robbins' mixture is never linear; it is  at most $O(\ln n)$, leading to an optimal growth rate that always matches that for the uniform-mixture, while improving on the path-wise regret in $\calE_\alpha$. 

\begin{remark}\label{rem:lognRobbinsregret}
    The worst-case regret of $W_n$ defined using~\eqref{eq:mixwealthbdd} is $O(\ln n)$. To see this, recall that this prior is radially decreasing. Hence, for all $\lambda\in I_{m_0}$,t $\pi(\lambda) \ge \min\{\pi(-1/m_0), \pi(1/(1-m_0))\} =: c$, where $c\in(0,1)$. 
    Using this, $W_n \ge c \int_{I_{m_0}} W_n(\lambda)  d\lambda$. Thus, $W_n$ can be lower bounded by a scaled uniform-mixture wealth from Section~\ref{sec:logregret}, which together with Theorem~\ref{th:ville_gaussianmixture}, implies $R_n = \ln W^*_n - \ln W_n \le \ln(1+n) + 1 - \ln(cm_0(1-m_0))$. Combining this bound with that from Theorem~\ref{th:bddregretbound},  we get that $R_n$ is bounded by minimum of the bound in Theorem~\ref{th:bddregretbound} and $\ln(1+n) + 1 - \ln(c m_0(1-m_0))$. Thus, the bound in the last branch in~\eqref{eq:pathwiseregretrobbins} can, in particular, be tightened. 
\end{remark}

In Proposition~\ref{prop:lilregretoje_alpha} and Remark~\ref{rem:linearOJregret} (Appendix~\ref{app:OJRegretFull}), we show that \cite{orabona2023tight}'s wealth process improves upon the regret of uniform mixture on the set $\calE_\alpha$ (it is $O(\ln\ln V_n)$). However, it suffers a linear regret on the complement, leading to a smaller growth rate for far-off alternatives. Thus, it serves as an example where both of the aforementioned trade-offs appear to arise. 

\subsection{Apparent regret and growth rate trade-offs}
The preceding discussion suggests two apparent trade-offs involving the wealth processes studied here and in Appendix~\ref{app:OJ}, as well as their counterparts in the sub-Gaussian setting of \citet{agrawal2025regret}. We discuss these next.

\begin{enumerate}
\item {\bf Regret. } First, in the bounded data setting considered here, the uniform-mixture wealth (Section~\ref{sec:logregret}) incurs $O(\ln n)$ regret on every path. In contrast, \cite{orabona2023tight}'s restricted Robbins' mixture wealth (Section~\ref{app:OJRegretFull}) achieves a smaller $O(\ln \ln n)$ regret on a certain set of paths, but incurs linear regret on the complement (Proposition~\ref{prop:lilregretoje_alpha}; Remark~\ref{rem:linearOJregret}).  Second, in the sub-Gaussian setting, \citet{agrawal2025regret} show that Robbins' mixture wealth exhibits a similar behavior; it achieves $O(\ln\ln n)$ regret on a set of paths, and linear on the complement. However, in that setting, even the Gaussian mixture wealth incurs a \emph{conditional} regret of $O(\ln n)$ and may also suffer worst case linear regret.

Thus, while in the bounded setting there appears to be a trade-off between a uniform worst-case guarantee of $O(\ln n)$ and improved regret of $O(\ln\ln n)$ on typical paths (at the cost of linear regret on the complement), this trade-off does not arise in the sub-Gaussian setting where both kinds of regrets are only on typical paths.

 \item {\bf Regret vs.\ growth rate.} First, in the bounded data setting, the uniform mixture wealth suffers $O(\ln n)$ regret on every path, but achieves an asymptotically optimal growth rate (given around~\eqref{eq:loggrowthrate}) under the assumption that the data are generated iid from a distribution with support in $[0,1]$. In contrast, \citet{orabona2023tight}'s mixture wealth achieves a smaller $O(\ln \ln n)$ regret on certain paths and linear on the complement, at the cost of a suboptimal growth rate for distributions satisfying the condition in~\eqref{eq:cond}. 
 
 Second, in the sub-Gaussian setting, \citet{agrawal2025regret} point out a similar phenomenon: while Robbins' mixture wealth achieves improved regret on a set of paths, its growth rate deteriorates on the complement, where it incurs linear regret. The Gaussian mixture wealth, on the other hand, achieves an asymptotically optimal growth rate, but guarantees only a higher conditional regret of $O(\ln n)$.

Thus, in both settings, there appears to be a trade-off between achieving an optimal growth rate and controlling path-wise regret. 
\end{enumerate}

We resolve both these apparent trade-off in both, bounded and sub-Gaussian settings. In Section~\ref{sec:BOB}, we show that a simple convex combination of two mixture strategies achieves a best-of-both-worlds guarantee, effectively eliminating the two apparent trade-offs. We note that the proposed modified Robbins' mixture wealth from~\eqref{eq:mixwealthbdd} directly achieves this best-of-both worlds guarantee (Remark~\ref{rem:lognRobbinsregret} and Theorem~\ref{th:bddregretbound}) for bounded data.

\subsection{Best-of-both-worlds mixture}\label{sec:BOB}
In the previous section, we discussed two apparent trade-offs between performances of different mixture strategies: one attains smaller path-wise regret, while the other achieves a larger asymptotic growth rate for stochastic data. While in the bounded data setting, the modified Robbins' mixture achieves the best-of-both worlds performance, the next result establishes a broadly applicable principle: a simple aggregate strategy achieves the better path-wise regret (up to constants) and a growth rate at least as large as the better component. We refer the reader to~\Cref{app:proof_prop:BOB} for its  proof.

\begin{proposition}\label{prop:BOB} 
Let $W_n^{(1)}$ and $W_n^{(2)}$ denote  the wealth processes from~\eqref{eq:mixV} and~\eqref{eq:mixwealthbdd_OJ} from Section~\ref{sec:logregret} and Appendix~\ref{app:OJ}, respectively, initialized with a unit wealth each. Fix $s_0 \in (0,1)$, and split the initial wealth $W_0 = 1$ into $s_0$ and $(1-s_0)$. Define the aggregate wealth $W_n := s_0 W_n^{(1)} + (1-s_0)W_n^{(2)}$. For $\alpha \in (0,1)$, define the event $\mathcal E_\alpha := \{ \sup_{n\ge 1}  W_n \le  \tfrac{1}{\alpha} \}$. Then the following hold. 

\begin{enumerate}
    \item Let $R_n:= \ln W_n^* - \ln W_n$ denote the path-wise regret of the aggregate strategy with respect to the best fixed $\lambda \in I_{m_0}$. Then for all $n \ge 1$,
    \[
    R_n \le \min\{ R_n^{(1)}, R_n^{(2)} \} - \ln\left({\min\{s_0,1-s_0\}}\right),\numberthis\label{eq:combregretbound}
    \]
    where $R_n^{(i)} := \ln W_n^* - \ln W_n^{(i)}$ is the regret of the $i^{\text{th}}$ strategy, for $i \in \{1,2\}$, starting from a unit wealth. Further, there exist constants $K_\alpha$, $C_1>0$, and $C_2>0$ such that on $\mathcal E_\alpha$, 
    \[ \forall n \ge 1, \qquad R_n \le K_{\alpha} + C_1 \ln\ln(1+V_n) + C_2 \ln\ln\ln(1+V_n).\numberthis\label{eq:lil_agg}\]

    \item When the data is generated iid from any distribution $Q$ with support in $[0,1]$ and mean $m\ne m_0$, let $G := \liminf_{n} \tfrac{1}{n} \ln W_n$ denote the asymptotic growth rate and $G_{sn}:=\liminf_n \tfrac{1}{V_n} \ln W_n$ denote the self-normalized growth rate of $W_n$, with $V_n := \sum_i (X_i - m_0)^2$. Further, let $G^{(i)}$ and $G^{(i)}_{sn}$ denote the corresponding quantities for $W^{(i)}_n$ for $i \in \{1, 2\}$. Then, 
    \[
    G \ge \max\{G^{(1)},~ G^{(2)}
    \} \quad \text{ and} \quad G_{sn} \ge \max\{G^{(1)}_{sn}, G^{(2)}_{sn}\},\quad \text{ almost surely}.\numberthis\label{eq:combgrowthrate}
    \]
\end{enumerate}
Consequently, the aggregate strategy achieves the best-of-both-worlds performance: it attains the smaller path-wise regret of the two, up to an additive constant, while achieving an asymptotic growth rate at least as large as the better of the two component strategies.
\end{proposition}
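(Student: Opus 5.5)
The whole argument rests on one elementary inequality. Since both components are nonnegative and $s_0,1-s_0\ge \min\{s_0,1-s_0\}=:c_0$, we have for every $n$
\[
W_n = s_0 W_n^{(1)} + (1-s_0)W_n^{(2)} \ge \max\{s_0 W_n^{(1)}, (1-s_0)W_n^{(2)}\} \ge c_0 \max\{W_n^{(1)}, W_n^{(2)}\}.
\]
Taking logarithms gives $\ln W_n \ge \ln c_0 + \max_i \ln W_n^{(i)}$. Subtracting from $\ln W_n^*$ then yields \eqref{eq:combregretbound} directly, path by path and for every $n\ge 1$, with no assumptions on the data.

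For \eqref{eq:lil_agg}, I would work on the aggregate Ville event $\mathcal E_\alpha$ and transfer boundedness to the second component. Since $(1-s_0)W_n^{(2)}\le W_n\le 1/\alpha$, we get $\sup_n W_n^{(2)}\le 1/((1-s_0)\alpha)$. So the path lies in the Ville event of $W^{(2)}$ at level $\alpha' := (1-s_0)\alpha$. I would then invoke the $\calE_\alpha$-conditional bound for the Orabona--Jun restricted Robbins mixture, i.e., Proposition~\ref{prop:lilregretoje_alpha} in Appendix~\ref{app:OJ}. That result gives $R_n^{(2)}\le K'_{\alpha'}+C_1\ln\ln(1+V_n)+C_2\ln\ln\ln(1+V_n)$ on that event, and it is a regret bound against the full class $I_{m_0}$.

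Combining this with $R_n\le R_n^{(2)}-\ln c_0$ gives \eqref{eq:lil_agg} with $K_\alpha := K'_{(1-s_0)\alpha}-\ln c_0$. This is the main point to check. I need Proposition~\ref{prop:lilregretoje_alpha} to control regret against all of $I_{m_0}$ on its Ville event, not just against $[-1,1]$. If it only controls the restricted comparator, the fallback is this: show that on the Ville event $|S_n|=O(\sqrt{V_n\ln\ln V_n})$, so $\lambda^*_n$ is small. Then, for large $V_n$, the restricted and unrestricted maxima coincide, while for small $V_n$ the discrepancy is absorbed into the constant.

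For part 2, under iid $Q$, the same inequality gives
\[
\tfrac1n\ln W_n\ge \tfrac1n\ln c_0+\tfrac1n\ln W_n^{(i)}
\]
for each $i$. Taking $\liminf$ and using $\tfrac1n\ln c_0\to0$ gives $G\ge G^{(i)}$ for both $i$, hence $G\ge \max\{G^{(1)},G^{(2)}\}$ almost surely. For the self-normalized version I divide by $V_n$ instead. Here I need $V_n\to\infty$ almost surely. This holds by the SLLN because $V_n/n\to \operatorname{Var}(Q)+(m-m_0)^2>0$ when $m\ne m_0$, so $\ln c_0/V_n\to0$ and $G_{sn}\ge\max_i G^{(i)}_{sn}$. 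Both component limits are almost-sure constants, and we intersect two probability-one events, so the conclusion holds almost surely.
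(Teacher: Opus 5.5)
Your proposal is correct and follows the paper's proof essentially step for step. The common steps are the pointwise bound $W_n\ge\max\{s_0W_n^{(1)},(1-s_0)W_n^{(2)}\}$, the inclusion of $\mathcal E_\alpha$ in the Ville event of $W^{(2)}$ at level $(1-s_0)\alpha$, and the $\liminf$ argument after dividing by $n$ or by $V_n\to\infty$.

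The point you flagged is not an issue, so the fallback is unnecessary. Proposition~\ref{prop:lilregretoje_alpha} already bounds $\ln W_n^*-\ln W_n^{\operatorname{OJ}}$ against all of $I_{m_0}$ (via Corollary~\ref{cor:WnstarWRTWnstarOJ}). It is in fact the apt citation, whereas the paper's proof refers to Theorem~\ref{th:bddregretbound}, which concerns the modified Robbins mixture rather than $W^{(2)}$.
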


\noindent{\bf Conclusions.} We studied regret guarantees for mixture wealth processes in sequential betting with bounded data, highlighting connections between worst-case regret, typical-path behavior, and asymptotic growth rates. Our results show that a suitably designed Robbins-style mixture can simultaneously achieve $O(\ln n)$ worst-case regret and $O(\ln\ln V_n)$ regret on typical paths, while also witnessing self-normalized LLN and LIL. More broadly, our work illustrates how aggregation and mixture design can provide a unifying perspective across online learning, sequential inference, and game-theoretic probability. 


\section*{Acknowledgements}
{SA acknowledges the generous support from the Pratiksha Trust, Bangalore, through the Young Investigator Award, ANRF's grant ANRF/ECRG/2025/000560/ENS, and the DST INSPIRE Faculty Grant IFA24-ENG-389. AR acknowledges funding from NSF grant 2310718 and a Sloan fellowship.}

\bibliography{BibTex}

\begin{thebibliography}{18}
\providecommand{\natexlab}[1]{#1}
\providecommand{\url}[1]{\texttt{#1}}
\expandafter\ifx\csname urlstyle\endcsname\relax
  \providecommand{\doi}[1]{doi: #1}\else
  \providecommand{\doi}{doi: \begingroup \urlstyle{rm}\Url}\fi

\bibitem[Agrawal(2023)]{thesis}
Shubhada Agrawal.
\newblock \emph{Bandits with Heavy Tails: Algorithms Analysis and Optimality}.
\newblock PhD thesis, Tata Institute of Fundamental Research, 2023.
\newblock URL \url{http://hdl.handle.net/10603/478863}.

\bibitem[Agrawal and Ramdas(2026)]{agrawal2025regret}
Shubhada Agrawal and Aaditya Ramdas.
\newblock Eventually {LIL} regret: Almost sure $\ln\ln {T}$ regret for a
  sub-{G}aussian mixture on unbounded data.
\newblock \emph{37th International Conference on Algorithmic Learning Theory},
  2026.

\bibitem[Agrawal et~al.(2020)Agrawal, Juneja, and Glynn]{agrawal2020optimal}
Shubhada Agrawal, Sandeep Juneja, and Peter Glynn.
\newblock Optimal $\delta$-correct best-arm selection for heavy-tailed
  distributions.
\newblock In \emph{Algorithmic Learning Theory}, pages 61--110. PMLR, 2020.

\bibitem[Agrawal et~al.(2021)Agrawal, Koolen, and Juneja]{agrawal2021optimal}
Shubhada Agrawal, Wouter~M Koolen, and Sandeep Juneja.
\newblock Optimal best-arm identification methods for tail-risk measures.
\newblock \emph{Advances in Neural Information Processing Systems},
  34:\penalty0 25578--25590, 2021.

\bibitem[Cover(1991)]{cover1991universal}
Thomas~M Cover.
\newblock Universal portfolios.
\newblock \emph{Mathematical Finance}, 1\penalty0 (1):\penalty0 1--29, 1991.

\bibitem[Cover and Ordentlich(2002)]{cover2002universal}
Thomas~M Cover and Erik Ordentlich.
\newblock Universal portfolios with side information.
\newblock \emph{IEEE Transactions on Information Theory}, 42\penalty0
  (2):\penalty0 348--363, 2002.

\bibitem[Darling and Robbins(1967)]{darling1967confidence}
Donald~A Darling and Herbert Robbins.
\newblock Confidence sequences for mean, variance, and median.
\newblock \emph{Proceedings of the National Academy of Sciences}, 58\penalty0
  (1):\penalty0 66--68, 1967.

\bibitem[Honda and Takemura(2010)]{honda2010asymptotically}
Junya Honda and Akimichi Takemura.
\newblock An asymptotically optimal bandit algorithm for bounded support
  models.
\newblock In \emph{Conference on Learning Theory}, pages 67--79, 2010.

\bibitem[Howard et~al.(2021)Howard, Ramdas, Mcauliffe, and
  Sekhon]{howard2021time}
Steven~R Howard, Aaditya Ramdas, Jon Mcauliffe, and Jasjeet Sekhon.
\newblock Time-uniform, nonparametric, nonasymptotic confidence sequences.
\newblock \emph{The Annals of Statistics}, 49\penalty0 (2):\penalty0
  1055--1080, 2021.

\bibitem[Kaufmann and Koolen(2021)]{kaufmann_mm_21}
Emilie Kaufmann and Wouter~M. Koolen.
\newblock Mixture martingales revisited with applications to sequential tests
  and confidence intervals.
\newblock \emph{Journal of Machine Learning Research}, 22\penalty0
  (246):\penalty0 1--44, 2021.

\bibitem[Koolen and Van~Erven(2015)]{koolen2015second}
Wouter~M Koolen and Tim Van~Erven.
\newblock Second-order quantile methods for experts and combinatorial games.
\newblock In \emph{Conference on Learning Theory}, pages 1155--1175. PMLR,
  2015.

\bibitem[Larsson et~al.(2026)Larsson, Ramdas, and Ruf]{larsson2025variables}
Martin Larsson, Aaditya Ramdas, and Johannes Ruf.
\newblock Testing hypotheses generated by constraints.
\newblock \emph{Mathematics of Operations Research (in print)}, 2026.

\bibitem[Orabona and Jun(2023)]{orabona2023tight}
Francesco Orabona and Kwang-Sung Jun.
\newblock Tight concentrations and confidence sequences from the regret of
  universal portfolio.
\newblock \emph{IEEE Transactions on Information Theory}, 2023.

\bibitem[Ramdas and Wang(2025)]{ramdas2024hypothesis}
Aaditya Ramdas and Ruodu Wang.
\newblock \emph{Hypothesis testing with e-values}.
\newblock Foundations and Trends in Statistics, 2025.

\bibitem[Robbins(1970)]{robbins1970statistical}
Herbert Robbins.
\newblock Statistical methods related to the law of the iterated logarithm.
\newblock \emph{The Annals of Mathematical Statistics}, 41\penalty0
  (5):\penalty0 1397--1409, 1970.

\bibitem[Robbins and Siegmund(1968)]{robbins1968iterated}
Herbert Robbins and David Siegmund.
\newblock Iterated logarithm inequalities and related statistical procedures.
\newblock \emph{Mathematics of the Decision Sciences}, 2:\penalty0 267--279,
  1968.

\bibitem[Shafer and Vovk(2005)]{shafer2005probability}
Glenn Shafer and Vladimir Vovk.
\newblock \emph{Probability and finance: it's only a game!}, volume 491.
\newblock John Wiley \& Sons, 2005.

\bibitem[Waudby-Smith and Ramdas(2024)]{waudby2024estimating}
Ian Waudby-Smith and Aaditya Ramdas.
\newblock Estimating means of bounded random variables by betting.
\newblock \emph{Journal of the Royal Statistical Society Series B: Statistical
  Methodology}, 86\penalty0 (1):\penalty0 1--27, 2024.

\end{thebibliography}
\appendix
\section{Proofs from Section~\ref{sec:logregret}}\label{app:proof_warmuplogTregret}
\begin{proof}[Proof of~\Cref{th:ville_gaussianmixture}]
   The unconditional path-wise regret bound in~\eqref{eq:regretbound} follows immediately from \citet[Lemma F.1]{agrawal2021optimal}. Next, using the lower bound on $\ln W^*_n$ from Lemma~\ref{lem:lbwealthstar}, we have
   \begin{align*}
       \frac{S^2_n}{\frac{4}{3}|S_n| + 2V_n} - \ln W_n \le R_n \le \ln(1+n) + 1.
   \end{align*}
    Dividing by $n$ and taking limits, we obtain
    \[
    \limsup_{n\to\infty}
    \frac{\frac{|S_n|}{V_n}}{\frac{4}{3}\frac{|S_n|}{V_n}+2}\,
    \frac{|S_n|}{n}
    =0,
    \qquad \text{path-wise on }\mathcal E_0.
    \]
    Using the above, we will now argue via contradiction that $|S_n|/n \to 0$ path-wise on $\mathcal E_0$. Let 
    \[ 
    a_n:=\frac{|S_n|}{V_n},\qquad b_n:=\frac{|S_n|}{n},
    \qquad
    f(x):=\frac{x}{\tfrac43 x+2}.
    \]
    Then $f$ is continuous, nonnegative, increasing on $[0,\infty)$, and
    $f(x)=0$ iff $x=0$. Suppose $b_n \nrightarrow 0$ on some path in $\mathcal E_0$. Then there exist $\delta>0$ and a
    subsequence $(n_k)_{k\ge 1}$ such that, on that path, $b_{n_k}\ge \delta$ for all $k$. Since
    \[
    \limsup_n f(a_n)b_n = 0,
    \]
    it follows that $f(a_{n_k})b_{n_k}\to 0$, and hence, using $b_{n_k}\ge \delta$,
    we get $f(a_{n_k})\to 0$. Therefore $a_{n_k}\to 0$. But then
    \[
    \frac{V_{n_k}}{n_k}
    =
    \frac{|S_{n_k}|/n_k}{|S_{n_k}|/V_{n_k}}
    =
    \frac{b_{n_k}}{a_{n_k}}
    \ge \frac{\delta}{a_{n_k}}
    \to \infty,
    \]
    which contradicts the fact that $V_n/n\in[0,1]$ for all $n$. Hence
    $|S_n|/n\to 0$ path-wise on $\mathcal E_0$.
\end{proof}

\section{Proofs from Section~\ref{sec:regretbound}}

In this appendix, we will frequently use the following notation:
\[ \lambda^*_n = \argmax\limits_{\lambda\in \lrs{-\frac{1}{m_0}, \frac{1}{1-m_0}}}~ \sum\limits_{i=1}^{n} \ln(1-\lambda(X_i - m_0)), \]
    \[ \alpha_n = \begin{cases}
        1-m_0, &\text{ if } \lambda^*_n > 0\\
        m_0, & \text{ if } \lambda^*_n < 0
    \end{cases} \quad \text{and} \quad \beta_n = \begin{cases}
        m_0, &\text{ if } \lambda^*_n > 0\\
        1-m_0, & \text{ if } \lambda^*_n < 0.
    \end{cases} \]
Further, $\betau = \max\{m_0, 1-m_0\}$, and $\betal = \min\{m_0, 1-m_0\}$. 

\begin{lemma}\label{lem:regbound1} 
    The regret of the mixture strategy in~\eqref{eq:mixwealthbdd} is bounded as follows:
    \[R_n \le \frac12 \ln W^*_n - \ln\lrp{\pi(\lambda^*_n) |\lambda^*_n|}. \]
\end{lemma}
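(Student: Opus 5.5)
The plan is to exploit the concavity of the log-wealth in $\lambda$ together with the radial monotonicity of the prior $\pi$ in~\eqref{eq:mixwealthbdd}, and to lower bound the mixture integral only over the segment joining $0$ and $\lambda^*_n$.

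\textbf{Step 1 (concavity).} Let $f(\lambda) := \ln W_n(\lambda) = \sum_{i=1}^n \ln(1-\lambda(X_i-m_0))$ on $I_{m_0}$. It is concave as a sum of logarithms of affine functions, with $f(0)=0$. Write $L := f(\lambda^*_n) = \ln W^*_n$. Since $\lambda=0$ is feasible, $L\ge 0$. For $t\in[0,1]$ the point $t\lambda^*_n$ lies in $I_{m_0}$, because $I_{m_0}$ is an interval containing $0$. Concavity then gives
\[
f(t\lambda^*_n) \ge t f(\lambda^*_n) + (1-t) f(0) = tL .
\]

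\textbf{Step 2 (restrict the integral and use monotonicity of $\pi$).} Since the integrand is nonnegative, $W_n \ge \int_{\text{seg}(0,\lambda^*_n)} W_n(\lambda)\pi(\lambda)\,d\lambda$, where $\text{seg}(0,\lambda^*_n)$ is the segment between $0$ and $\lambda^*_n$.

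The prior is decreasing on $(0,\tfrac1{1-m_0}]$ and increasing on $[-\tfrac1{m_0},0)$. Hence $\pi(t\lambda^*_n)\ge \pi(\lambda^*_n)$ for all $t\in(0,1]$.

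Substituting $\lambda = t\lambda^*_n$, so that $d\lambda = |\lambda^*_n|\,dt$, gives
\[
W_n \ge \pi(\lambda^*_n)\,|\lambda^*_n| \int_0^1 e^{tL}\,dt = \pi(\lambda^*_n)\,|\lambda^*_n|\,\frac{e^{L}-1}{L},
\]
where the last factor is interpreted as $1$ when $L=0$.

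\textbf{Step 3 (elementary inequality).} It remains to show $\frac{e^L-1}{L}\ge e^{L/2}$ for $L\ge 0$. This is equivalent to $\frac{\sinh(L/2)}{L/2}\ge 1$, which holds because $\sinh x\ge x$ for $x\ge 0$. Therefore $\ln W_n \ge \tfrac12 L + \ln(\pi(\lambda^*_n)|\lambda^*_n|)$. Rearranging yields $R_n = L - \ln W_n \le \tfrac12\ln W^*_n - \ln(\pi(\lambda^*_n)|\lambda^*_n|)$.

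\textbf{Degenerate case.} If $\lambda^*_n = 0$, the right-hand side is $+\infty$ under the convention $-\ln 0=\infty$, so the bound is trivial. Otherwise $\pi(\lambda^*_n)$ is finite and positive, since $\lambda^*_n\ne0$ lies in the support. This includes the boundary points of $I_{m_0}$, where the logarithms in $\pi$ remain well defined thanks to the constant $6.6e$.

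\textbf{Main obstacle.} The only delicate point is avoiding a lossy constant. A naive restriction to $[\lambda^*_n/2,\lambda^*_n]$ would cost an extra $\ln 2$. Integrating over the full segment $[0,1]$ and using $\sinh x\ge x$ removes this loss exactly, so this is the route I would take.
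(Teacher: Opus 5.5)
Your proposal is correct and follows essentially the same route as the paper. The paper also restricts the integral to the segment between $0$ and $\lambda^*_n$, lower bounds the prior by $\pi(\lambda^*_n)$ via radial monotonicity, and uses log-concavity to get $W_n \ge \pi(\lambda^*_n)|\lambda^*_n|\,(W^*_n-1)/\ln W^*_n$. It then finishes with $(x-1)/\ln x \ge \sqrt{x}$, which is your $\sinh$ inequality with $x=e^{L}$; your explicit treatment of the degenerate cases $\lambda^*_n=0$ and $L=0$ is a small addition the paper leaves implicit.
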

\begin{proof}
    Consider the following inequalities for any $\frac{1}{1-m_0} \ge \lambda_2 > \lambda_1 \ge 0$. From radial monotonicity of $\pi$ on the positive domain, we have 
    \begin{align*}
        W_n 
        &\ge \int\limits_{\lambda_1}^{\lambda_2} W_n(\lambda) \pi(\lambda) d\lambda \tag{$W_n(\lambda) \ge 0$} \\
        &\ge \pi(\lambda_2) \int\limits_{\lambda_1}^{\lambda_2} W_n(\lambda) d\lambda\\
        &= (\lambda_2 - \lambda_1) \pi(\lambda_2) \int\limits_{0}^1 W_n( \lambda_1 (1-a) + \lambda_2 a  ) da\\
        &\ge (\lambda_2 - \lambda_1) \pi(\lambda_2) \int\limits_{0}^1 \lrp{W_n(\lambda_1)}^{1-a} \lrp{W_n(\lambda_2)}^{a} da \tag{log-concavity of $W_n$}\\
        &= \pi(\lambda_2) (\lambda_2 - \lambda_1) \frac{W_n(\lambda_2) - W_n(\lambda_1)}{\ln \frac{W_n(\lambda_2)}{W_n(\lambda_1)}}. \numberthis\label{eq:ineq1pos}
    \end{align*}
    Similarly, for $-\frac{1}{m_0} \le \lambda_2 < \lambda_1 < 0$, we also have
    \[ W_n \ge \int\limits_{\lambda_2}^{\lambda_1} W_n(\lambda) \pi(\lambda) d\lambda \ge \pi(\lambda_2) (\lambda_1 - \lambda_2)\frac{W_n(\lambda_2) - W_n(\lambda_1)}{\ln \frac{W_n(\lambda_2)}{W_n(\lambda_1)}} \numberthis\label{eq:ineq1neg}  \]
    Using the inequality~\eqref{eq:ineq1pos} with $\lambda_2 = \lambda^*_n$ and $\lambda_1 = 0$ when $\lambda^*_n > 0$ and otherwise, using~\eqref{eq:ineq1neg} with $\lambda_2 = \lambda^*_n$ and $\lambda_1 = 0$, we get
    \begin{align*}
        W_n &\ge  \pi(\lambda^*_n)|\lambda^*_n| \frac{W^*_n - W_n(0)}{\ln \frac{W^*_n}{W_n(0)}} \tag{From~\eqref{eq:ineq1pos} and~\eqref{eq:ineq1neg}}\\
        &= \pi(\lambda^*_n)|\lambda^*_n| \frac{W^*_n - 1}{\ln{W^*_n}} \tag{$W_n(0)=1$}\\
        &\ge \pi(\lambda^*_n)|\lambda^*_n| \sqrt{W^*_n}.  \tag{Since $\frac{x-1}{\ln{x}} \ge \sqrt{x}$}
    \end{align*}
    Taking the log on both sides and rearranging, we get the desired inequality. 
\end{proof}

\begin{lemma}\label{lem:regbound2.5}
    For $-\frac{1}{m_0} < \lambda^*_n < \frac{1}{1-m_0}$ and any $0 < \rho_n$, the regret is bounded as below:
    \[ R_n = \ln W^*_n  - \ln W_n \le  \frac{\rho^2_n V_n}{2 \lrp{1-\alpha_n|\lambda^*_n|}^2}  - \ln(\min\lrset{\rho_n, |\lambda^*_n|} \pi(\lambda^*_n)).\]
\end{lemma}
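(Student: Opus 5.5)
We restrict the mixture integral to a window of bets adjacent to $\lambda^*_n$, on the side facing $0$, and lower bound the wealth there by a second-order Taylor expansion of the log-wealth around its maximizer. Assume $\lambda^*_n>0$; the case $\lambda^*_n<0$ is symmetric with $\alpha_n=m_0$, and $\lambda^*_n=0$ is trivial since then $W^*_n=1\le W_n$ fails only by constants, which we treat separately. Set $\delta_n:=\min\{\rho_n,\lambda^*_n\}$ and integrate over $[\lambda^*_n-\delta_n,\lambda^*_n]\subset[0,\lambda^*_n]$. Since $\pi$ is decreasing on $(0,\tfrac1{1-m_0}]$, we have $\pi(\lambda)\ge\pi(\lambda^*_n)$ on this window. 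Hence
\[
W_n\ge \pi(\lambda^*_n)\,\delta_n\,\min_{\lambda\in[\lambda^*_n-\delta_n,\lambda^*_n]} W_n(\lambda),
\]
and it remains to show $\ln W_n(\lambda)\ge \ln W^*_n-\frac{\rho_n^2V_n}{2(1-\alpha_n\lambda^*_n)^2}$ on the window.

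Let $f(\lambda):=\sum_i\ln(1-\lambda(X_i-m_0))$, which is concave and smooth on the interior. Because $\lambda^*_n$ is an interior maximizer, $f'(\lambda^*_n)=0$. Taylor's theorem with Lagrange remainder then gives
\[
f(\lambda)=f(\lambda^*_n)-\tfrac12(\lambda-\lambda^*_n)^2\sum_i\frac{(X_i-m_0)^2}{(1-\xi(X_i-m_0))^2}
\]
for some $\xi$ between $\lambda$ and $\lambda^*_n$, so $0\le\xi\le\lambda^*_n$.

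The key step is to bound each denominator from below uniformly. For $\xi\in[0,\lambda^*_n]$ and $X_i-m_0\in[-m_0,1-m_0]$, the quantity $1-\xi(X_i-m_0)$ is smallest when $X_i-m_0=1-m_0$. There it equals $1-\xi(1-m_0)\ge 1-(1-m_0)\lambda^*_n=1-\alpha_n|\lambda^*_n|$, which is positive because $\lambda^*_n<\tfrac1{1-m_0}$. When $X_i-m_0<0$ the denominator is at least $1$, which is also at least $1-\alpha_n|\lambda^*_n|$. Combining this with $|\lambda-\lambda^*_n|\le\delta_n\le\rho_n$ gives
\[
f(\lambda)\ge \ln W^*_n-\frac{\rho_n^2V_n}{2(1-\alpha_n|\lambda^*_n|)^2}.
\]

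Substituting into the window bound and taking logarithms yields $\ln W_n\ge \ln(\delta_n\pi(\lambda^*_n))+\ln W^*_n-\frac{\rho_n^2V_n}{2(1-\alpha_n|\lambda^*_n|)^2}$, which rearranges to the claimed bound. For $\lambda^*_n<0$, take the window $[\lambda^*_n,\lambda^*_n+\delta_n]$ and use that $\pi$ is increasing on $[-\tfrac1{m_0},0)$. The worst denominator then occurs at $X_i=0$, giving $1-m_0|\lambda^*_n|=1-\alpha_n|\lambda^*_n|$.

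The main obstacle is the direction of the window. Choosing it toward $0$ is what makes both the prior monotonicity and the denominator bound hold with $\lambda^*_n$ itself as the extreme point. The only other delicate point is the degenerate case $\lambda^*_n=0$, where the right-hand side is infinite by the convention $\pi(0)\cdot 0$, so the statement holds vacuously.
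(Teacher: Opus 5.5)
Your proof is correct and follows the paper's argument essentially step for step. The paper uses the same window $I_n$ of length $\min\{\rho_n,|\lambda^*_n|\}$ between $\lambda^*_n$ and $0$, the same second-order Taylor expansion at the interior maximizer where $f_n'(\lambda^*_n)=0$, the same lower bound $1-\lambda'(X_i-m_0)\ge 1-\alpha_n|\lambda^*_n|$ from $|\lambda'|\le|\lambda^*_n|$ with matching signs, and radial monotonicity of $\pi$ to get $\int_{I_n}\pi\ge\pi(\lambda^*_n)\min\{\rho_n,|\lambda^*_n|\}$. The one blemish is your parenthetical on $\lambda^*_n=0$: the claim $W^*_n=1\le W_n$ has the inequality backwards, since $W_n\le W^*_n$ always holds. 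Your closing remark is the right resolution: the bound is vacuous there because $\lambda\pi(\lambda)\to 0$. The paper's case split simply excludes $\lambda^*_n=0$.
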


\begin{proof}
    For $\lambda \in [-\frac{1}{m_0},\frac{1}{1-m_0}]$, let $f_n(\lambda) := \ln W_n(\lambda)$. For $\lambda^*_n \in (-\frac{1}{m_0}, \frac{1}{1-m_0})$, consider 
    \[ I_n := \begin{cases}
        [\lambda^*_n  - \rho_n, \lambda^*_n], &\text{ when } \rho_n < \lambda^*_n\\
        [0, \lambda^*_n],  & \text{ when } 0 < \lambda^*_n \le \rho_n\\
        [\lambda^*_n, 0], & \text{ when } - \rho_n \le \lambda^*_n < 0\\
        [\lambda^*_n, \lambda^*_n + \rho_n], &\text{ when } \lambda^*_n < -\rho_n.
    \end{cases} \]
    Clearly, $I_n \subset (-\frac{1}{m_0},\frac{1}{1-m_0})$, and $f'_n(\lambda^*_n) = 0$ since $\lambda^*_n$ lies in the interior. Moreover, for any $\lambda \in I_n$, $\lambda$ and $\lambda^*_n$ have the same sign, and for some $\lambda'\in I_n$, we have
    \begin{align*}
        f_n(\lambda)
        &= f_n(\lambda^*_n) + \frac{(\lambda-\lambda^*_n)^2}{2} f''_n(\lambda') \tag{$f'_n(\lambda^*_n) = 0$} \\
        &=f_n(\lambda^*_n) - \frac{(\lambda-\lambda^*_n)^2}{2}  \sum\limits_{i=1}^n \frac{(X_i - m_0)^2}{(1-\lambda'(X_i-m_0))^2} \\
        &\ge f_n(\lambda^*_n) - \frac{(\lambda-\lambda^*_n)^2}{2}   \sum\limits_{i=1}^n \frac{(X_i - m_0)^2}{(1- \alpha_n|\lambda^*_n|)^2}\\
        &= f_n(\lambda^*_n) - \frac{(\lambda-\lambda^*_n)^2}{2}   \frac{V_n}{(1-\alpha_n|\lambda^*_n|)^2}\\
        &\ge f_n(\lambda^*_n) - \frac{\rho^2_n V_n}{2(1-\alpha_n|\lambda^*_n|)^2}, \tag{$|\lambda-\lambda^*_n| \le \rho_n$}
    \end{align*}
    where the first inequality follows since $|\lambda'| \le |\lambda^*_n|$ for $\lambda' \in I_n$, and $X_i - m_0 \in [-m_0, 1-m_0]$. Now, we will use the above inequality to lower-bound the wealth of the mixture strategy. 
    \begin{align*}
        W_n = \int\limits_{\lrs{-\frac{1}{m_0}, \frac{1}{1-m_0}}} W_n(\lambda) \pi(\lambda) d\lambda 
        &\ge \int\limits_{I_n} e^{f_n(\lambda)} \pi(\lambda)d\lambda\\
        &\ge e^{f_n(\lambda^*_n)} e^{\frac{-\rho^2_n V_n}{2(1-\alpha_n|\lambda^*_n|)^2}} \int\limits_{I_n} \pi(\lambda) d\lambda\\
        &\ge W^*_n e^{\frac{-\rho^2_n V_n}{2(1-\alpha_n|\lambda^*_n|)^2}} \pi(\lambda^*_n) \min\lrset{\rho_n, |\lambda^*_n|}.
    \end{align*}
    Taking the logarithm and rearranging it, we get the desired inequality. 
\end{proof}

\begin{remark}
    Lemma~\ref{lem:regbound2.5} gives a regret bound that is linear in $V_n$. We will later choose $\rho_n \propto 1/\sqrt{V_n}$ to obtain the desired bounds in various cases.
\end{remark}

\begin{lemma}\label{lem:boundlambdastar}
    If $\lambda^*_n \in (-\frac{1}{m_0}, \frac{1}{1-m_0})$, then $\lambda^*_n S_n \le 0$. Further, 
    \[ \frac{|S_n|}{V_n}\lrp{1-\alpha_n |\lambda^*_n|}^2 \le |\lambda^*_n| \le \frac{|S_n|}{V_n}\lrp{1+\beta_n |\lambda^*_n|}^2.\]
    
    Moreover, the above further implies
    \[  |\lambda^*_n| \ge  \frac{|S_n|}{V_n + 2 \alpha_n |S_n|} =\begin{cases}
        \frac{|S_n|}{ V_n + 2(1-m_0)|S_n|}, &\text{ if } \lambda^*_n > 0\\
        \frac{|S_n|}{V_n + 2m_0 |S_n|}, & \text{ if } \lambda^*_n < 0.
    \end{cases}
    \]
\end{lemma}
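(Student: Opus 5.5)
Since $\lambda^*_n$ is interior and $f_n(\lambda) := \sum_{i=1}^n \ln(1-\lambda(X_i-m_0))$ is concave and differentiable on $I^\circ_{m_0}$, the plan is to start from the first-order condition
\[ f_n'(\lambda^*_n) = -\sum_{i=1}^n \frac{X_i-m_0}{1-\lambda^*_n(X_i-m_0)} = 0. \]
The sign claim comes from concavity. We have $f_n'(0) = -S_n$, and $f_n'$ is nonincreasing. If $S_n>0$, then $f_n'(0)<0$, so the maximizer satisfies $\lambda^*_n \le 0$. Symmetrically, if $S_n<0$ then $\lambda^*_n\ge 0$. Hence $\lambda^*_n S_n\le 0$. (When $V_n=0$ everything is trivial by the stated conventions; if $\lambda^*_n=0$ then $S_n=0$ and the bounds hold trivially.)

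For the two-sided bound, write $y_i := X_i-m_0$ and use the identity $\frac{y}{1-\lambda y} = y + \frac{\lambda y^2}{1-\lambda y}$. Plugging this into the first-order condition gives
\[ S_n = -\lambda^*_n \sum_i \frac{y_i^2}{1-\lambda^*_n y_i}. \]
This is not yet of the form we want, so instead I would use the Taylor-type formulation $\frac{y}{1-\lambda y} - y = \lambda \frac{y^2}{1-\lambda y}$. A cleaner route is the mean value theorem: $0 - f_n'(0) = f_n'(\lambda^*_n)-f_n'(0) = \lambda^*_n f_n''(\lambda')$ for some $\lambda'$ between $0$ and $\lambda^*_n$. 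Since $f_n''(\lambda') = -\sum_i y_i^2/(1-\lambda' y_i)^2$, this yields
\[ |S_n| = |\lambda^*_n| \sum_i \frac{y_i^2}{(1-\lambda' y_i)^2}. \]
Next, bound the denominators. Because $\lambda'$ has the same sign as $\lambda^*_n$ and $|\lambda'|\le|\lambda^*_n|$, and $y_i\in[-m_0,1-m_0]$, we have $\lambda' y_i \le \alpha_n|\lambda^*_n|$ (for $\lambda'>0$ the worst case is $y_i=1-m_0$; for $\lambda'<0$ it is $y_i=-m_0$). Likewise $\lambda' y_i \ge -\beta_n|\lambda^*_n|$. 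Hence
\[ (1-\alpha_n|\lambda^*_n|)^2 \le (1-\lambda' y_i)^2 \le (1+\beta_n|\lambda^*_n|)^2, \]
where the left side is positive by interiority. It follows that $\frac{V_n}{(1+\beta_n|\lambda^*_n|)^2}\le \frac{|S_n|}{|\lambda^*_n|}\le \frac{V_n}{(1-\alpha_n|\lambda^*_n|)^2}$, and rearranging gives exactly the displayed two-sided inequality.

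For the final consequence, start from the left inequality, $|\lambda^*_n| \ge \frac{|S_n|}{V_n}(1-\alpha_n|\lambda^*_n|)^2 \ge \frac{|S_n|}{V_n}(1-2\alpha_n|\lambda^*_n|)$, using $(1-x)^2\ge 1-2x$. Solving this linear inequality gives $|\lambda^*_n|(V_n+2\alpha_n|S_n|)\ge |S_n|$, which is the claim. The case split displayed in the lemma just records $\alpha_n = 1-m_0$ or $m_0$.

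The only delicate point is justifying the denominator bounds uniformly in $i$, that is, getting the signs right for $\lambda'$. The rest is routine.
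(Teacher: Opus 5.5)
Your proof is correct and follows essentially the paper's route. You apply the mean value theorem to $f_n'$ between $0$ and $\lambda^*_n$, bound $1-\alpha_n|\lambda^*_n|\le 1-\lambda'(X_i-m_0)\le 1+\beta_n|\lambda^*_n|$ according to the sign of $\lambda^*_n$, and finish with $(1-x)^2\ge 1-2x$. The only cosmetic difference is how you get the sign claim: you use monotonicity of $f_n'$, whereas the paper reads it directly off the identity $S_n=-\lambda^*_n\sum_i (X_i-m_0)^2/(1-\lambda'(X_i-m_0))^2$. The abandoned first identity can simply be deleted.
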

\begin{proof}
    For $\lambda \in [-\frac{1}{m_0},\frac{1}{1-m_0}]$, let $f_n(\lambda) := \sum\limits_{i=1}^n \ln(1-\lambda(X_i - m_0))$. Since $\lambda^*_n \in (-\frac{1}{m_0}, \frac{1}{1-m_0})$, from Taylor's theorem we have the following for some $\lambda \in [0, \lambda^*_n]$ (to be read as $[0,\lambda^*_n]$, if $\lambda^*_n > 0$ and $[\lambda^*_n, 0]$, otherwise): 
    \begin{align*}
        0 = f'_n(\lambda^*_n) = f'_n(0) + f''_n(\lambda) \lambda^*_n,
    \end{align*}
    which implies 
    \[ S_n := \sum\limits_{i=1}^n (X_i - m_0) = - \lambda^*_n \sum\limits_{i= 1}^n \frac{(X_i - m_0)^2}{(1-\lambda(X_i - m_0))^2}. \numberthis\label{eq:der0} \]
    It is clear from the above that $\lambda^*_n$ and $S_n$ have opposite signs. We now split the cases.\\

    \noindent{\bf Case 1  ($\lambda^*_n > 0$):} First, observe from~\eqref{eq:der0} that, in this case, $S_n < 0$. Further, since $(X_i - m_0) \in [-m_0, 1-m_0]$, and $\lambda \in [0, \lambda^*_n]$, we have
    \[ 0 < 1-(1-m_0)\lambda^*_n \le (1-\lambda(X_i - m_0)) \le 1 + \lambda^*_n m_0.  \]
    Using this in~\eqref{eq:der0}, we get
    \[ \frac{|S_n|}{V_n}(1-(1-m_0)\lambda^*_n)^2 \le \lambda^*_n \le \frac{|S_n|}{V_n}(1+m_0 \lambda^*_n)^2.\]
    Finally, the lower bound in the above set of inequalities gives (since, $(1-z)^2 \ge 1-2z$)
    \[ \lambda^*_n \ge \frac{|S_n|}{V_n}(1-2(1-m_0)\lambda^*_n), \]
    which implies
    \[ \lambda^*_n \ge \frac{|S_n|}{V_n + 2 (1-m_0)|S_n|},\]
    proving all the required bounds in this case.\\

    \noindent{\bf Case 2 ($\lambda^*_n < 0$):} In this case, from~\eqref{eq:der0}, $S_n > 0$. Further, since $(X_i - m_0) \in [-m_0, 1-m_0]$, and $\lambda \in [\lambda^*_n, 0]$, we have
    \[ 1-|\lambda^*_n| m_0 \le (1-\lambda(X_i - m_0)) \le 1+|\lambda^*_n|(1-m_0). \]
    Using these in~\eqref{eq:der0}, we get
    \[  \frac{|S_n|}{V_n}(1-m_0|\lambda^*_n|)^2 \le |\lambda^*_n| \le \frac{|S_n|}{V_n} (1+(1-m_0)|\lambda^*_n|)^2. \]
    Again, using the lower bound in the above set of inequalities, we get
    \[ |\lambda^*_n| \ge \frac{|S_n|}{V_n} (1-2 m_0|\lambda^*_n|), \]
    which further implies
    \[ |\lambda^*_n| \ge \frac{|S_n|}{V_n + 2m_0 |S_n|}. \]
    This completes the proof for the lemma.
\end{proof}

\begin{lemma}\label{lem:boundryimpliesratiobound}
    For $\lambda^*_n \in \{-\frac{1}{m_0},  \frac{1}{1-m_0}\}$, we have $\lambda^*_n S_n \le 0$. Furthermore, we have 
    \[ \frac{V_n}{|S_n|} \le |\lambda^*_n| \le \max\lrset{ \frac{1}{m_0}, \frac{1}{1-m_0} }. \]
\end{lemma}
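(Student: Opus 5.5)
The plan is to use concavity of $f_n(\lambda):=\sum_{i=1}^n \ln(1-\lambda(X_i-m_0))$ on $I_{m_0}$ together with a first-order optimality condition at the boundary maximizer.

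Consider first $\lambda^*_n = \frac{1}{1-m_0}$, the right endpoint. Since $f_n$ is concave and maximized at the right endpoint, $f_n$ is nondecreasing on $[0,\lambda^*_n]$. If $S_n>0$, then $f_n'(0)=-S_n<0$, and $0$ would strictly beat every small positive $\lambda$, contradicting monotonicity. Hence $S_n\le 0$, and so $\lambda^*_n S_n\le 0$. The left endpoint $-\frac1{m_0}$ is handled symmetrically.

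For the quantitative bound, I would use concavity again. The derivative $f_n'$ is nonincreasing, and optimality at the right endpoint gives $f_n'(\lambda)\ge 0$ for all $\lambda<\lambda^*_n$, i.e.
\[
\sum_{i=1}^n \frac{X_i-m_0}{1-\lambda(X_i-m_0)}\le 0 \quad\text{for all } \lambda<\lambda^*_n .
\]
The cleanest route avoids the singular endpoint. Apply the mean value theorem between $0$ and $\lambda\in(0,\lambda^*_n)$, as in~\eqref{eq:der0}:
\[
0\le f_n'(\lambda)=f_n'(0)+\lambda f_n''(\xi) = |S_n| - \lambda \sum_{i=1}^n \frac{(X_i-m_0)^2}{(1-\xi(X_i-m_0))^2}.
\]
The denominators satisfy $(1-\xi(X_i-m_0))^2\le (1+\xi m_0)^2$, and this does not directly give $V_n\le |S_n|\lambda$. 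A simpler pointwise bound works better: for $y\in[-m_0,1-m_0]$ and $\lambda\in(0,\frac1{1-m_0})$, compare $\frac{y}{1-\lambda y}$ with $y-\lambda y^2$. One has
\[
\frac{y}{1-\lambda y}-y = \frac{\lambda y^2}{1-\lambda y}\ge \lambda y^2 ,
\]
since $0<1-\lambda y\le 1$ when $y\ge 0$. When $y<0$, the inequality $1-\lambda y>1$ makes this fail, so a different identity is needed.

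Instead, I will write $\frac{y}{1-\lambda y}= y+\frac{\lambda y^2}{1-\lambda y}$ and sum:
\[
0\le f_n'(\lambda)\;\Longrightarrow\; \lambda\sum_{i=1}^n \frac{(X_i-m_0)^2}{1-\lambda(X_i-m_0)}\ge |S_n| .
\]
Each term satisfies $\frac{(X_i-m_0)^2}{1-\lambda (X_i-m_0)}$, and this does not lead to $V_n/|S_n|\le\lambda$ either. I need the reverse direction, namely $V_n\le \lambda^*_n|S_n|$.

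Try $y^2\le \frac{y}{1-\lambda y}\cdot$ something. At the endpoint itself $\lambda=\frac1{1-m_0}$, one has $1-\lambda y=\frac{1-m_0-y}{1-m_0}=\frac{1-X_i}{1-m_0}\ge0$. The key pointwise inequality is that for $y\in[-m_0,1-m_0]$, $c=1-m_0$ and $X=y+m_0\in[0,1]$,
\[
y^2\le -c\,y+ \text{(nonnegative)}\cdot(\ldots) .
\]
Indeed $y(y+c)=(X-m_0)(X-m_0+1-m_0)$, so $y^2+cy=y(X+1-2m_0)$, which is not sign-definite. The honest approach is therefore a different one: use the fact that the wealth must be nonnegative. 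A concave quadratic-type lower bound, $\ln(1-\lambda y)\ge -\lambda y-\frac{\lambda^2y^2}{\ldots}$, is unavailable near the boundary. Instead I would use $\ln(1+u)\le u-\frac{u^2}{2}\cdot(\ldots)$ to upper bound $f_n$. Combined with $f_n(\lambda^*)\ge f_n(\lambda)$, this is expected to force $\lambda^* \ge V_n/|S_n|$. The main obstacle is precisely this step: getting a pointwise inequality tight enough at the boundary. The first thing I would try is $\ln(1+u)\le u - \frac{u^2}{2(1+\max(u,0))}$ applied at $\lambda$ slightly inside, then taking limits.

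The upper bound $|\lambda^*_n|\le\max\{\frac1{m_0},\frac1{1-m_0}\}$ is immediate, since $\lambda^*_n\in I_{m_0}$.
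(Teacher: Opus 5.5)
Your sign argument is fine: concavity plus maximality at the right endpoint makes $f_n$ nondecreasing on $[0,\lambda^*_n]$, so $-S_n=f_n'(0)\ge 0$. The upper bound $|\lambda^*_n|\le\max\{1/m_0,1/(1-m_0)\}$ is also fine. But you never prove the main inequality $V_n/|S_n|\le|\lambda^*_n|$; the proposal stops at an unexecuted plan, so as written there is a gap.

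The missing step was already in your hands. In the mean-value identity $0\le f_n'(\lambda)=|S_n|-\lambda\sum_i (X_i-m_0)^2/(1-\xi(X_i-m_0))^2$ with $\xi\in[0,\lambda]$, your bound $(1-\xi(X_i-m_0))^2\le(1+\xi m_0)^2$ points in exactly the right direction. You only need to add $1+\xi m_0\le 1+\frac{m_0}{1-m_0}=\frac{1}{1-m_0}$. Each summand is then at least $(1-m_0)^2(X_i-m_0)^2$, so $|S_n|\ge\lambda(1-m_0)^2V_n$. Taking $\lambda=\lambda^*_n=\frac{1}{1-m_0}$ gives $|S_n|\ge(1-m_0)V_n$, i.e.\ $V_n/|S_n|\le\lambda^*_n$. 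The left endpoint is identical with $m_0$ in place of $1-m_0$. This is exactly the paper's proof. There is also no singular endpoint to avoid: $W_n(\lambda^*_n)\ge W_n(0)=1$ forces $1-\lambda^*_n(X_i-m_0)>0$ for every $i$. So $f_n$ is smooth on $[0,\lambda^*_n]$, and you may use $f_n'(\lambda^*_n)\ge 0$ directly.

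Your second route failed because of a sign error, not because the idea was wrong. Write $y_i=X_i-m_0$. The condition $f_n'(\lambda)=-\sum_i y_i/(1-\lambda y_i)\ge 0$, together with $y/(1-\lambda y)=y+\lambda y^2/(1-\lambda y)$, gives $S_n+\lambda\sum_i y_i^2/(1-\lambda y_i)\le 0$. That is, $\lambda\sum_i y_i^2/(1-\lambda y_i)\le|S_n|$, not $\ge$. With the correct direction and $0<1-\lambda y_i\le\frac{1}{1-m_0}$, you get $|S_n|\ge\lambda(1-m_0)V_n$, hence $|S_n|\ge V_n$ at the endpoint. This is even stronger than the lemma, since $|\lambda^*_n|\ge 1$. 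The reason is that the exact difference $f_n'(0)-f_n'(\lambda)=\lambda\sum_i y_i^2/(1-\lambda y_i)$ loses only one factor of $1-m_0$. Bounding $f_n''(\xi)$ through squared denominators loses two. The proposed detour through an upper bound on $\ln(1+u)$ is unnecessary.
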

\begin{proof}
    Clearly, the right most inequality holds for $\lambda^*_n \in \{-\tfrac{1}{m_0}, \tfrac{1}{1-m_0}\}$. It thus suffices to prove only the left most inequality. 

    For $\lambda \in [-\frac{1}{m_0}, \frac{1}{1-m_0}]$, let $f_n(\lambda) := \sum_{i=1}^n \ln(1-\lambda(X_i - m_0))$. Then, 
    \[ f'_n(\lambda) = - \sum\limits_{i=1}^n \frac{X_i - m_0}{1-\lambda(X_i - m_0)} \quad \text{and}\quad f''_n(\lambda) = -\sum\limits_{i=1}^n \frac{(X_i - m_0)^2}{(1-\lambda(X_i - m_0))^2}. \]
    When $\lambda^*_n$ is on the boundary, the unconstrained maximizer is either on the boundary, i.e., $\{-\frac{1}{m_0}, \frac{1}{1-m_0}\}$, or it lies outside the constrained interval and hence, $\lambda^*_n$ is a boundary point. Thus, $f'_n(\lambda^*_n) \ge 0$ when $\lambda^*_n = \frac{1}{1-m_0}$, and $f'_n(\lambda^*_n) \le 0$ otherwise, with strict inequalities when the unconstrained optimizer lies outside the interval. 

    Using Taylor's theorem, we have for some $\lambda \in [0,\lambda^*_n]$ (i.e., $[0,\lambda^*_n]$ when $\lambda^*_n > 0$ and $[\lambda^*_n, 0]$ when $\lambda^*_n < 0$): 
    \[ 0 \le f'_n(\lambda^*_n) = f'_n(0) + \frac{f''_n(\lambda)}{1-m_0}, \quad \text{ when } \lambda^*_n = \frac{1}{1-m_0}, \]
    and
    \[ 0 \ge f'_n(\lambda^*_n) = f'_n(0) - \frac{f''_n(\lambda)}{m_0}, \quad \text{ when } \lambda^*_n = -\frac{1}{m_0}.  \]
    Since $f'_n(0) = -S_n$, the above can be rewritten as
    \[  S_n \le \frac{f''_n(\lambda)}{1-m_0}, \quad \text{ when } \lambda^*_n = \frac{1}{1-m_0}, \]
    and
    \[ S_n  \ge - \frac{f''_n(\lambda)}{m_0}, \quad \text{ when } \lambda^*_n = -\frac{1}{m_0}.  \]
    Since $f''_n(\cdot) < 0$, the above two inequalities imply that $\lambda^*_n$ and $S_n$ have the opposite sign, which further implies that for some $\lambda\in [0,\lambda^*_n]$, 
    \[ |S_n| \ge \begin{cases}
        - \frac{f''_n(\lambda)}{1-m_0}, & \quad \text{ when } \lambda^*_n = \frac{1}{1-m_0}\\
        - \frac{f''_n(\lambda)}{m_0}, & \quad \text{ when } \lambda^*_n = -\frac{1}{m_0}. 
    \end{cases} \]
    Further, for $\lambda \in [0,\lambda^*_n]$ and for all $i\in [n]$, 
    \[ 1-\lambda(X_i - m_0) \le \begin{cases}
        1+ \frac{m_0}{1-m_0}, & \text{ when }\lambda^*_n = \frac{1}{1-m_0}\\
        1+\frac{1-m_0}{m_0}, & \text{ when }\lambda^*_n = -\frac{1}{m_0}
    \end{cases} = \begin{cases}
        \frac{1}{1-m_0}, & \text{ when } \lambda^*_n = \frac{1}{1-m_0}\\
        \frac{1}{m_0}, & \text{ when } \lambda^*_n = -\frac{1}{m_0}.
    \end{cases} \]
    Using this in the bound on $|S_n|$, we get
    \[ |S_n| \ge \begin{cases}
        \frac{V_n (1-m_0)^2}{1-m_0}, &\text{ when }\lambda^*_n = \frac{1}{1-m_0}\\
        \frac{V_n m^2_0}{m_0}, &\text{ when }\lambda^*_n = -\frac{1}{m_0}
    \end{cases} = \begin{cases}
        V_n (1-m_0), &\text{ when }\lambda^*_n = \frac{1}{1-m_0}\\
        V_n m_0, &\text{ when }\lambda^*_n = -\frac{1}{m_0},
    \end{cases} \]
    proving the lemma. 
\end{proof}

In the following, we use the bounds in Lemmas~\ref{lem:regbound1} and~\ref{lem:regbound2.5} above to get more explicit bounds on $R_n$ in terms of $V_n$ and $n$. 

\begin{lemma}\label{lem:bound_smalldrift_interior}
    For $|S_n| < \sqrt{2V_n}$ and $\lambda^*_n \in (-\frac{1}{m_0}, \frac{1}{1-m_0})$, 
    \begin{align*}
        R_n &\le \frac{2}{\alpha^2_n} + 1 +  \ln\lrp{\frac{8}{\ln\ln(6.6e)}} + \ln\ln\lrp{\frac{14e \betau}{\alpha_n}\sqrt{1+V_n}} + 2\ln\ln\ln\lrp{\frac{14 e \betau}{\alpha_n}\sqrt{1+V_n}}\\
        &\le \frac{2}{\betal^2} + 1 +  \ln\lrp{\frac{8}{\ln\ln(6.6e)}} + \ln\ln\lrp{\frac{14e \betau}{\betal}\sqrt{1+V_n}} + 2\ln\ln\ln\lrp{\frac{14 e \betau}{\betal}\sqrt{1+V_n}}.
    \end{align*}
\end{lemma}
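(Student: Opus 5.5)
We will apply Lemma~\ref{lem:regbound2.5} with a choice of $\rho_n$ of order $1/\sqrt{1+V_n}$ and lower bound the prior term $\pi(\lambda^*_n)$. Lemma~\ref{lem:boundlambdastar} locates $\lambda^*_n$. Since $|S_n|<\sqrt{2V_n}$ and $\lambda^*_n$ is interior, the upper bound there gives $|\lambda^*_n| \le \frac{|S_n|}{V_n}(1+\beta_n|\lambda^*_n|)^2$. Since $|\lambda^*_n|\le 1/\alpha_n$, we have $1+\beta_n|\lambda^*_n| \le 1+\beta_n/\alpha_n = 1/\alpha_n$. Hence
\[|\lambda^*_n| \le \frac{\sqrt{2}}{\alpha_n^2\sqrt{V_n}},\]
so $\lambda^*_n$ is small when $V_n$ is large. (When $V_n$ is bounded this bound is vacuous, but then $|\lambda^*_n|\le 1/\alpha_n$, and this is absorbed by the constants through the $\sqrt{1+V_n}$ form.)

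The obstacle is the factor $(1-\alpha_n|\lambda^*_n|)^{-2}$ in Lemma~\ref{lem:regbound2.5}, which blows up near the boundary. I would handle it by not centering the window at $\lambda^*_n$ blindly. Instead I redo the Taylor argument of Lemma~\ref{lem:regbound2.5} on the interval $I_n$ between $\lambda^*_n$ and $\lambda^*_n/2$, or equivalently use log-concavity: on the segment from $0$ to $\lambda^*_n$, $f_n(\lambda)\ge (1-a)f_n(0)+af_n(\lambda^*_n)$. Near $\lambda^*_n$ this gives $f_n(\lambda)\ge f_n(\lambda^*_n)-(1-a)\ln W^*_n$. Combined with $\ln W^*_n \lesssim S_n^2/(V_n(1-\alpha_n|\lambda^*_n|)^2)\cdot$const, obtained from the same Taylor expansion around $0$ using $f_n(0)=0$ and $f'_n(0)=-S_n$, the quadratic loss stays $O(1/\alpha_n^2)$. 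The target term $2/\alpha_n^2$ suggests exactly this: with $\rho_n = 1/\sqrt{1+V_n}$ (or $|\lambda^*_n|$, if that is smaller), the quadratic penalty $\rho_n^2V_n/(2(1-\alpha_n|\lambda|)^2)$ must be bounded by $2/\alpha_n^2$. I would therefore take the window on the side of $\lambda^*_n$ toward $0$, where $|\lambda'|\le|\lambda^*_n|$. Then I split into two cases. If $\alpha_n|\lambda^*_n|\le 1/2$, the denominator is at least $1/4$ and the penalty is at most $2\le 2/\alpha_n^2$. If $\alpha_n|\lambda^*_n|>1/2$, then $|\lambda^*_n|$ is of constant size, which by the displayed bound forces $V_n\le 8/\alpha_n^2$. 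In that case I use Lemma~\ref{lem:regbound1}-type log-concavity reasoning and bound $\ln W^*_n$ by $|S_n|\cdot$const, which is $O(1/\alpha_n^2)$.

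For the prior term, I take $\min\{\rho_n,|\lambda^*_n|\}\pi(\lambda^*_n)$. Note that $\lambda\pi(\lambda)$ is increasing in $|\lambda|$ on each side, so $\min\{\rho_n,|\lambda^*_n|\}\pi(\lambda^*_n)\ge \min\{\rho_n\pi(\lambda^*_n),\, |\lambda^*_n|\pi(\lambda^*_n)\}$. Also $\pi$ is decreasing in $|\lambda|$, so $\pi(\lambda^*_n)\ge \pi(c/(\alpha_n^2\sqrt{1+V_n}))$ whenever $|\lambda^*_n|\le c/(\alpha_n^2\sqrt{1+V_n})$. In the small-$|\lambda^*_n|$ branch ($|\lambda^*_n|<\rho_n$) I would instead integrate over $[0,\lambda^*_n]$ and then extend to $[0,\rho_n]$, using monotonicity of $\lambda\pi(\lambda)$. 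Evaluating $-\ln(\rho\,\pi(\rho))$ at $\rho\asymp 1/(\alpha_n\beta\sqrt{1+V_n})$ gives
\[\ln 2-\ln\ln\ln(6.6e)+\ln\ln\!\Big(\tfrac{C}{\alpha_n}\sqrt{1+V_n}\Big)+2\ln\ln\ln\!\Big(\tfrac{C}{\alpha_n}\sqrt{1+V_n}\Big).\]
Here $C$ absorbs the factors $14e\betau$ and $6.6e$ and produces the constant $\ln(8/\ln\ln(6.6e))$ together with the extra $+1$.

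Finally, the second inequality of the lemma follows from $\alpha_n\ge\betal$ and monotonicity of $\ln\ln$ and $\ln\ln\ln$ in the argument. The main work is the careful case analysis that keeps the curvature penalty at most $2/\alpha_n^2$ uniformly. I expect that step to be the real obstacle, and I would first try the choice $\rho_n=\min\{|\lambda^*_n|/2,\,1/\sqrt{1+V_n}\}$.
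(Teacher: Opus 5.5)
Your treatment of the regimes $\alpha_n|\lambda^*_n|>1/2$ (Lemma~\ref{lem:regbound1} together with $\ln W^*_n\le|\lambda^*_n||S_n|$) and $\rho_n\le|\lambda^*_n|\le 1/(2\alpha_n)$ (Lemma~\ref{lem:regbound2.5}) can be completed with some care. The branch $|\lambda^*_n|<\rho_n$, however, is a genuine gap, and it is the typical case rather than a corner case. By Lemma~\ref{lem:boundlambdastar}, $|\lambda^*_n|\le |S_n|/(\alpha_n^2V_n)$, so this branch contains every path with $|S_n|<\alpha_n^2V_n/\sqrt{1+V_n}$. In particular it contains $S_n=0$, where $\lambda^*_n=0$.

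In this branch, Lemma~\ref{lem:regbound2.5} only gives the prior term $-\ln(|\lambda^*_n|\pi(\lambda^*_n))$. This term grows like $\ln\ln(1/|\lambda^*_n|)$ and is unbounded as $S_n\to 0$ with $V_n$ fixed, whereas the claimed bound depends only on $V_n$. Your fallback $\rho_n=\min\{|\lambda^*_n|/2,1/\sqrt{1+V_n}\}$ has exactly the same defect.

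Your proposed repair does not work as described. Integrating over $[0,\lambda^*_n]$ only captures prior mass of order $|\lambda^*_n|\pi(\lambda^*_n)$. Monotonicity of $\lambda\pi(\lambda)$ points the wrong way, since $|\lambda^*_n|\pi(\lambda^*_n)\le\rho_n\pi(\rho_n)$, so it cannot enlarge the window. Enlarging the window requires a lower bound on $W_n(\lambda)$ for $\lambda$ beyond $\lambda^*_n$. There the curvature is controlled by $(1-\alpha_n\rho_n)^{-2}$, not $(1-\alpha_n|\lambda^*_n|)^{-2}$, and with $\rho_n=1/\sqrt{1+V_n}$ and small $V_n$ the product $\alpha_n\rho_n$ can be arbitrarily close to $1$.

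The missing idea is one you already use in your last case: when $|S_n|<\sqrt{2V_n}$, the benchmark itself is bounded. By Lemma~\ref{lem:boundlambdastar},
$\ln W^*_n\le-\lambda^*_nS_n\le\frac{S_n^2}{V_n}(1+\beta_n|\lambda^*_n|)^2\le\frac{S_n^2}{\alpha_n^2V_n}<\frac{2}{\alpha_n^2}$.
So there is no need to localize around $\lambda^*_n$; it suffices to show that $W_n$ is not too small.

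The paper does this as follows:
\begin{itemize}
\item It sets $\rho_n=1/(2\betau\sqrt{1+V_n})$ and takes the fixed window $J_n=[\rho_n/2,\rho_n]$ on the side where $-\lambda S_n\ge 0$.
\item On $J_n$, $|\lambda(X_i-m_0)|\le 1/2$, so $\ln(1-x)\ge -x-x^2$ gives $\ln W_n(\lambda)\ge-\lambda S_n-\lambda^2V_n\ge-\rho_n^2V_n\ge -1$.
\item Radial monotonicity of $\pi$ then gives $W_n\ge e^{-1}\frac{\rho_n}{2}\pi(\pm\rho_n)$.
\end{itemize}
Subtracting this from the bound on $\ln W^*_n$ yields the lemma directly. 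The $2/\alpha_n^2$ comes from $\ln W^*_n$, not from a curvature penalty, and the $+1$ comes from the window. Neither Lemma~\ref{lem:regbound2.5} nor any case split on $|\lambda^*_n|$ is needed.

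If you prefer to keep your structure, the small-$|\lambda^*_n|$ branch can be rescued the same way. Use this smaller $\rho_n$ and the window between $0$ and $\pm\rho_n$ on the side of $\lambda^*_n$. Every intermediate point $\lambda'$ of your Taylor expansion then satisfies $1-\lambda'(X_i-m_0)\ge 1/2$.
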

\begin{proof}
    For $\lambda\in [-\frac{1}{m_0},\frac{1}{1-m_0}]$, let $f_n(\lambda) := \ln W_n(\lambda) = \sum_{i=1}^n \ln(1-\lambda(X_i - m_0))$. Clearly, 
    \[f'_n(0) = -S_n \qquad \text{and} \qquad f''_n(\lambda) = - \sum\limits_{i=1}^n \frac{(X_i - m_0)^2}{(1-\lambda(X_i - m_0))^2}.\]
    Since $R_n := \ln W^*_n - \ln W_n$, to get a bound on $R_n$, we will show that, in this case, $\ln W^*_n$ is bounded by a constant, and the mixture log-wealth $\ln W_n$ is not too small. 
    
    \noindent{\bf Step 1 (Upper bound on $\ln W^*_n$).} Using $\ln(1-x) \le -x$ for $x < 1$, 
    \begin{align*}
        \ln W^*_n = \sum\limits_{i=1}^n \ln(1-\lambda^*_n (X_i - m_0))\le - \lambda^*_n S_n &= |\lambda^*_n| |S_n| \\
        &\le \frac{|S_n|}{V_n} \lrp{1+ {\beta_n}|\lambda^*_n|}^2 |S_n| \tag{Lemma~\ref{lem:boundlambdastar}}\\
        &\le \frac{S^2_n}{\alpha^2_n V_n} \tag{$1 + {\beta_n}|\lambda^*_n| \le 1/\alpha_n$}\\
        &= \frac{2}{\alpha^2_n} \tag{$|S_n| < \sqrt{2V_n}$},
    \end{align*}
    where, recall that $\beta_n = m_0$ when $\lambda^*_n > 0$, and $1-m_0$ when $\lambda^*_n < 0$. 

    \noindent {\bf Step 2 (Lower bound on $\ln W_n$).} The idea is to choose a small interval close to $0$ to lower bound the integration over the entire range of the bets $\lambda$ with an integration on this shorter interval. Let 
    $$\rho_n = \frac{1}{2\betau\sqrt{1+V_n}},$$
    and define the interval $J_n$ as (recall, $\lambda^*_n$ and $S_n$ have opposite sign): 
    \[ J_n = \begin{cases}
        [\tfrac{\rho_n}{2}, \rho_n], & \text{ if } S_n \le 0\\
        [-\rho_n, -\tfrac{\rho_n}{2}], & \text{ if } S_n > 0 
    \end{cases} = \begin{cases}
        [\tfrac{\rho_n}{2}, \rho_n], & \text{ if } \lambda^*_n \ge 0\\
        [-\rho_n, -\tfrac{\rho_n}{2}], & \text{ if } \lambda^*_n < 0
        \end{cases}. \]
    Then, for $\lambda \in J_n$, first observe that $-\lambda S_n \ge 0$. Further, we have $|\lambda(X_i-m_0)| \le \rho_n\betau \le \frac{1}{2}$. Further, since $\ln(1-x) \ge - x - x^2$ for $|x| \le \frac{1}{2}$, we have 
    $$\ln(1-\lambda(X_i-m_0)) \ge -\lambda (X_i - m_0) - \lambda^2(X_i - m_0)^2,$$ 
    which further implies that for all $\lambda \in J_n$, 
    \begin{align*} \ln W_n(\lambda) = \sum\limits_{i=1}^n \ln(1-\lambda(X_i - m_0)) &\ge -\lambda S_n - \lambda^2 V_n \\
    &\ge -\lambda^2 V_n \tag{$-\lambda S_n \ge 0$}\\
    &\ge -\rho^2_n V_n \tag{$|\lambda| \le \rho_n$}\\
    &\ge -\frac{1}{4 \betau^2}\\
    &\ge -1 \tag{Since $\betau \ge \tfrac{1}{2}$}.
    \end{align*} 
    Now, consider the following inequalities, which follow since $\pi$ is radially decreasing and $|J_n| = \frac{\rho_n}{2}$: 
    \begin{align*}
        W_n 
        \ge \int\limits_{J_n} W_n(\lambda) \pi(\lambda) d\lambda \ge e^{-1}\int\limits_{J_n}\pi(\lambda) d\lambda&\ge \begin{cases}
            e^{-1}\frac{\rho_n}{2} \pi(\rho_n), & \text{ if } S_n \le 0\\
            e^{-1}\frac{\rho_n}{2}\pi(-\rho_n), & \text{ if } S_n > 0
        \end{cases}\\
        &\ge \frac{e^{-1}\ln\ln(6.6e)}{8 \ln\lrp{\frac{6.6e}{\alpha_n\rho_n}}\lrs{\ln\ln\lrp{\frac{6.6e}{\alpha_n\rho_n}}}^2}.
    \end{align*}
    Finally, since $\frac{1}{\alpha_n \rho_n}=\frac{2\betau}{\alpha_n} \sqrt{1+V_n}$, plugging this in the above lower bound, we get
    \[ \ln W_n \ge -1 + \ln\lrp{\frac{\ln\ln(6.6e)}{8}} - \ln\ln\lrp{\frac{14e \betau}{\alpha_n}\sqrt{1+V_n}} - 2\ln\ln\ln\lrp{\frac{14 e \betau}{\alpha_n}\sqrt{1+V_n}}. \]
    
    \noindent{\bf Step 3 (Regret bound).} Finally, combining the bounds from Steps 1 and 2,  
    \begin{align*}
        R_n 
        &= \ln W^*_n - \ln W_n \\
        &\le \frac{2}{\alpha^2_n} + 1 +  \ln\lrp{\frac{8}{\ln\ln(6.6e)}} + \ln\ln\lrp{\frac{14e \betau}{\alpha_n}\sqrt{1+V_n}} + 2\ln\ln\ln\lrp{\frac{14 e \betau}{\alpha_n}\sqrt{1+V_n}}\\
        &\le \frac{2}{\betal^2} + 1 +  \ln\lrp{\frac{8}{\ln\ln(6.6e)}} + \ln\ln\lrp{\frac{14e \betau}{\betal}\sqrt{1+V_n}} + 2\ln\ln\ln\lrp{\frac{14 e \betau}{\betal}\sqrt{1+V_n}},
    \end{align*}
    proving the bound. 
\end{proof}

\begin{lemma}\label{lem:bound_smalldrift_bdry}
    For $|S_n| < \sqrt{2V_n}$ and $\lambda^*_n \in \{-\frac{1}{m_0}, \frac{1}{1-m_0}\}$, 
    \[ R_n \le - \ln\lrp{\frac{\ln\ln(6.6e)}{4 \ln(6.6e)[\ln\ln(6.6e)]^2}} + \frac{1}{\alpha^2_n} \le - \ln\lrp{\frac{\ln\ln(6.6e)}{4 \ln(6.6e)[\ln\ln(6.6e)]^2}} + \frac{1}{\betal^2}.\] 
\end{lemma}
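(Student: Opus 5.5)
Since $R_n = \ln W^*_n - \ln W_n$, the plan is to bound the two terms separately, mirroring Lemma~\ref{lem:bound_smalldrift_interior}. The difference is that on the boundary Lemma~\ref{lem:boundryimpliesratiobound} pins $|\lambda^*_n|$ down to a constant. This lets us lower bound the mixture directly through Lemma~\ref{lem:regbound1}, with no $\ln\ln V_n$ term.

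\textbf{Step 1 (upper bound on $\ln W^*_n$).} By Lemma~\ref{lem:boundryimpliesratiobound}, $\lambda^*_n S_n \le 0$. For $\lambda^*_n = \frac{1}{1-m_0}$ that lemma gives $|S_n| \ge (1-m_0) V_n = \alpha_n V_n$, and for $\lambda^*_n = -\frac{1}{m_0}$ it gives $|S_n| \ge m_0 V_n = \alpha_n V_n$; in both cases $|\lambda^*_n| = 1/\alpha_n$. Using $\ln(1-x)\le -x$, we get $\ln W^*_n \le -\lambda^*_n S_n = |S_n|/\alpha_n$. Combining $V_n \le |S_n|/\alpha_n$ with $|S_n|<\sqrt{2V_n}$ gives $S_n^2 < 2V_n \le 2|S_n|/\alpha_n$, hence $|S_n| < 2/\alpha_n$ and $\ln W^*_n < 2/\alpha_n^2$. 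This yields a constant, but the claimed constant is $1/\alpha_n^2$. The factor $2$ is removed by using Lemma~\ref{lem:regbound1} instead, which only charges $\tfrac12 \ln W^*_n$: that gives $\tfrac12\cdot 2/\alpha_n^2 = 1/\alpha_n^2$, exactly matching the statement.

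\textbf{Step 2 (apply Lemma~\ref{lem:regbound1}).} Lemma~\ref{lem:regbound1} gives $R_n \le \tfrac12\ln W^*_n - \ln(\pi(\lambda^*_n)|\lambda^*_n|)$. Evaluate the prior at the endpoint. At $\lambda = \frac{1}{1-m_0}$ the argument is $\frac{6.6e}{(1-m_0)\lambda} = 6.6e$, and likewise at $\lambda = -\frac1{m_0}$. Hence
\[
\pi(\lambda^*_n)|\lambda^*_n| = \frac{\ln\ln(6.6e)}{2\ln(6.6e)[\ln\ln(6.6e)]^2}
\]
in both cases, since the $|\lambda|$ in the denominator of $\pi$ cancels. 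This is at least the quantity $\frac{\ln\ln(6.6e)}{4\ln(6.6e)[\ln\ln(6.6e)]^2}$ appearing in the statement, so the extra factor $2$ is harmless slack. Adding the two steps gives $R_n \le 1/\alpha_n^2 - \ln\bigl(\frac{\ln\ln(6.6e)}{4\ln(6.6e)[\ln\ln(6.6e)]^2}\bigr)$. Finally $\alpha_n \ge \betal$ gives the second inequality.

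\textbf{Main obstacle.} The only delicate point is Step 1: combining the two constraints $V_n \le |S_n|/\alpha_n$ and $|S_n|<\sqrt{2V_n}$ correctly to get $|S_n|<2/\alpha_n$. One also needs the degenerate case $V_n=0$, where $S_n=0$ and $R_n=0$ by convention, so the bound is trivial. Everything else is direct substitution into Lemma~\ref{lem:regbound1}, Lemma~\ref{lem:boundryimpliesratiobound}, and the explicit form~\eqref{eq:mixwealthbdd} of $\pi$.
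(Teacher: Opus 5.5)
Your proof is correct and is essentially the paper's argument: Lemma~\ref{lem:regbound1} is combined with $\ln W^*_n \le |\lambda^*_n||S_n|$ and with the boundary inequality $V_n \le |\lambda^*_n||S_n|$ from Lemma~\ref{lem:boundryimpliesratiobound}, and then $\pi$ is evaluated at the endpoint. The paper rearranges that inequality as $V_n \le 2(\lambda^*_n)^2$ rather than your $|S_n| < 2/\alpha_n$, which is the same algebra. You are also right that $\pi(\lambda^*_n)|\lambda^*_n|$ equals $\frac{\ln\ln(6.6e)}{2\ln(6.6e)[\ln\ln(6.6e)]^2}$, so the paper's stated equality with a $4$ is really a harmless inequality in the favorable direction.
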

\begin{proof}
    From Lemma~\ref{lem:regbound1}, when $\lambda^*_n$ is on the boundary, we have $R_n \le 0.5 \ln W^*_n - \ln(\pi(\lambda^*_n) |\lambda^*_n|)$. We now upper bound $W^*_n$ below. Using $\ln(1-x) \le -x$,
    \begin{align*}
        \ln W^*_n = \sum\limits_{i=1}^n \ln\lrp{1 - \lambda^*_n(X_i-m_0)} \le -\lambda^*_n S_n \le |\lambda^*_n||S_n|.
    \end{align*}
    Further, from Lemma~\ref{lem:boundryimpliesratiobound} and the condition in the lemma statement, 
    \[ \frac{V_n}{|\lambda^*_n|} \le |S_n| \le \sqrt{2 V_n} \implies V_n \le 2(\lambda^*_n)^2. \]
    Combining this with the bound on $\ln W^*_n$, we have
    \[  \frac{1}{2}\ln W^*_n \le (\lambda^*_n)^2 \le \frac{1}{\alpha^2_n} = \begin{cases}
        \frac{1}{(1-m_0)^2}, & \text{ if } \lambda^*_n = \frac{1}{1-m_0}\\
        \frac{1}{m^2_0}, & \text{ if } \lambda^*_n = -\frac{1}{m_0}.
    \end{cases} \numberthis\label{eq:lbwstarn} \]
    We now consider the other term in the bound on $R_n$, 
    \begin{align*}
        \ln(|\lambda^*_n| \pi(\lambda^*_n)) = \ln\lrp{\frac{\ln\ln(6.6e)}{4 \ln(6.6e)[\ln\ln(6.6e)]^2}}.
    \end{align*}
    Using both the above bounds in the bound on $R_n$, we get the desired inequality. 
\end{proof}

\begin{lemma}\label{lem:bounded_meddrift}
    For $\sqrt{2V_n} \le |S_n| \le \tfrac{\betal}{5} V_n$, $\lambda^*_n$ lies in the interior, i.e., $\lambda^*_n \in (-\tfrac{1}{m_0},\tfrac{1}{1-m_0})$. Moreover, 
    \[ |\lambda^*_n| \le \frac{|S_n|}{V_n} + 5\beta_n \lrp{\frac{|S_n|}{V_n}}^2,\numberthis\label{eq:ineqdeltat}\]
    and
    \begin{align*} 
    R_n &\le 1 + \ln\lrp{ \frac{20 |S_n|}{3 \sqrt{\tfrac{4}{3}|S_n| + 2 V_n}} } + \ln\lrp{\frac{4}{\ln\ln (6.6 e)}} \\
    &\qquad + \ln\ln\lrp{\frac{14 e}{\alpha_n}\lrp{1+\sqrt{V_n}}} + 2\ln\ln\ln\lrp{\frac{14 e}{\alpha_n}\lrp{1+\sqrt{V_n}}}. 
    \end{align*}
\end{lemma}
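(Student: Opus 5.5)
First, the interior claim. I argue by contradiction using Lemma~\ref{lem:boundryimpliesratiobound}. If $\lambda^*_n$ were on the boundary, that lemma gives $|S_n| \ge \alpha_n V_n \ge \betal V_n$. This contradicts $|S_n| \le \tfrac{\betal}{5}V_n$, since $V_n>0$ here: $|S_n|\ge\sqrt{2V_n}$ together with $|S_n|\le \betal V_n/5$ rules out $V_n=0$ unless $S_n=0$, which is the trivial convention.

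Next, \eqref{eq:ineqdeltat}. Write $r:=|S_n|/V_n \le \betal/5$. Lemma~\ref{lem:boundlambdastar} gives $|\lambda^*_n| \le r(1+\beta_n|\lambda^*_n|)^2$. I first obtain a crude bound, $|\lambda^*_n| \le 2r$ say. To do this I treat $x=|\lambda^*_n|$ as satisfying the quadratic inequality $x \le r(1+\beta_n x)^2$. Since $r\beta_n \le \betal\betau/5\le 1/20$, the smaller root is about $r(1+O(r\beta_n))$. The larger root is of order $1/(r\beta_n^2)$, which exceeds $1/\alpha_n$ (note $1/(r\beta_n^2)\ge 5/(\betal\beta_n^2)\ge 5/\betal$). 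So interiority forces $x$ onto the smaller branch. Plugging $x\le 2r$ back in then gives $x \le r(1+2\beta_n r)^2 = r + 4\beta_n r^2 + 4\beta_n^2 r^3 \le r + 5\beta_n r^2$, using $\beta_n r\le 1/4$.

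Then the regret bound, via Lemma~\ref{lem:regbound2.5} with $\rho_n = 1/\sqrt{V_n}$ (or $1/(1+\sqrt{V_n})$). This bounds the quadratic term by $\tfrac{1}{2(1-\alpha_n|\lambda^*_n|)^2}$, which is at most a constant since $\alpha_n|\lambda^*_n|\le 2\alpha_n r \le 2/5$; that constant plus slack gives the ``$1$''. It remains to lower bound $\min\{\rho_n,|\lambda^*_n|\}\pi(\lambda^*_n)$.
\begin{itemize}
\item The factor $|\lambda^*_n|$ is handled by the lower bound $|\lambda^*_n| \ge |S_n|/(V_n+2\alpha_n|S_n|)$ from Lemma~\ref{lem:boundlambdastar}. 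Because $|S_n|\ge\sqrt{2V_n}$, we have $|\lambda^*_n|\gtrsim 1/\sqrt{V_n}$, so the minimum is essentially $\rho_n$ up to constants.
\item For $\pi(\lambda^*_n)$, the prior equals $c/(|\lambda|\,\ln(\cdot)[\ln\ln(\cdot)]^2)$. The factor $1/|\lambda^*_n|$ combined with $\rho_n$ yields the ratio $1/(\sqrt{V_n}|\lambda^*_n|)$. By \eqref{eq:ineqdeltat}, this is bounded by a multiple of $|S_n|/\sqrt{V_n(\tfrac43|S_n|+2V_n)}$, which produces the $\ln\big(\tfrac{20|S_n|}{3\sqrt{\cdot}}\big)$ term after taking logs with a minus sign.
\item The logarithmic factors $\ln(6.6e/(\alpha_n|\lambda^*_n|))$ are bounded using $|\lambda^*_n|\gtrsim 1/(1+\sqrt{V_n})$. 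This turns them into $\ln\ln\big(\tfrac{14e}{\alpha_n}(1+\sqrt{V_n})\big)$ and $2\ln\ln\ln(\cdot)$.
\end{itemize}

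The main obstacle is this last bookkeeping step: getting a clean lower bound on $|\lambda^*_n|$ of the form $c/(1+\sqrt{V_n})$ inside the nested logs, and matching the $\tfrac43|S_n|+2V_n$ denominator. I would first try combining the Lemma~\ref{lem:boundlambdastar} lower bound with $2\alpha_n\le 2$ and $V_n\ge |S_n|^2/2$... and only tune the constants ($14e$, $20/3$) at the end.
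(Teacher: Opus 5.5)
Your proposal is correct and follows essentially the paper's route: interiority via Lemma~\ref{lem:boundryimpliesratiobound}, then \eqref{eq:ineqdeltat} from the quadratic inequality of Lemma~\ref{lem:boundlambdastar} with the large root excluded (your crude bound $|\lambda^*_n|\le 2r$ plus one bootstrap step is a tidier version of the paper's explicit smaller-root algebra), and finally Lemma~\ref{lem:regbound2.5} with $\rho_n\asymp 1/\sqrt{V_n}$ and the two-sided bounds on $|\lambda^*_n|$, where your fixed $\rho_n=1/\sqrt{V_n}$ legitimately replaces the paper's adaptive $\rho_n=\min\{|\lambda^*_n|,(1-\alpha_n|\lambda^*_n|)/\sqrt{1+V_n}\}$ and its two cases, because the hypotheses force $V_n\ge 200$, hence $|\lambda^*_n|\ge 1/\sqrt{V_n}$ and $\alpha_n|\lambda^*_n|\le 1/10$. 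Two slips to fix in the write-up: the hypothesis gives $V_n\le S_n^2/2$, i.e.\ $V_n/|S_n|\le\sqrt{V_n/2}$ (not $V_n\ge S_n^2/2$), and this is exactly what yields $|\lambda^*_n|\ge 1/(2\alpha_n+\sqrt{V_n/2})\ge 1/(2(1+\sqrt{V_n}))$ and the $14e$ constant; and in your second bullet the quantity to bound \emph{above} is $\sqrt{V_n}\,|\lambda^*_n|\le 2|S_n|/\sqrt{V_n}\le 2\sqrt{2+\tfrac{2}{15}}\,|S_n|/\sqrt{\tfrac43|S_n|+2V_n}$ (using $|S_n|\le V_n/10$), with no extra factor $V_n$ under the square root.
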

\begin{proof}
    First, suppose $\lambda^*_n \in \{-\frac{1}{m_0}, \frac{1}{1-m_0}\}$. Then, from Lemma~\ref{lem:boundryimpliesratiobound}, we have that
     \[ |S_n| \ge V_n/|\lambda^*_n| \ge \alpha_n V_n \ge \betal V_n,\]
    which contradicts the assumption on $|S_n|$ in the lemma statement. Thus, $\lambda^*_n \in (-\tfrac{1}{m_0}, \tfrac{1}{1-m_0})$.

    \noindent{\bf Regret bound. } We will now prove the regret bound, assuming that~\eqref{eq:ineqdeltat}. Let $\rho_n$ be as given below:
    \[ \rho_n = \min\lrset{|\lambda^*_n|, \frac{1-\alpha_n |\lambda^*_n|}{\sqrt{1+V_n}}} \implies \rho_n = \begin{cases}
        |\lambda^*_n|, & \text{ when } |\lambda^*_n| < \frac{1}{\alpha_n+\sqrt{1+V_n}}\\
        \frac{1- \alpha_n|\lambda^*_n|}{\sqrt{1+V_n}}, & \text{ otherwise}. 
    \end{cases} \]
    Since $\rho_n \le |\lambda^*_n|$, Lemma~\ref{lem:regbound2.5} gives
    \begin{align*} 
        R_n 
        &\le \frac{1}{2} - \ln\lrp{\rho_n \pi(\lambda^*_n)} \\
        &\le \frac{1}{2}+ \ln\frac{|\lambda^*_n|}{\rho_n} + \ln\lrp{\frac{4}{\ln\ln (6.6 e)}} + \ln\ln\lrp{\frac{6.6 e}{\alpha_n|\lambda^*_n|}} + 2\ln\ln\ln\lrp{\frac{6.6 e}{\alpha_n|\lambda^*_n|}}.\numberthis\label{eq:regretboundint3}
    \end{align*}
    Note that since $\lambda^*_n$ is in the interior and $|S_n| \ge \sqrt{2 V_n}$, we have from Lemma~\ref{lem:boundlambdastar} that 
    \[ \alpha_n |\lambda^*_n| \ge \frac{\alpha_n}{2 \alpha_n + \frac{V_n}{|S_n|}}\ge \frac{\alpha_n}{2\alpha_n +\sqrt{V_n/2}} \ge \frac{\alpha_n}{2\lrp{1 + \sqrt{V_n}}}.\]
    Using the above bound in~\eqref{eq:regretboundint3}, we further get
    \begin{align*} 
        R_n 
        &\le \frac{1}{2}+ \ln\frac{|\lambda^*_n|}{\rho_n} + \ln\lrp{\frac{4}{\ln\ln (6.6 e)}} \\
        &\qquad\qquad + \ln\ln\lrp{{\frac{14 e}{\alpha_n}}\lrp{1+\sqrt{V_n}}} + 2\ln\ln\ln\lrp{{\frac{14 e}{\alpha_n}}\lrp{1+\sqrt{V_n}}}.\numberthis\label{eq:regretboundint4}
    \end{align*}
    We now analyze the two cases in the definition of $\rho_n$ separately. 
    
    \noindent{\bf Case 1: $|\lambda^*_n| < \frac{1}{\alpha_n+\sqrt{1+V_n}}$. } Using the definition of $\rho_n$, we get
    \[ R_n \le \frac{1}{2} + \ln\lrp{\frac{4}{\ln\ln (6.6 e)}} + \ln\ln\lrp{{\frac{14 e}{\alpha_n}}\lrp{1+\sqrt{V_n}}} + 2\ln\ln\ln\lrp{\frac{14 e}{\alpha_n}\lrp{1+\sqrt{V_n}}}. \]

    \noindent{\bf Case 2: $|\lambda^*_n| \ge \frac{1}{\alpha_n+\sqrt{1+V_n}}$. } In this case, using the definition of $\rho_n$, we get
    \begin{align*} 
        R_n 
        &\le \frac{1}{2}+ \ln\frac{|\lambda^*_n|\sqrt{1+V_n}}{1-\alpha_n|\lambda^*_n|} + \ln\lrp{\frac{4}{\ln\ln (6.6 e)}} \\
        &\qquad\qquad + \ln\ln\lrp{\frac{14 e}{\alpha_n}\lrp{1+\sqrt{V_n}}} + 2\ln\ln\ln\lrp{\frac{14 e}{\alpha_n}\lrp{1+\sqrt{V_n}}}. \numberthis\label{eq:regboundcase1}
    \end{align*}
    Next, since $\frac{|S_n|}{V_n} \le \frac{\betal}{5} \le \frac{1}{5}$, we get the following from~\eqref{eq:ineqdeltat}: 
    \[ |\lambda^*_n| \le \frac{2|S_n|}{V_n} \le \frac{2}{5} \implies \frac{|\lambda^*_n|}{1-\alpha_n|\lambda^*_n|} \le \frac{2|S_n|}{V_n(1-\frac{2}{5})}  = \frac{10|S_n|}{3 V_n}. \]
    Using this in~\eqref{eq:regboundcase1}, we get that in this case,
    \begin{align*}
        R_n 
        &\le 1 + \ln\lrp{\frac{10|S_n|}{3\sqrt{\frac{4}{3}|S_n| + 2V_n}}.\frac{\sqrt{\frac{4}{3}|S_n| + 2V_n}}{\sqrt{V_n}}} + \ln\lrp{\frac{4}{\ln\ln (6.6 e)}} \\
        &\qquad\qquad \qquad + \ln\ln\lrp{\frac{14 e}{\alpha_n}\lrp{1+\sqrt{V_n}}} + 2\ln\ln\ln\lrp{\frac{14 e}{\alpha_n}\lrp{1+\sqrt{V_n}}}\\
        &\le 1 + \ln\lrp{\frac{20|S_n|}{3\sqrt{\frac{4}{3}|S_n| + 2V_n}}} + \ln\lrp{\frac{4}{\ln\ln (6.6 e)}} \\
        &\qquad\qquad\qquad + \ln\ln\lrp{\frac{14 e}{\alpha_n}\lrp{1+\sqrt{V_n}}} + 2\ln\ln\ln\lrp{\frac{14 e}{\alpha_n}\lrp{1+\sqrt{V_n}}}, \tag{$|S_n| \le V_n/5$}
    \end{align*}
    proving the regret bound in the lemma, assuming~\eqref{eq:ineqdeltat}. \\
    
    \noindent {\bf Proving~\eqref{eq:ineqdeltat}.} It essentially  follows from solving the upper bound on $|\lambda^*_n|$ in Lemma~\ref{lem:boundlambdastar}, and using the fact that $ \alpha_n V_n/|S_n| \ge  \betal V_n/|S_n| > 5$. A similar inequality was proven in  \citet[Pg. 448]{orabona2023tight}, but we present a proof here, for completeness. 

    From Lemma~\ref{lem:boundlambdastar}, 
    \[|\lambda^*_n| \le \frac{|S_n|}{V_n}(1+\beta_n |\lambda^*_n|)^2.\]
    Solving for the quadratic in $|\lambda^*_n|$, we either get a lower bound on or an upper bound on $|\lambda^*_n|$. However, since $\alpha_n V_n/|S_n| > 5$, we get that the lower bound obtained above, is greater than $1/(1-m_0)$, hence, invalid. Thus, $|\lambda^*_n|$ is upper bounded by the smaller root of the resulting quadratic giving
    \begin{align*} 
    \beta_n |\lambda^*_n| 
    &\le \frac{V_n}{2 \beta_n |S_n|} - 1 - \frac{1}{2} \sqrt{ \frac{V^2_n}{\beta^2_n|S^2_n|} - \frac{4 V_n}{\beta_n|S_n|} }\\
    &= \frac{ \beta_n |S_n|   }{V_n } \!+\! \lrp{\frac{\beta_n |S_n|}{V_n}}^2 \!\lrp{ \frac{V^3_n}{2\beta^3_n|S^3_n|}\! -\! \frac{V^2_n}{\beta^2_n|S^2_n|} \!-\! \frac{V^2_n}{2\beta^2_n|S^2_n|}\sqrt{ \frac{V^2_n}{\beta^2_n|S^2_n|} \!-\! \frac{4 V_n}{\beta_n |S_n|}  } \!-\! \frac{V_n}{\beta_n |S_n|} }\\
    &\le \frac{ \beta_n |S_n|   }{V_n } + 5\lrp{\frac{\beta_n |S_n|}{V_n}}^2,
    \end{align*}
    where the last inequality follows since the term multiplying $\beta^2_n|S^2_n|/V^2_n$ on the rhs is a non-increasing function of $V_n/(\beta_n|S_n|)$, and its value at $V_n /(\beta_n|S_n|) = 5$ is bounded from above by $5$. This completes the proof.
\end{proof} 
\begin{corollary}\label{cor:bounded_meddrift_loglog_Kalpha}
On the event $\mathcal E_\alpha$, there exist constants
$K_\alpha$ and $C>0$ such that for $n \ge 1$ and $ \sqrt{2V_n} \le |S_n| \le \betal V_n/5$, $R_n \le  K_\alpha +  C \ln \ln \left(1+V_n\right)$.
\end{corollary}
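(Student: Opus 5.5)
Starting from the bound of Lemma~\ref{lem:bounded_meddrift}, every term except $\ln\bigl(\frac{20|S_n|}{3\sqrt{\frac43|S_n|+2V_n}}\bigr)$ is already a constant plus $\ln\ln$ and $\ln\ln\ln$ of $\frac{14e}{\alpha_n}(1+\sqrt{V_n})$. Using $\alpha_n\ge\betal$ and $1+\sqrt{V_n}\le 2\sqrt{1+V_n}$, these terms are at most a constant (depending on $\betal$) plus $\frac12\ln\ln(1+V_n)$ up to additive constants. The $\ln\ln\ln$ term is dominated by a multiple of $\ln\ln(1+V_n)$ plus a constant, so it can be absorbed into $C$. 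The only work left is the $S_n$-dependent term, and here the event $\calE_\alpha$ enters.

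On $\calE_\alpha$ we have $\ln W_n\le\ln\frac1\alpha$ for all $n$, so $\ln W^*_n = R_n+\ln W_n\le R_n+\ln\frac1\alpha$. By Lemma~\ref{lem:lbwealthstar}, which is already used in the proof of Theorem~\ref{th:ville_gaussianmixture}, $\ln W^*_n\ge \frac{S_n^2}{\frac43|S_n|+2V_n}$. Set $x_n:=\frac{|S_n|}{\sqrt{\frac43|S_n|+2V_n}}$. Combining the two displays gives
\[ x_n^2 \le R_n + \ln\tfrac1\alpha \le A_n + \ln\tfrac{20}{3} + \ln x_n + \ln\tfrac1\alpha, \]
where $A_n$ collects the $V_n$-only terms bounded in the first paragraph. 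This gives a self-bounding inequality in $x_n$.

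Since $\ln x_n\le x_n^2/2$, the inequality yields $x_n^2\le 2(A_n+\ln\frac{20}{3}+\ln\frac1\alpha)$. Note also that $x_n\ge \frac{\sqrt{2V_n}}{\sqrt{(4/15)V_n+2V_n}}\cdot$ const $>0$ in this regime, so the logarithm is harmless. Hence
\[ \ln x_n\le \tfrac12\ln\bigl(2A_n+2\ln\tfrac{20}{3}+2\ln\tfrac1\alpha\bigr). \]
This is only a $\ln\ln\ln(1+V_n)$-order quantity plus constants depending on $\alpha$. Substituting back into the bound of Lemma~\ref{lem:bounded_meddrift} gives $R_n\le K_\alpha+C\ln\ln(1+V_n)$, with $K_\alpha$ absorbing $\ln\frac1\alpha$-dependent constants and the treatment of small $V_n$.

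The main obstacle is bookkeeping rather than ideas. First, $\ln\ln(1+V_n)$ may be near zero or negative for small $V_n$. This can be handled by noting that the arguments $\frac{14e}{\betal}(1+\sqrt{V_n})\ge 14e$ keep every iterated log positive and bounded below, and by absorbing the small-$V_n$ range into $K_\alpha$. Second, one must check that a bound of the form $\ln(a+b\ln\ln(\cdot))$ is $\le$ const $+\,\ln\ln(1+V_n)$ up to a constant factor, which follows from $\ln(a+y)\le \ln(1+a)+y$ for $y\ge0$.
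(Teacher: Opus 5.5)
Your argument is correct and is essentially the paper's proof. You lower-bound $\ln W^*_n$ by $x_n^2$ via Lemma~\ref{lem:lbwealthstar}, upper-bound it by $R_n+\ln\frac1\alpha$ on $\calE_\alpha$ together with Lemma~\ref{lem:bounded_meddrift}, solve the resulting self-bounding inequality using $\ln x\le c\,x^2$, and substitute back. Your last step, $\ln x_n\le\frac12\ln(2B_n)$, keeps the $x_n$-contribution at $\ln\ln\ln$ order. That is slightly sharper than the paper's second use of $\ln x\le x^2/4$, which inflates the $\ln\ln$ coefficient to $4/3$ and puts $\frac13\ln\frac1\alpha$ into $K_\alpha$.

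Two small bookkeeping corrections, neither affecting the conclusion. First, $\ln\ln\bigl(c\sqrt{1+V_n}\bigr)=\ln\bigl(\ln c+\tfrac12\ln(1+V_n)\bigr)$ is $\ln\ln(1+V_n)+O(1)$, not $\tfrac12\ln\ln(1+V_n)+O(1)$; this is harmless since $C$ is free. Second, there is no small-$V_n$ range to absorb: for $V_n>0$ the condition $\sqrt{2V_n}\le\betal V_n/5$ already forces $V_n\ge 50/\betal^2\ge 200$.
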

\begin{proof}
    From Lemma~\ref{lem:lbwealthstar},
    \begin{align*}
        \ln W^*_n \ge \frac{S^2_n}{\frac{4}{3} |S_n| + 2V_n}. 
    \end{align*}
    Using the above bound, on $\calE_\alpha,$ we have
    \begin{align*} \frac{S^2_n}{\frac{4}{3}|S_n| + 2V_n} \le \ln W^*_n 
    &\le \ln\frac{1}{\alpha} + 1 + \ln\frac{20}{3} + \ln\lrp{ \frac{|S_n|}{\sqrt{\frac{4}{3}|S_n| + 2V_n}}} + \ln\lrp{\frac{4}{\ln\ln(6.6e)}} \\
    &\qquad\qquad+ \ln\ln\lrp{\frac{14 e}{\alpha_n}\lrp{1+\sqrt{V_n}}} + 2\ln\ln\ln\lrp{\frac{14 e}{\alpha_n}\lrp{1+\sqrt{V_n}}}\\
    &\le \ln\frac{1}{\alpha} + 1 + \ln\frac{20}{3} + \frac{1}{4} \frac{S^2_n}{{\frac{4}{3}|S_n| + 2V_n}} + \ln\lrp{\frac{4}{\ln\ln(6.6e)}} \\
    &\qquad\qquad + \ln\ln\lrp{\frac{14 e}{\alpha_n}\lrp{1+\sqrt{V_n}}} + 2\ln\ln\ln\lrp{\frac{14 e}{\alpha_n}\lrp{1+\sqrt{V_n}}},
    \end{align*}
    where the last inequality follows since $\ln(x) \le {x^2}/{4}$. On rearranging the right-most and left-most inequalities above, we get
    \begin{align*}
        \frac{S^2_n}{\frac{4}{3}|S_n| + 2V_n} &\le \frac{4}{3}\ln\frac{1}{\alpha} + \frac{4}{3} + \frac{4}{3}\ln\frac{20}{3} + \frac{4}{3} \ln\lrp{\frac{4}{\ln\ln(6.6e)}} \\
        &\qquad\qquad + \frac{4}{3}\ln\ln\lrp{\frac{14 e}{\alpha_n}\lrp{1+\sqrt{V_n}}} + \frac{8}{3}\ln\ln\ln\lrp{\frac{14 e}{\alpha_n}\lrp{1+\sqrt{V_n}}}.
    \end{align*}
    Using this and that $\ln(x) \le x^2/4$ in the bound in Lemma~\ref{lem:bounded_meddrift}, we get that on $\calE_\alpha$, 
    \begin{align*}
        R_n &\le 1 + \ln\frac{20}{3}+\frac{1}{4}.{\frac{|S^2_n|}{{\frac{4}{3}|S_n| + 2V_n}}} + \ln\lrp{\frac{4}{\ln\ln (6.6 e)}}\\
        &\qquad\qquad\qquad+ \ln\ln\lrp{\frac{14 e}{\alpha_n}\lrp{1+\sqrt{V_n}}} + 2\ln\ln\ln\lrp{\frac{14 e}{\alpha_n}\lrp{1+\sqrt{V_n}}}\\
        &\le \frac{1}{3} \ln\frac{1}{\alpha} + \frac43 + \frac{4}{3}\ln\frac{20}{3} + \frac{4}{3}\ln\lrp{\frac{4}{\ln\ln (6.6 e)}}\\
        &\qquad\qquad\qquad+ \frac{4}{3}\ln\ln\lrp{\frac{14 e}{\alpha_n}\lrp{1+\sqrt{V_n}}} + \frac{8}{3}\ln\ln\ln\lrp{\frac{14 e}{\alpha_n}\lrp{1+\sqrt{V_n}}},
    \end{align*}
    proving the lemma statement. 
\end{proof}

\begin{lemma}\label{lem:lbwealthstar}
    For $n\in\N$, 
    \[ \ln W^*_n \ge \sup\limits_{\lambda\in[-1,1]} \ln W_n(\lambda) \ge \frac{S^2_n}{\frac{4}{3}|S_n| + 2V_n}. \]
\end{lemma}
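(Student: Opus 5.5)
The first inequality is immediate: $m_0\in(0,1)$ gives $\tfrac{1}{m_0}>1$ and $\tfrac{1}{1-m_0}>1$, so $[-1,1]\subset I_{m_0}$. Hence $\ln W^*_n=\sup_{\lambda\in I_{m_0}}\ln W_n(\lambda)\ge \sup_{\lambda\in[-1,1]}\ln W_n(\lambda)$. For the second inequality we need a pointwise lower bound on $\ln(1-x)$ for $x=\lambda(X_i-m_0)$, which lies in $[-1,1]$ whenever $|\lambda|\le 1$. We then choose a specific $\lambda\in[-1,1]$ to make the bound large.

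The tool is the Freedman/Bernstein-type inequality
\[
\ln(1+y)\ \ge\ y-\frac{y^2}{2(1-|y|/3)}\qquad\text{for } |y|<1 .
\]
An equivalent form: for $y\ge -c$ one has $\ln(1+y)\ge y-\tfrac{y^2}{2(1-c/3)}$. This can be verified from the series expansion: the negative-$y$ side is the binding one, and there $\sum_{k\ge2}|y|^k/k\le \tfrac{y^2}{2}\sum_{j\ge0}(|y|/3)^j$ because $2/k\le 3^{-(k-2)}$. Apply it with $y=-\lambda(X_i-m_0)$, $|y|\le|\lambda|=:c$, and sum over $i$:
\[
\ln W_n(\lambda)\ \ge\ -\lambda S_n-\frac{\lambda^2 V_n}{2(1-|\lambda|/3)} .
\]
Take $\lambda=-\operatorname{sign}(S_n)\,c$ with $c\in[0,1)$. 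The objective becomes $c|S_n|-\tfrac{c^2V_n}{2(1-c/3)}$.

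The natural choice is $c=\dfrac{|S_n|}{V_n+|S_n|/3}$. It satisfies $c<1$ as long as $\tfrac{2}{3}|S_n|<V_n$. With this choice, $1-c/3=\dfrac{V_n}{V_n+|S_n|/3}\cdot\dfrac{V_n+|S_n|/3-|S_n|/3\cdot(\ldots)}{V_n}$, and the computation should reduce to the standard Bernstein value
\[
\frac{S_n^2}{2(V_n+|S_n|/3)}\cdot(\text{factor}) .
\]
If the algebra is awkward, I will instead plug in $c=\dfrac{|S_n|}{V_n+\tfrac{2}{3}|S_n|}$ and check directly that it yields at least $\dfrac{S_n^2}{2V_n+\tfrac{4}{3}|S_n|}$. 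For that check: $1-c/3\ge \dfrac{V_n+\tfrac13|S_n|}{V_n+\tfrac23|S_n|}$, so the quadratic term is at most
\[
\frac{S_n^2V_n}{2\bigl(V_n+\tfrac23|S_n|\bigr)\bigl(V_n+\tfrac13|S_n|\bigr)}\le \frac{S_n^2}{2\bigl(V_n+\tfrac23|S_n|\bigr)} .
\]
Hence the objective is at least $\dfrac{S_n^2}{V_n+\tfrac23|S_n|}-\dfrac{S_n^2}{2(V_n+\tfrac23|S_n|)}=\dfrac{S_n^2}{2V_n+\tfrac43|S_n|}$, as desired.

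The main obstacle is ensuring $c\le 1$, and it is resolved as follows. Since $|X_i-m_0|\le 1$, we have $V_n\ge\sum_i|X_i-m_0|^2$, but $|S_n|$ may exceed $V_n$; for example, all $X_i=1$ with $m_0$ small gives $|S_n|\approx n$ and $V_n\approx n$. The real constraint is $|S_n|\le\sum_i|X_i-m_0|$, which does not bound $|S_n|$ by $V_n$. So in the case $c>1$, i.e.\ $\tfrac13|S_n|>V_n$, I will take $|\lambda|=1$ directly. The Bernstein bound still applies in the limiting form $\ln(1+y)\ge y-\tfrac{3y^2}{4}$ for $|y|\le1$; if $y=-1$ occurs, the bound $\ln W_n(\lambda)=-\infty$ breaks it, so I would use $|\lambda|=1-\epsilon$ and let $\epsilon\to0$, or take a supremum. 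This gives $\ln W_n\ge |S_n|-\tfrac34 V_n\ge |S_n|-\tfrac14|S_n|$. That is at least $\tfrac{S_n^2}{\tfrac43|S_n|}=\tfrac34|S_n|\ge \tfrac{S_n^2}{\tfrac43|S_n|+2V_n}$, which finishes the proof. The case $V_n=0$, where $S_n=0$, is trivial.
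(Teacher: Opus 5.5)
The first inequality is fine, but the second rests on a false lemma. The bound $\ln(1+y)\ge y-\frac{y^2}{2(1-|y|/3)}$ fails for every $y\in(-1,0)$, which is exactly the side you identify as binding. Expanding, $\ln(1-u)=-u-\frac{u^2}{2}-\frac{u^3}{3}-\cdots$, whereas $-u-\frac{u^2}{2(1-u/3)}=-u-\frac{u^2}{2}-\frac{u^3}{6}-\cdots$. Numerically, at $y=-0.1$ the left side is $-0.10536$ and the right side is $-0.10517$.

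Your series check actually refutes the bound: termwise you would need $\frac1k\le\frac12\,3^{-(k-2)}$, which already fails at $k=3$. The ``equivalent form'' for $y\ge -c$ fails at $y=-c$. Your fallback $\ln(1+y)\ge y-\frac34 y^2$, used for the large-drift case, fails at $y=-\frac12$. The $1/3$ Bernstein improvement is valid only for $y\ge 0$. For $y\ge -c$, the sharp statement comes from the fact that $y\mapsto(\ln(1+y)-y)/y^2$ is increasing on $(-1,\infty)$. It gives $\ln(1+y)\ge y+\frac{y^2}{c^2}(\ln(1-c)+c)$, which is the paper's inequality $\ln(1-\lambda x)\ge -\lambda x+x^2(\ln(1-|\lambda|)+|\lambda|)$ for $|x|\le 1$, $|\lambda|<1$.

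This is not just a flaw in the justification: your chosen bet $c=|S_n|/(V_n+\frac23|S_n|)$ really falls short of the target. With the correct inequality it only guarantees $c|S_n|+V_n(\ln(1-c)+c)$. For $|S_n|/V_n=\frac12$, so $c=0.375$, this is about $0.0925\,V_n$, below the target $0.09375\,V_n$. The shortfall is essentially attained by real data. Take $m_0=0.999$, with $1000$ observations equal to $0$ and $1{,}498{,}750$ equal to $1$. Then $|S_n|/V_n\approx\frac12$, and at your bet $\lambda\approx-0.375$ one gets $\ln W_n(\lambda)\approx 92.5$, while $\frac{S_n^2}{\frac43|S_n|+2V_n}\approx 93.7$.

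The fix is to optimize the correct bound rather than plug in a Bernstein-style $c$. Maximizing $c|S_n|+V_n(\ln(1-c)+c)$ over $c\in[0,1)$ gives $c=|S_n|/(V_n+|S_n|)$. This is always below $1$, so your separate case $V_n<\frac13|S_n|$ disappears. The maximum value is $|S_n|-V_n\ln(1+|S_n|/V_n)$. The elementary inequality $\ln(1+x)\le x\frac{6+x}{6+4x}$ for $x\ge 0$ then gives exactly $\frac{S_n^2}{\frac43|S_n|+2V_n}$. There is no slack for a suboptimal $c$: $a-\ln(1+a)$ and $\frac{a^2}{2+\frac43 a}$ agree through order $a^3$. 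Your choice of $c$ loses at order $a^4$, coefficient $\frac{7}{36}$ versus the needed $\frac{2}{9}$, so it fails on near-extremal data for every small drift ratio $a=|S_n|/V_n$.
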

\begin{proof}
    First, observe that for $|\lambda| < 1$, 
    \[ \ln(1 - \lambda x  ) \ge -\lambda x + x^2 \lrp{\ln(1-|\lambda|) + |\lambda| }. \]
    Using this, for any $|\lambda| < 1$, we have
    \begin{align*} 
    \ln W_n(\lambda) 
    &= \sum\limits_{i=1}^n \ln(1-\lambda(X_i - m_0)) \\
    & \ge -\lambda S_n + V_n \lrp{ \ln(1-|\lambda|) + |\lambda| },
    \end{align*}
    and hence, 
    \[ \sup\limits_{\lambda\in (-1,1)} \ln W_n(\lambda) \ge V_n \sup\limits_{\lambda \in (-1,1)}~ \lrp{  \frac{-\lambda S_n}{V_n} +  \ln(1-|\lambda|) + |\lambda| }. \]
    It is easy to verify that the maximizer on the rhs above is $\tilde{\lambda}_n = - S_n / (V_n + |S_n|)$. On substituting this in the above inequality, we get
    \begin{align*} \sup_{\lambda\in [-1,1]}\ln W_n(\lambda) &\ge -\tilde{\lambda}_n S_n + V_n \ln(1-|\tilde{\lambda}_n|) + V_n|\tilde{\lambda}_n|= |S_n| - V_n \ln\lrp{1 + \frac{|S_n|}{V_n}}\\
    &\ge   \frac{S^2_n}{\frac{4}{3}|S_n| + 2V_n },
    \end{align*}
    where, to get the last inequality, we use $\ln(1+|x|) \le |x| \frac{6 + |x|}{6 + 4|x|}$.
\end{proof}

\begin{lemma}\label{lem:bounded_largedrift}
    For $|S_n| \ge \sqrt{2 V_n}$ and $\betal V_n < 5 |S_n|$, we have the following:
    \begin{align*}
        R_n
        &\le \frac{1}{2} \ln W^*_n + \ln 4 + \ln\ln(6.6 e) +  \ln\ln\ln(6.6 e) + 2\ln\lrp{2\alpha_n + \tfrac{5}{\betal}}.\numberthis\label{eq:rnbounduncond} 
    \end{align*}
    Moreover, on $\mathcal E_\alpha$, 
    \[R_n \le \ln\frac{1}{\alpha} + 2\lrp{\ln 4 + \ln\ln(6.6 e) + \ln\ln\ln(6.6 e) + 2\ln\lrp{2\alpha_n + \tfrac{5}{\betal}}}.\]
\end{lemma}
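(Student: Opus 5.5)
The starting point is Lemma~\ref{lem:regbound1}, which gives $R_n \le \frac12\ln W^*_n - \ln(\pi(\lambda^*_n)|\lambda^*_n|)$. The $\frac12\ln W^*_n$ term is already present in~\eqref{eq:rnbounduncond}. It therefore suffices to show that
\[
-\ln\bigl(\pi(\lambda^*_n)|\lambda^*_n|\bigr) \le \ln 4 + \ln\ln(6.6e) + \ln\ln\ln(6.6e) + 2\ln\bigl(2\alpha_n + \tfrac{5}{\betal}\bigr).
\]
Write $u := \alpha_n|\lambda^*_n| \in (0,1]$. Then $\pi(\lambda^*_n)|\lambda^*_n| = \ln\ln(6.6e)\big/\bigl(2\ln(6.6e/u)[\ln\ln(6.6e/u)]^2\bigr)$. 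The task is thus reduced to a lower bound on $u$, i.e.\ on $|\lambda^*_n|$, that holds in the large-drift regime.

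\textbf{Lower bound on $|\lambda^*_n|$.} I would split on the location of $\lambda^*_n$. If $\lambda^*_n \in \operatorname{Bd}(I_{m_0})$, then $u = 1$ and the bound is immediate, since $\ln(6.6e/u)$ is then a fixed constant. If $\lambda^*_n$ is interior, Lemma~\ref{lem:boundlambdastar} gives $|\lambda^*_n| \ge |S_n|/(V_n + 2\alpha_n|S_n|) = 1/(V_n/|S_n| + 2\alpha_n)$. The hypothesis $\betal V_n < 5|S_n|$ gives $V_n/|S_n| < 5/\betal$, so $|\lambda^*_n| \ge 1/(2\alpha_n + 5/\betal)$. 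Hence $1/u \le (2\alpha_n + 5/\betal)/\alpha_n$.

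\textbf{Converting to the stated form.} This step is an elementary but fiddly inequality. Let $c := 2\alpha_n + 5/\betal \ge 5$ and $x := 1/u \le c/\alpha_n$. I need
\[
2\ln(6.6e\,x)\,[\ln\ln(6.6e\,x)]^2 \le 4\,[\ln\ln(6.6e)]^2\ln\ln\ln(6.6e)\,c^2.
\]
The left side grows only polylogarithmically in $x$. I would bound $\ln(6.6e\,x) \le \ln(6.6e) + \ln x$ and use $\ln y \le y$-type estimates to absorb the $x$-dependence into a factor $c^2$. Here $1/\alpha_n \le 1/\betal < c$ is what lets $\ln(c/\alpha_n)$ be dominated by a multiple of $c$. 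The exact constants ($\ln 4$, $\ln\ln\ln(6.6e)$) are where I expect the main obstacle: verifying the numerics so that the $\ln(6.6e)$ prefactor and the squared double-log are dominated by $c^2$ times the given constants. If it comes out tight, I would use $c \ge 5$ and $\ln y \le y$ generously.

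\textbf{On $\calE_\alpha$.} Since $\ln W_n \le \ln(1/\alpha)$, we have $R_n = \ln W^*_n - \ln W_n$, which gives $\ln W^*_n \le R_n + \ln(1/\alpha)$. Substituting into~\eqref{eq:rnbounduncond} yields $R_n \le \frac12 R_n + \frac12\ln(1/\alpha) + K$, where $K$ denotes the constant part. Rearranging gives $R_n \le \ln(1/\alpha) + 2K$, which is exactly the claimed bound.
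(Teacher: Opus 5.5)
Your skeleton matches the paper's proof: Lemma~\ref{lem:regbound1}, the boundary case via $\alpha_n|\lambda^*_n|=1$, the interior bound $|\lambda^*_n|\ge 1/(2\alpha_n+5/\betal)$ from Lemma~\ref{lem:boundlambdastar}, and the rearrangement on $\calE_\alpha$ (your $R_n\le\frac12R_n+\frac12\ln\frac1\alpha+K$ is equivalent to the paper's). The gap is the conversion step, which you leave unproved and have set up incorrectly. Exponentiating the required bound $\ln 2+\ln\ln(6.6e\,x)+2\ln\ln\ln(6.6e\,x)-\ln\ln\ln(6.6e)\le \ln4+\ln\ln(6.6e)+\ln\ln\ln(6.6e)+2\ln c$ gives
\[
2\ln(6.6e\,x)\,[\ln\ln(6.6e\,x)]^2\le 4\ln(6.6e)\,[\ln\ln(6.6e)]^2\,c^2 .
\]
You wrote $\ln\ln\ln(6.6e)\approx0.059$ where $\ln(6.6e)\approx2.89$ belongs. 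Your version is false on admissible paths. Take $m_0=\frac12$ and $\{0,1\}$-valued data with $n=10^4$ and $5260$ ones. Then $S_n=260$ and $V_n=2500$, so both hypotheses hold, and $\lambda^*_n=-0.104$ is interior, with $c=11$ and $x\approx19.2$. Your left side is about $36.4$ while your right side is about $31.8$, so no generous use of $\ln y\le y$ will prove it. With the correct constant the inequality holds with a large margin. The right side is at least about $13c^2\ge1298$ since $c>10$, while $x\le c/\betal\le c^2/5$ keeps the left side polylogarithmic in $c$. The ``tight numerics'' you anticipated are an artifact of the slip.

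The paper avoids this comparison by bounding the two factors of $\pi(\lambda^*_n)|\lambda^*_n|$ separately, rather than pushing the lower bound on $u$ through the composite function. Since $\pi$ is radially decreasing, $\pi(\lambda^*_n)$ is at least its value at the endpoint of $I_{m_0}$ on the same side as $\lambda^*_n$. That value is $\alpha_n/\bigl(2\ln(6.6e)\ln\ln(6.6e)\bigr)\ge\betal/\bigl(2\ln(6.6e)\ln\ln(6.6e)\bigr)$. Combined with $|\lambda^*_n|\ge 1/c$, this gives $-\ln\bigl(\pi(\lambda^*_n)|\lambda^*_n|\bigr)\le\ln2+\ln\ln(6.6e)+\ln\ln\ln(6.6e)+\ln\frac1\betal+\ln c$. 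Then $\frac1\betal\le c$ yields the $2\ln c$ term. This accounts directly for the stated constants ($\ln4$ with slack, $\ln\ln(6.6e)$, $\ln\ln\ln(6.6e)$, and the square on $c$). Your route recovers them only after the corrected polylog-versus-quadratic verification.
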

\begin{proof}
    From Lemma~\ref{lem:regbound1}, we have
    \begin{align*}
        R_n
        &\le \frac{1}{2} \ln W^*_n - \ln\lrp{\pi(\lambda^*_n)|\lambda^*_n|}. \numberthis\label{eq:rnbound}
    \end{align*}
    Either $\lambda^*_n$ is on the boundary or in the interior. We will handle the two cases separately. \\
    \noindent{\bf Case 1 (Boundary). } In this case, $\lambda^*_n 
    \in \{-\frac{1}{m_0}, \frac{1}{1-m_0}\}$, and the bound in~\eqref{eq:rnbound} becomes
    \begin{align*}
        R_n \le \frac{1}{2}\ln W^*_n + \ln\lrp{\frac{4}{\ln\ln (6.6 e)}} + \ln\ln(6.6 e) + 2 \ln\ln\ln(6.6 e),
    \end{align*}
    proving the first inequality in the lemma in this case.
    
    \noindent{\bf Case 2 (Interior).} In this case, $\lambda^*_n \in (-\frac{1}{m_0}, \frac{1}{1-m_0})$. From Lemma~\ref{lem:boundlambdastar}, we also have
    \begin{align*}
        |\lambda^*_n| \ge \frac{1}{2\alpha_n + \frac{V_n}{|S_n|}} \ge \frac{1}{2\alpha_n + \frac{5}{\betal}} \tag{$V_n \le \frac{5}{\betal} |S_n|$}.
    \end{align*}
    Using this in~\eqref{eq:rnbound},  
    \begin{align*}
        R_n
        &\le \frac{1}{2} \ln W^*_n - \ln\lrp{\frac{\pi(\lambda^*_n)}{2\alpha_n + \frac{5}{\betal}}} \\
        &\le \frac{1}{2} \ln W^*_n + \ln 4 + \ln\ln(6.6 e) + \ln\ln\ln(6.6 e) + \ln\tfrac{1}{\betal} + \ln\lrp{2\alpha_n + \tfrac{5}{\betal}}\\
        &\le \frac{1}{2} \ln W^*_n + \ln 4 + \ln\ln(6.6 e) + \ln\ln\ln(6.6 e) + 2\ln\lrp{2\alpha_n + \tfrac{5}{\betal}},  
    \end{align*}
    proving the first inequality in the lemma in this case. 

    We now prove the bound on $\mathcal E_\alpha$. To this end, recall that $R_n = \ln W^*_n - \ln W_n$. Thus, from~\eqref{eq:rnbounduncond}, 
    \[ \frac{1}{2} \ln W^*_n \le \ln W_n  + \ln 4 + \ln\ln(6.6 e) +  \ln\ln\ln(6.6 e) + 2\ln\lrp{2\alpha_n + \tfrac{5}{\betal}}, \]
    which on $\calE_\alpha$, gives
    \begin{align*} 
        R_n 
        &\le \ln W_n + 2\lrp{\ln 4 + \ln\ln(6.6 e) + \ln\ln\ln(6.6 e) + 2\ln\lrp{2\alpha_n + \tfrac{5}{\betal}}} \\
        &\le \ln\frac{1}{\alpha}  + 2\lrp{\ln 4 + \ln\ln(6.6 e) + \ln\ln\ln(6.6 e) + 2\ln\lrp{2\alpha_n + \tfrac{5}{\betal}}}. 
        \end{align*}
    This completes the proof.
\end{proof}

\subsection{Proof of~\Cref{th:lilasregretV}}\label{app:proof_lilasregret}
\begin{proof} Since $R_n = \ln W^*_n - \ln W_n$, from~\eqref{eq:pathwiseregretrobbins} we get a lower bound on $\ln W_n$. Further lower bounding $\ln W^*_n$ using Lemma~\ref{lem:lbwealthstar},  using $\ln(x) \le x-1$, and completing the squares, we get the following path-wise lower bound on $\ln W_n$:
\[   
        \ln W_n  \ge  
        \begin{cases}
        {\frac{S^2_n/V_n}{\frac43(|S_n|/V_n) + 2 }}  -  \ln\lrp{\frac{8}{\ln\ln(6.6e)}}\\
        \quad - \ln\ln\lrp{\frac{14e \betau}{\betal}\sqrt{1+V_n}} - \frac{2}{\betal^2} - 1 \\
        \quad - 2\ln\ln\ln\lrp{\frac{14 e \betau}{\betal}\sqrt{1+V_n}}, & \text{if } |S_n| < \sqrt{2V_n} ~\&~ \lambda^*_n \in I^\circ_{m_0} \\
        {\frac{S^2_n/V_n}{\frac43(|S_n|/V_n) + 2}} - \frac{1}{\betal^2}\\
        \quad + \ln\lrp{\frac{\ln\ln(6.6e)}{4 \ln(6.6e)[\ln\ln(6.6e)]^2}}, &\text{if } |S_n| < \sqrt{2V_n}~\&~ \lambda^*_n \in \operatorname{Bd}(I_{m_0})\\
        \lrp{\frac{|S_n|/\sqrt{V_n}}{\sqrt{\frac{4}{3}(|S_n|/V_n) + 2}} - \frac{1}{2} }^2 - \ln\frac{20}{3} \\
        \qquad - \frac14  - \ln\lrp{\frac{4}{\ln\ln (6.6 e)}} \\ 
        \qquad - \ln\!\ln\!\lrp{\frac{14 e}{\betal}\lrp{1+\sqrt{V_n}}}  \\
        \qquad - 2\ln\!\ln\!\ln\!\lrp{\frac{14 e}{\betal}\lrp{1+\sqrt{V_n}}}, &\text{if } \sqrt{2V_n} \le |S_n| \le \frac{\betal}{5}V_n\\
        \frac{V_n}{2}\frac{S^2_n/V^2_n}{\frac{4}{3}({|S_n|}/{V_n}) + 2} -  \ln 4 - \ln\ln(6.6 e)\\
        ~~ -  \ln\ln\ln(6.6 e)  - 2\ln\lrp{2\betau + \tfrac{5}{\betal}}, &\text{if } \sqrt{2V_n} \le |S_n|~\&~ \frac{\betal}{5}V_n < |S_n|.
        \end{cases}\numberthis\label{eq:lbonWn}
    \]
We now argue that on $\widetilde{\calE_0}$, the last branch occurs only finitely many times.
First, observe that on this branch, the first term on the right-hand side is monotonically increasing in $|S_n|/V_n$. Since $|S_n|/V_n > \tfrac{\betal}{5} $, we can further lower bound as
\[ \ln W_n \ge \frac{V_n}{2}. \frac{\betal^2}{25 (\tfrac{4\betal}{15} + 2 )} - \ln 4 - \ln\ln(6.6e)- \ln\ln\ln(6.6e) - 2\ln\lrp{2\betau + \tfrac{5}{\betal}}. \]
Now, recall that on $\widetilde{\calE_0}$, $V_n \uparrow \infty$ and $\limsup_n W_n <\infty$. From this and the above inequality, we see that on $\widetilde{\calE_0}$, if $\limsup_n \frac{|S_n|}{V_n} > \tfrac{\betal}{5}$ and $\limsup_n \frac{|S_n|}{\sqrt{V_n}} \ge \sqrt{2}$ (i.e., the last branch occurs infinitely often), then $\limsup_n\ln W_n = \infty$, which leads to a contradiction. 

Thus, we only focus on the first three cases (i.e., $n$ such that either $\limsup_n \frac{|S_n|}{V_n} \le \tfrac{\betal}{5}$ or $\limsup_n |S_n| < \sqrt{2 V_n}$), henceforth. Now, on $\widetilde{\calE_0}$, dividing both sides of~\eqref{eq:lbonWn} by $V_n$ and taking limit as $n\to\infty$ we get $|S_n|/V_n \to 0$. Similarly, dividing by $\ln\ln V_n$ instead, and taking the limit as $n\to\infty$, we get 
\[\limsup_n \frac{|S_n|}{\sqrt{2 V_n\ln\ln V_n}} \le 1. \numberthis\label{eq:snlil} \]

Finally, from~\eqref{eq:pathwiseregretrobbins}, using $\ln(x) \le x -1 $, we also have the following eventually on $\widetilde{\calE_0}$ (recall, last branch only occurs finitely many times):

    \[   
        R_n  \le 
        \begin{cases}
        \frac{2}{\betal^2} + 1 +  \ln\lrp{\frac{8}{\ln\ln(6.6e)}} + \ln\ln\lrp{\frac{14e \betau}{\betal}\sqrt{1+V_n}}\\\quad + 2\ln\ln\ln\lrp{\frac{14 e \betau}{\betal}\sqrt{1+V_n}}, & \text{if } |S_n| < \sqrt{2V_n} ~\&~ \lambda^*_n \in I^\circ_{m_0} \\
        \frac{1}{\betal^2} - \ln\lrp{\frac{\ln\ln(6.6e)}{4 \ln(6.6e)[\ln\ln(6.6e)]^2}}, &\text{if } |S_n| < \sqrt{2V_n}~\&~ \lambda^*_n \in \operatorname{Bd}(I_{m_0})\\
        \ln\lrp{\frac{4}{\ln\ln (6.6 e)}} + \ln\ln\lrp{\frac{14 e}{\betal}\lrp{1+\sqrt{V_n}}} \\
        \quad + {\frac{20 |S_n|}{3 \sqrt{\frac43 |S_n| + 2 V_n}} }  + 2\ln\ln\ln\lrp{\frac{14 e}{\betal}\lrp{1+\sqrt{V_n}}}, &\text{if } \sqrt{2V_n} \le |S_n| \le \frac{\betal}{5}V_n
        \end{cases}
    \]

Dividing the above inequality by $\ln\ln V_n$, and using~\eqref{eq:snlil} in the  bound, on $\widetilde{\calE_0}$, $R_n \le \ln\ln V_n (1+o(1))$, eventually. 
\end{proof}

\section{Proof of~\Cref{prop:BOB}}\label{app:proof_prop:BOB}
\begin{proof}
Recall, $W_n = s_0 W^{(1)}_n + (1-s_0)W^{(2)}_n$, and hence, 
\[
W_n \ge s_0 W_n^{(1)}
\qquad\text{and}\qquad
W_n \ge (1-s_0) W_n^{(2)}.\numberthis\label{eq:wealthlb}
\]
Thus, we have $\ln W_n \ge \max\{\ln(s_0 W_n^{(1)}),\ln( (1-s_0) W_n^{(2)})\}$, and therefore,
\begin{align*}
R_n := \ln W_n^* - \ln W_n
&\le
\min\lrset{\ln W^*_n - \ln W^{(1)}_n - \ln s_0, \ln W^*_n - \ln W^{(2)}_n-\ln(1-s_0)}\\
&=  \min\lrset{R^{(1)}_n + \ln\lrp{\tfrac1{s_0}}, R_n^{(2)}+\ln\lrp{\tfrac1{1-s_0}}},
\end{align*}
implying the bound in~\eqref{eq:combregretbound}. \\

\noindent Next, from~\eqref{eq:wealthlb}, on $\calE_\alpha$, we also have
\[ \ln(1-s_0) +  \sup_{n\ge 1}   \ln W^{(2)}_n \le \ln \tfrac{1}{\alpha}. \]
Thus, for $\calE^{(2)}_{\alpha} := \{ \sup_{n\ge 1} W^{(2)}_n \le \tfrac{1}{\alpha} \} $,  we have $\calE_\alpha \subseteq \calE^{(2)}_{(1-s_0) \alpha}$. Combining this with~\eqref{eq:combregretbound} and Theorem~\ref{th:bddregretbound}, on $\calE_\alpha$, there exist constants $K_\alpha$, $C_1 > 0$, and $C_2 > 0$ such that 
\[ R_n \le R^{(2)}_n + \ln\tfrac{1}{1-s_0} \le K_{(1-s_0) \alpha} + C_1 \ln \ln (1+V_n) + C_2 \ln\ln\ln(1+V_n),\]
proving~\eqref{eq:lil_agg}.\\

\noindent Next, observe from~\eqref{eq:wealthlb} that
\[
\frac{\ln W_n}{n} \ge \frac{\ln s_0}{n} + \frac{\ln W^{(1)}_n}{n} \quad \text{ and }\quad \frac{\ln W_n}{n}  \ge \frac{\ln (1-s_0)}{n} + \frac{\ln W^{(2)}_n}{n}.
\]
Taking \(n\to\infty\), 
\[
G:=\liminf_{n\to\infty}\frac{\ln W_n}{n}
\ge
\max\left\{
\liminf_{n\to\infty}\frac{\ln W_n^{(1)}}{n},
\liminf_{n\to\infty}\frac{\ln W_n^{(2)}}{n}
\right\},
\]
we get the first part of~\eqref{eq:combgrowthrate}. Further,
since $V_n \to \infty$ almost surely under the stochasticity assumptions made, we also get the second part of~\eqref{eq:combgrowthrate} similarly, completing the proof.
\end{proof}

\section{The Bounded Betting Game}\label{app:bettinggame}

Consider a game between two risk-neutral players: a buyer (named Skeptic) and a seller (named Forecaster) of bets. In this game, Forecaster believes that the conditional distribution of the unseen outcome is some $P\in \mathcal P[0,1]$ with mean $m_P=m_0$. Skeptic is skeptical of  Forecaster's belief about the mean. He instead believes that its conditional distribution is $Q\in\mathcal P[0,1]$ with mean $m_Q \ne m_0$. 

Let $I_{m_0} = [-\tfrac{1}{m_0}, \tfrac{1}{1-m_0}]$. For every $\lambda\in I_{m_0}$, Forecaster sells a bet $\lambda$ that pays back $(1-\lambda(X-m_0))$ dollars when the data $X$ is revealed, for every dollar placed on $\lambda$ (before observing the data). It is well understood that bets of this form are the only set of admissible bets against the composite set of bounded null distributions with mean $m_0$~\citep{larsson2025variables}. Note that any such bet is fair from Forecaster's viewpoint, because the expected value of $(1-\lambda(X-m_0))$ (the amount Forecaster pays up per dollar of investment in $\lambda$) equals one if $X\sim P$ for some $P\in\mathcal P[0,1]$ with mean $m_0$. 

Starting with a unit wealth (i.e., $W_0 = 1$),  Skeptic bets against  Forecaster, hoping to get rich. The betting game between  Forecaster and  Skeptic can be described as follows. Let $\mathcal P(I_{m_0})$ denote the collection of all probability measures with support in the interval $I_{m_0}$. For $n = 1, 2, \dots$: 
    \begin{itemize}
    \item Forecaster makes the aforementioned bets available. 
    \item For some $\pi_n \in \mathcal P(I_{m_0})$, Skeptic invests a fraction $\pi_n(\lambda)$ of the current wealth, i.e.,  $W_{n-1} \pi_n(\lambda)$ dollars, in the bet $\lambda$, for every $\lambda \in I_{m_0}$. 
    \item Reality reveals the data $X_n$.
    \item Forecaster pays back $(1-\lambda(X_n - m_0))$ dollars to Skeptic for every dollar invested in the bet $\lambda$. 
    \item The wealth of Skeptic at time $n$ thus becomes $W_n = W_{n-1} \int_{I_{m_0}} \pi_n(\lambda) (1-\lambda(X_n-m_0)) d\lambda$. 
\end{itemize}

The goal of Skeptic is to choose a betting strategy $(\pi_n)_{n\ge 1}$ whose wealth is close to that of the best fixed betting strategy in hindsight. Formally, the performance of a given betting strategy can be evaluated by considering the difference between its log-wealth and that generated by using the best-in-hindsight strategy. This difference, termed as regret till time $n$, is given by 
\begin{align*} 
R_n &:= \ln \lrp{ \max\limits_{\lambda\in I_{m_0}}~  \prod\limits_{i=1}^n (1-\lambda(X_i - m_0))}  -  \ln W_n =\max\limits_{\lambda\in I_{m_0}}~ \sum\limits_{i=1}^n \lrp{1-\lambda(X_i - m_0)}  -  \ln W_n. 
\end{align*}
The question we study is whether there is a betting strategy that can make $R_n$ above grow only as $O(\ln\ln n)$ (since achieving a $\ln n$ regret is straightforward, see Section~\ref{sec:logregret}).


In this work, we found it easier to directly work with the expression for the wealth process $W_n$ below than to explicitly specify and manipulate the per-round bets $\pi_n$, which are left implicit:
\[ 
W_n = \int\limits_{I_{m_0}} \prod\limits_{i=1}^n (1-\lambda(X_i - m_0)) \pi(\lambda) d\lambda. 
\]

\begin{remark}
    The set of bets $\lambda$ made available by the Forecaster, i.e., $I_{m_0}$, is precisely those that make the 1-round wealth $(1-\lambda(X - m_0))$ non-negative, for all $X\in [0,1]$. Thus, the Forecaster's belief that the distribution is bounded in $[0,1]$ with mean $m_0 \in (0,1)$, is without loss of generality, because if instead, the belief is that the distribution is bounded in $[-a,b]$ for some $0 < a$ and $0 < b$, and has mean $m_0 \in (-a, b)$, then he sells bets $\lambda$ for each $\lambda\in [-\tfrac{1}{a+m_0},\tfrac{1}{b-m_0}]$, and as earlier, pays back $(1-\lambda (X-m_0))$ dollars for each dollar invested in bet $\lambda$, when data $X$ is revealed. 
\end{remark}

\section{\cite{orabona2023tight}'s Wealth Process and its Regret Bound}\label{app:OJ}
In their work, \cite{orabona2023tight} use the following mixture wealth process: 
\[ W^{\operatorname{OJ}}_n := \int\limits_{-1}^1 W_n(\lambda) \pi^{\operatorname{OJ}}(\lambda) d\lambda,  \qquad \text{where} \qquad \pi^{\operatorname{OJ}}(\lambda) = \frac{\ln \ln (6.6 e)}{2|\lambda| \ln\lrp{\frac{6.6 e}{ |\lambda|}} \lrp{ \ln \ln\lrp{\frac{6.6 e}{|\lambda|}} }^2 }. \numberthis\label{eq:mixwealthbdd_OJ} \]

Further, let $\lambda^{*,\operatorname{OJ}}_n$ denote the hindsight-optimal bet in the restricted interval $[-1,1]$ that maximizes the wealth, i.e.,
\[ \lambda^{*,{\operatorname{OJ}}}_n \in \argmax\limits_{\lambda\in [-1,1]}~ \prod\limits_{i=1}^n \lrp{1 - \lambda (X_i - m_0)} \quad \text{ and }\quad W^{*,{\operatorname{OJ}}}_n = W_n(\lambda^{*,{\operatorname{OJ}}}_n). \]

\subsection{\cite{orabona2023tight}'s regret with respect to a restricted comparator class}
\citet{orabona2023tight} define regret of their mixture wealth process with respect to the wealth of the best-in-hindsight bet restricted to the sub-interval $[-1,1]$. In this section, we present an explicit bound on this modified regret, defined next: \[R^{\operatorname{OJ}}_n:= \ln W^{*,\operatorname{OJ}}_n - \ln W^{\operatorname{OJ}}_n = \max\limits_{\lambda \in [-1,1]}~\sum\limits_{i=1}^n \ln(1-\lambda(X_i - m_0)) - \ln W^{\operatorname{OJ}}_n.\] 

Recall $(S_n, V_n)$ from  discussion around~\eqref{eq:mixV}, and define 
\[ \calE_\alpha := \lrset{ \sup\limits_{n\ge 1} ~\ln W^{\operatorname{OJ}}_n \le \ln \frac{1}{\alpha} } .\] 
As noted earlier, \cite{orabona2023tight} focus on deriving a confidence sequence for the mean of a bounded distribution that has an asymptotic width of $O(\ln \ln V_n)$. Their proof proceeds by establishing an implicit bound on the modified regret $R^{\operatorname{OJ}}_n$ defined above. We make this bound explicit below in~\eqref{eq:pathwiseregretrobbins_OJ}, and also show that it is $O(\ln\ln V_n)$ on all paths in the set $\mathcal E_\alpha$, a set of measure at least $1-\alpha$ under appropriate stochasticity assumption.

\begin{theorem}\label{th:bddregretbound_OJ}
    For all $n\ge 1$, 
    \[   
        R^{\operatorname{OJ}}_n  \le 
        \begin{cases}
        6 + \ln\lrp{\frac{2}{\ln\ln(6.6e)}} + \ln\ln\lrp{14e \sqrt{1+V_n}} \\
        \quad+ 2 \ln\ln\ln\lrp{14e \sqrt{1+V_n}}, & \text{if } |S_n| < \sqrt{2V_n} ~\&~ |\lambda^{*,{\operatorname{OJ}}}_n| < 1 \\
        2+\ln\frac{1}{\pi^{\operatorname{OJ}}(1)}, &\text{if } |S_n| < \sqrt{2V_n}~\&~ |\lambda^{*,{\operatorname{OJ}}}_n| = 1\\
        \ln\frac{20\sqrt{e}}{3} + \ln\lrp{\frac{2}{\ln\ln (6.6 e)}} + \ln\!\ln\!\lrp{{14 e}\lrp{1+\sqrt{V_n}}} \\ 
        \quad + \ln\lrp{\!\frac{|S_n|}{\sqrt{\frac{4}{3}|S_n| + 2V_n}}\!} + 2\ln\!\ln\!\ln\!\lrp{{14 e}\lrp{1+\sqrt{V_n}}}, &\text{if } \sqrt{2V_n} \le |S_n| \le \frac{V_n}{5}\\
        \frac{1}{2} \ln W^{*,{\operatorname{OJ}}}_n - \ln \frac{\pi^{\operatorname{OJ}}(1)}{7}, &\text{if } \sqrt{2V_n} \le |S_n|~\&~ \frac{V_n}{5} < |S_n|.
        \end{cases}\numberthis\label{eq:pathwiseregretrobbins_OJ}
    \]
    Moreover, on $\calE_\alpha$, there exist constants $C_1 > 0$, $C_2 > 0$ and $K_\alpha$, such that 
    \[ \forall n\ge 1, \quad R^{\operatorname{OJ}}_n \le K_\alpha + C_1 \ln\ln(1+V_n) + C_2 \ln\ln\ln(1+V_n). \numberthis\label{eq:LILOJ_OJregret} \]
    Furthermore, if the data are drawn from a distribution $P$ such that $\{W^{\operatorname{OJ}}_n\}_{n\ge 1}$ is a non-negative supermartingale, then $P[\calE_\alpha] \ge 1-\alpha$. 
\end{theorem}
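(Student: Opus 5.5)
The plan is to transcribe the proof of Theorem~\ref{th:bddregretbound} to the restricted setting. Replace $I_{m_0}$ by $[-1,1]$, $\lambda^*_n$ by $\lambda^{*,\operatorname{OJ}}_n$, and $\pi$ by $\pi^{\operatorname{OJ}}$. Since $X_i-m_0\in[-m_0,1-m_0]\subset[-1,1]$, the factor $1-\lambda(X_i-m_0)$ is controlled for $|\lambda|\le 1$ by $1-|\lambda|\cdot\max\{m_0,1-m_0\}$ from below and by $1+|\lambda|$ from above. In the earlier bounds, every occurrence of $\alpha_n,\beta_n$ (or $\betal,\betau$) can therefore be replaced by $1$. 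This explains why the constants in~\eqref{eq:pathwiseregretrobbins_OJ} have no $m_0$-dependence and why the threshold becomes $V_n/5$ instead of $\betal V_n/5$.

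\textbf{Preliminary restricted lemmas.} First I would redo the analogues of Lemmas~\ref{lem:regbound1}, \ref{lem:regbound2.5}, \ref{lem:boundlambdastar} and~\ref{lem:boundryimpliesratiobound} on $[-1,1]$.
\begin{itemize}
\item The proof of Lemma~\ref{lem:regbound1} used only the log-concavity of $W_n(\cdot)$ and the radial monotonicity of the prior. It therefore gives $R^{\operatorname{OJ}}_n\le \tfrac12\ln W^{*,\operatorname{OJ}}_n-\ln(\pi^{\operatorname{OJ}}(\lambda^{*,\operatorname{OJ}}_n)|\lambda^{*,\operatorname{OJ}}_n|)$ verbatim.
\item The Taylor argument of Lemma~\ref{lem:regbound2.5} works when $|\lambda^{*,\operatorname{OJ}}_n|<1$.
\item The first-order condition gives $\frac{|S_n|}{V_n}(1-|\lambda^*|)^2\le|\lambda^*|\le\frac{|S_n|}{V_n}(1+|\lambda^*|)^2$ in the interior.
\item If the restricted optimum sits at $\pm1$, the argument of Lemma~\ref{lem:boundryimpliesratiobound} gives $|S_n|\ge c V_n$ for an absolute constant $c$. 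The derivative sign at $\pm1$ is still correct, and $1-\lambda(X_i-m_0)\le 2$ there.
\end{itemize}

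\textbf{The four cases.}
\begin{itemize}
\item Case $|S_n|<\sqrt{2V_n}$ with an interior optimum. Bound $\ln W^{*,\operatorname{OJ}}_n\le|\lambda^*||S_n|\le \frac{S_n^2}{V_n}(1+|\lambda^*|)^2\le 8$; I expect the stated $6$ to come from a slightly sharper estimate. Then lower bound $W^{\operatorname{OJ}}_n$ by integrating over $J_n=[\rho_n/2,\rho_n]$ (with the sign opposite to $S_n$), where $\rho_n=1/(2\sqrt{1+V_n})$. This mirrors Lemma~\ref{lem:bound_smalldrift_interior} and yields the $\ln\ln(14e\sqrt{1+V_n})$ terms.
\item Case $|S_n|<\sqrt{2V_n}$ with the optimum on the boundary. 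Combine $V_n\lesssim|S_n|$ with $|S_n|<\sqrt{2V_n}$ to get $V_n=O(1)$, hence $\ln W^*\le|S_n|=O(1)$. Then apply the analogue of Lemma~\ref{lem:regbound1} with $|\lambda^*|=1$, giving $2+\ln(1/\pi^{\operatorname{OJ}}(1))$.
\item Case $\sqrt{2V_n}\le|S_n|\le V_n/5$. First show the optimum is interior: a boundary optimum would force $|S_n|\ge cV_n$, and I need to check that $c>1/5$. Then follow Lemma~\ref{lem:bounded_meddrift}, using the quadratic-root bound $|\lambda^*|\le 2|S_n|/V_n$ and $\rho_n=\min\{|\lambda^*|,(1-|\lambda^*|)/\sqrt{1+V_n}\}$.
\item Case of large drift. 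Use the Lemma~\ref{lem:regbound1} analogue together with $|\lambda^*|\ge 1/(2+V_n/|S_n|)\ge 1/7$ in the interior, or $|\lambda^*|=1$ on the boundary. Monotonicity of $\pi^{\operatorname{OJ}}$ then gives $-\ln(\pi^{\operatorname{OJ}}(1)/7)$.
\end{itemize}

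\textbf{The bound on $\calE_\alpha$.} For~\eqref{eq:LILOJ_OJregret}, the first two cases are already $O(\ln\ln V_n)$. For the third case, repeat Corollary~\ref{cor:bounded_meddrift_loglog_Kalpha}: Lemma~\ref{lem:lbwealthstar} already lower bounds $\sup_{[-1,1]}\ln W_n(\lambda)=\ln W^{*,\operatorname{OJ}}_n$ by $S_n^2/(\tfrac43|S_n|+2V_n)$, so absorbing the $\ln(|S_n|/\sqrt{\cdot})$ term works identically. For the fourth case, rearrange $\tfrac12\ln W^{*}\le\ln W^{\operatorname{OJ}}_n+C\le\ln\frac1\alpha+C$ as in Lemma~\ref{lem:bounded_largedrift}. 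The final claim $P[\calE_\alpha]\ge 1-\alpha$ is Ville's inequality.

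\textbf{Main obstacle.} The delicate step is the boundary analysis on $[-1,1]$. Unlike $I_{m_0}$, the point $\pm1$ is not where the wealth factor can vanish. So the constant $c$ in $|S_n|\ge cV_n$ comes from $1-\lambda(X_i-m_0)\le 1+\max(m_0,1-m_0)\le 2$, giving roughly $c=1/4$ (or $1/2$ after sharpening). I must verify that this is enough to exclude the boundary in the middle-drift regime, i.e.\ that $c>1/5$. If it is not, I would split off a further subcase and handle it via the Lemma~\ref{lem:regbound1} analogue with $|\lambda^*|=1$.
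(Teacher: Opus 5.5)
Your plan follows the paper's route. The paper omits this proof and says only that it follows the proof of Theorem~\ref{th:bddregretbound}. Your restricted analogues of Lemmas~\ref{lem:regbound1}--\ref{lem:boundryimpliesratiobound}, the third and fourth branches, the bound on $\calE_\alpha$, and the Ville step all go through as you describe. The step you call the main obstacle is not one. At a restricted boundary optimum, your mean-value argument already gives $|S_n|\ge V_n/(1+\beta)^2>V_n/4>V_n/5$, with $\beta=m_0$ at $+1$ and $\beta=1-m_0$ at $-1$. So the middle-drift optimum is interior.

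\textbf{The gap: the explicit constants in the first two branches.} Your estimates do not produce them.
\begin{itemize}
\item First branch. Combining $\ln W^{*,\operatorname{OJ}}_n\le\frac{S_n^2}{V_n}(1+|\lambda^{*,\operatorname{OJ}}_n|)^2<8$ with the $J_n=[\rho_n/2,\rho_n]$ lower bound gives about $8.25+\ln\frac{4}{\ln\ln(6.6e)}+\cdots$. This is roughly $3$ more than the claimed $6+\ln\frac{2}{\ln\ln(6.6e)}+\cdots$ (here $\cdots$ denotes the common $\ln\ln$ and $\ln\ln\ln$ terms). This is also where your rule ``replace $\alpha_n,\beta_n$ by $1$'' breaks: Step~1 of Lemma~\ref{lem:bound_smalldrift_interior} uses $1+\beta_n|\lambda^*_n|\le 1/\alpha_n$, which relies on $\alpha_n+\beta_n=1$.
\item Second branch. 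Taking $c=1/4$ only gives $V_n<32$ and $|S_n|<8$, hence $R^{\operatorname{OJ}}_n<4+\ln\frac{1}{\pi^{\operatorname{OJ}}(1)}$, not $2+\ln\frac{1}{\pi^{\operatorname{OJ}}(1)}$. ``$V_n=O(1)$'' suffices for~\eqref{eq:LILOJ_OJregret} but not for the stated inequality, and ``$c=1/2$ after sharpening'' is asserted, not shown.
\end{itemize}

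\textbf{The fix.} One sharper estimate repairs both branches. Let $f_n(\lambda)=\ln W_n(\lambda)$. For $|\lambda|\le 1$, a second-order expansion at $0$ gives $f_n(\lambda)=-\lambda S_n+\tfrac12 f_n''(\xi)\lambda^2$, where
\[
-f_n''(\xi)=\sum_i\frac{(X_i-m_0)^2}{(1-\xi(X_i-m_0))^2}\ge \frac{V_n}{4},
\]
because $0<1-\xi(X_i-m_0)<2$. Hence
\[
\ln W^{*,\operatorname{OJ}}_n\le\sup_{t\ge 0}\lrp{t|S_n|-\tfrac{V_n}{8}t^2}=\frac{2S_n^2}{V_n}<4\qquad\text{whenever } |S_n|<\sqrt{2V_n},
\]
regardless of where $\lambda^{*,\operatorname{OJ}}_n$ lies.
\begin{itemize}
\item First branch. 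Using $6.6e/\rho_n=13.2e\sqrt{1+V_n}\le 14e\sqrt{1+V_n}$, this yields $R^{\operatorname{OJ}}_n<4+\tfrac14+\ln 2+\ln\frac{2}{\ln\ln(6.6e)}+\cdots$. This is within the claim since $4.25+\ln 2<6$.
\item Second branch. The analogue of Lemma~\ref{lem:regbound1} at $|\lambda^{*,\operatorname{OJ}}_n|=1$ gives $R^{\operatorname{OJ}}_n\le\tfrac12\ln W^{*,\operatorname{OJ}}_n+\ln\frac{1}{\pi^{\operatorname{OJ}}(1)}<2+\ln\frac{1}{\pi^{\operatorname{OJ}}(1)}$. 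No ratio bound is needed.
\end{itemize}
Your $c=1/2$ is in fact true. The exact identity $f_n'(1)=-S_n-\sum_i\frac{(X_i-m_0)^2}{1-(X_i-m_0)}$ gives $|S_n|\ge V_n/(1+m_0)$ at $+1$, and symmetrically $|S_n|\ge V_n/(2-m_0)$ at $-1$. This gives $|S_n|<4$ in the second branch and closes it as well.
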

Since, in this case, the prior is symmetric around $0$, and otherwise has similar structural properties like radial monotonicity as that for the prior in~\eqref{eq:mixwealthbdd},  we do not need to handle the two cases ($\lambda > 0$ and $\lambda < 0$) separately in the proof for the above theorem. In fact, the proof of the above theorem follows along the lines of that of Theorem~\ref{th:bddregretbound}. Hence, for brevity, we omit it from this paper. 

\subsection{\cite{orabona2023tight}'s regret with respect to the full comparator class}\label{app:OJRegretFull}

In this section, we bound the regret of $W^{\operatorname{OJ}}_n$, i.e., 
\[ R_n := \ln W^*_n - \ln W^{\operatorname{OJ}}_n = \max\limits_{\lambda \in I_{m_0}} \sum\limits_{i=1}^n \ln (1-\lambda(X_i - m_0))  - \ln W^{\operatorname{OJ}}_n,  \]
where, recall that $I_{m_0}:= [-\tfrac{1}{m_0}, \tfrac{1}{1-m_0}]$. 

Next, for $\lambda \in I_{m_0}$, recall that $f_n(\lambda)= \sum_{i=1}^n \ln(1-\lambda (X_i - m_0))$ with $f_n(0) = 0$, $\ln W^*_n = \max_{\lambda\in I_{m_0}} f_n(\lambda)$, and $\ln W^{*,\operatorname{OJ}}_n = \max_{\lambda \in [-1,1]} f_n(\lambda)$. 
\begin{lemma}
  For $\lambda \in I_{m_0}$ and $n \ge 1$, 
  \[ f_n(\lambda) \le \frac{1}{\beta_l} \ln W^{*,\operatorname{OJ}}_n, \quad \text{where} \quad \beta_l := \min\{ m_0, 1-m_0  \}. \]
\end{lemma}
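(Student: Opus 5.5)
Since $f_n(\lambda)=\sum_{i=1}^n \ln(1-\lambda(X_i-m_0))$ is concave on $I_{m_0}$ with $f_n(0)=0$, the plan is to compare any $\lambda\in I_{m_0}$ with a scaled-down point $t\lambda\in[-1,1]$. Concavity gives, for any $t\in[0,1]$,
\[ f_n(t\lambda) \ge t f_n(\lambda) + (1-t) f_n(0) = t f_n(\lambda). \]
So if we can choose $t$ with $t\lambda\in[-1,1]$ and $t\ge \betal$, then
\[ f_n(\lambda)\le \frac{1}{t} f_n(t\lambda) \le \frac{1}{\betal}\max_{\lambda'\in[-1,1]} f_n(\lambda') = \frac{1}{\betal}\ln W^{*,\operatorname{OJ}}_n. \]
This requires $f_n(t\lambda)\ge 0$ or a sign-consistent argument when dividing by $t$. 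Dividing by $t>0$ preserves the inequality in either case, and replacing $1/t$ by the larger $1/\betal$ needs $\max_{[-1,1]} f_n \ge 0$. That holds since $f_n(0)=0$ gives $\ln W^{*,\operatorname{OJ}}_n\ge 0$.

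The key steps are as follows. First, handle the easy case $|\lambda|\le 1$, where $f_n(\lambda)\le \ln W^{*,\operatorname{OJ}}_n \le \ln W^{*,\operatorname{OJ}}_n/\betal$, using $\betal\le 1/2<1$ and nonnegativity. Second, for $\lambda>1$ (so $\lambda\le 1/(1-m_0)$), take $t=1/\lambda\ge 1-m_0\ge \betal$. Third, for $\lambda<-1$ (so $|\lambda|\le 1/m_0$), take $t=1/|\lambda|\ge m_0\ge\betal$. In both of the last two cases $t\lambda=\pm1\in[-1,1]$. Then $f_n(\lambda)\le f_n(t\lambda)/t\le |\lambda|\ln W^{*,\operatorname{OJ}}_n \le \ln W^{*,\operatorname{OJ}}_n/\betal$, where the final step uses $\ln W^{*,\operatorname{OJ}}_n\ge0$ and $|\lambda|\le 1/\betal$.

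The argument is essentially one line, and there is no real obstacle. The only care needed is with boundary points where $f_n(\lambda)=-\infty$ (some factor vanishes). In that case the inequality is trivial, and the concavity inequality remains valid in the extended-real sense, so I would note this explicitly.
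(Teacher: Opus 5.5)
Your proof is correct and follows the paper's argument: it uses concavity of $f_n$ with $f_n(0)=0$ along the segment from $0$ to $\lambda$, evaluated at $\pm 1$, and treats $|\lambda|\le 1$ as the trivial case. Your ordering is slightly cleaner than the paper's. You bound $f_n(\pm1)$ by the nonnegative $\ln W^{*,\operatorname{OJ}}_n$ before replacing $|\lambda|$ by $1/\betal$, whereas the paper's intermediate step $\frac{1}{|\lambda|}f_n(\lambda)\ge m_0 f_n(\lambda)$ tacitly needs $f_n(\lambda)\ge 0$; the paper's conclusion still holds because the case $f_n(\lambda)<0$ is trivial.
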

\begin{proof}
  The inequality holds trivially for $\lambda \in [-1,1]$. We prove the other two cases ($\lambda < -1$ and $\lambda > 1$) separately.  

  \noindent{\bf Case 1 ($\lambda < -1$).} Since $\lambda \in I_{m_0}$, in this case, $ -\tfrac{1}{m_0} \le \lambda < -1$. Further, 
  \[ - 1 = \frac{1}{|\lambda|}\cdot \lambda + \lrp{ 1-\frac{1}{|\lambda|} } \cdot 0  \]
  Using the above and concavity of $f_n(\cdot)$, we get
  \[ f_n(-1) \ge \frac{1}{|\lambda|} \cdot f_n(\lambda) + 0 \ge m_0 f_n(\lambda),  \]
  which further gives
  \[ f_n(\lambda) \le \frac{f_n(-1)}{m_0}, \quad \text{ for all } \quad \lambda \in \lrs{-\frac{1}{m_0}, -1},  \]
  and hence,
  \[ f_n(\lambda) \le \frac{\ln W^{*,\operatorname{OJ}}_n}{\beta_l}, \quad \text{ for all } \quad \lambda \in \lrs{-\frac{1}{m_0}, -1}. \]
  
  \noindent{\bf Case 2 ($\lambda > 1$). } As in the previous case, in this case, using concavity of $f_n(\cdot)$, we can show the following: 
  \[ f_n(\lambda) \le \frac{f_n(1)}{1-m_0}, \quad \text{ for all } \quad \lambda \in \lrs{ 1, \frac{1}{1-m_0}  }, \]
  and hence
  \[ f_n(\lambda) \le \frac{\ln W^{*,\operatorname{OJ}}_n}{\beta_l}, \quad \text{ for all } \quad \lambda \in \lrs{ 1, \frac{1}{1-m_0}  }, \]
  completing the proof.  
\end{proof}

Since the inequality in the lemma above holds for every $\lambda\in I_{m_0}$, the following corollary immediately follows by optimizing over $\lambda$.
\begin{corollary}\label{cor:WnstarWRTWnstarOJ}
  The following holds for all $n\ge 1$:
  \[ \ln W^*_n \le \frac{1}{\beta_l} \ln W^{*,\operatorname{OJ}}_n.  \]
\end{corollary}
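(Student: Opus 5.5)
This corollary follows by taking the supremum over $\lambda\in I_{m_0}$ on both sides of the preceding lemma. Recall that $\ln W^*_n = \max_{\lambda\in I_{m_0}} f_n(\lambda)$. The lemma gives the pointwise bound $f_n(\lambda)\le \frac{1}{\beta_l}\ln W^{*,\operatorname{OJ}}_n$ for every $\lambda\in I_{m_0}$, and its right-hand side does not depend on $\lambda$. Maximizing the left-hand side over $\lambda\in I_{m_0}$ therefore yields $\ln W^*_n \le \frac{1}{\beta_l}\ln W^{*,\operatorname{OJ}}_n$ directly.

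Two sanity checks complete the argument. First, the maximum defining $W^*_n$ is attained. Each factor $1-\lambda(X_i-m_0)$ is nonnegative on $I_{m_0}$, so $f_n$ is a concave, upper semicontinuous function into $[-\infty,\infty)$ on the compact interval $I_{m_0}$. Second, the bound is meaningful. Since $0\in[-1,1]$ and $f_n(0)=0$, we have $\ln W^{*,\operatorname{OJ}}_n\ge 0$, so the right-hand side is finite and nonnegative, and multiplying by $1/\beta_l\ge 1$ is harmless. Points where $f_n(\lambda)=-\infty$ satisfy the inequality trivially.

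There is no real obstacle here. The only step requiring care is the concavity argument in the lemma: one writes $\pm1$ as a convex combination of $\lambda$ and $0$ with weight $1/|\lambda|\ge\beta_l$. That argument is already done, so the corollary is a one-line supremum step.
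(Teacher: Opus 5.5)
Your proposal is correct and matches the paper's argument exactly: the preceding lemma bounds $f_n(\lambda)$ by the $\lambda$-free quantity $\frac{1}{\beta_l}\ln W^{*,\operatorname{OJ}}_n$ for every $\lambda\in I_{m_0}$, so maximizing over $\lambda$ gives the corollary. Your observation that $\ln W^{*,\operatorname{OJ}}_n\ge f_n(0)=0$ also covers the case $f_n(\lambda)<0$ inside the lemma. There the lemma's step $\frac{1}{|\lambda|}f_n(\lambda)\ge \beta_l f_n(\lambda)$ does not hold, but the bound is trivially true.
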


Recall the set 
\[ \calE_\alpha = \lrset{ \sup\limits_{n\ge 1}~ \ln W^{\operatorname{OJ}}_n  \le \ln \tfrac{1}{\alpha} }.  \]

\begin{proposition}\label{prop:lilregretoje_alpha}
  For all $n\ge 1$, 
  \[ R_n \le \frac{1}{\beta_l} R^{\operatorname{OJ}}_n + \lrp{\frac{1}{\beta_l} - 1} \ln W^{\operatorname{OJ}}_n,  \numberthis\label{eq:R_nOJbound} \]
  where $R^{\operatorname{OJ}}_n := \ln W^{*,\operatorname{OJ}}_n - \ln W^{\operatorname{OJ}}_n$ is the regret of the log wealth process $\ln W^{\operatorname{OJ}}_n$ with respect to the best in the restricted comparator class $[-1,1]$. Moreover, on $\calE_\alpha$, there exist constants $C_1 > 0$, $C_2 > 0$ and $K_\alpha$, such that 
  \[ \forall n\ge 1, \quad R_n \le K_\alpha + C_1 \ln\ln(1+V_n) + C_2 \ln\ln\ln(1+V_n). \numberthis\label{eq:LILOJregret} \]
  Furthermore, if the data are drawn from a distribution $P$ such that $\{W^{\operatorname{OJ}}_n\}_{n\ge 1}$ is a non-negative supermartingale, then $P[\calE_\alpha] \ge 1-\alpha$. 
\end{proposition}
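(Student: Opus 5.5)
The bound \eqref{eq:R_nOJbound} is pure algebra on top of Corollary~\ref{cor:WnstarWRTWnstarOJ}. By definition, $R_n = \ln W^*_n - \ln W^{\operatorname{OJ}}_n$ and $R^{\operatorname{OJ}}_n = \ln W^{*,\operatorname{OJ}}_n - \ln W^{\operatorname{OJ}}_n$, so $\ln W^{*,\operatorname{OJ}}_n = R^{\operatorname{OJ}}_n + \ln W^{\operatorname{OJ}}_n$. Corollary~\ref{cor:WnstarWRTWnstarOJ} gives $\ln W^*_n \le \frac{1}{\beta_l}\ln W^{*,\operatorname{OJ}}_n$. Substituting,
\[
R_n \le \frac{1}{\beta_l}\lrp{R^{\operatorname{OJ}}_n + \ln W^{\operatorname{OJ}}_n} - \ln W^{\operatorname{OJ}}_n = \frac{1}{\beta_l}R^{\operatorname{OJ}}_n + \lrp{\frac{1}{\beta_l}-1}\ln W^{\operatorname{OJ}}_n,
\]
which is exactly \eqref{eq:R_nOJbound}, and it holds path-wise for every $n\ge 1$.

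For \eqref{eq:LILOJregret}, work on $\calE_\alpha$, where $\ln W^{\operatorname{OJ}}_n \le \ln\frac1\alpha$ for all $n$. Since $\beta_l\le \tfrac12$, the coefficient $\frac1{\beta_l}-1$ is positive, so the second term is at most $(\frac1{\beta_l}-1)\ln\frac1\alpha$. The first term is controlled by \eqref{eq:LILOJ_OJregret} of Theorem~\ref{th:bddregretbound_OJ}. On the same event $\calE_\alpha$ that theorem gives $R^{\operatorname{OJ}}_n \le K'_\alpha + C_1'\ln\ln(1+V_n) + C_2'\ln\ln\ln(1+V_n)$. Combining the two, we can take $K_\alpha = \frac{1}{\beta_l}K'_\alpha + (\frac1{\beta_l}-1)\ln\frac1\alpha$, $C_1 = C_1'/\beta_l$ and $C_2 = C_2'/\beta_l$. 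One point needs care: $\ln W^{\operatorname{OJ}}_n$ may be negative, and then the second term only helps. So the upper bound on $\calE_\alpha$ is all we need, and no lower bound on the wealth is required.

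The probabilistic statement is Ville's inequality applied to the nonnegative supermartingale $W^{\operatorname{OJ}}_n$ with $W^{\operatorname{OJ}}_0=1$. It gives $P[\sup_n W^{\operatorname{OJ}}_n > 1/\alpha]\le\alpha$, hence $P[\calE_\alpha]\ge 1-\alpha$.

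The only substantive ingredient is Theorem~\ref{th:bddregretbound_OJ}, whose proof the paper omits as analogous to that of Theorem~\ref{th:bddregretbound}. If one had to verify it, the main obstacle would be the large-drift branch, where one uses the $\frac12\ln W^{*,\operatorname{OJ}}_n$ bound and $\calE_\alpha$ to self-bound $\ln W^{*,\operatorname{OJ}}_n$, exactly as in Lemma~\ref{lem:bounded_largedrift}. Taking the theorem as given, the proposition follows by the three steps above.
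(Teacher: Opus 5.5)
Your proposal is correct and follows the paper's proof essentially verbatim. You substitute $\ln W^{*,\operatorname{OJ}}_n = R^{\operatorname{OJ}}_n + \ln W^{\operatorname{OJ}}_n$ into Corollary~\ref{cor:WnstarWRTWnstarOJ}; then, on $\calE_\alpha$, you bound $\ln W^{\operatorname{OJ}}_n \le \ln\frac{1}{\alpha}$, invoke \eqref{eq:LILOJ_OJregret}, and use Ville's inequality for the probability statement. Your remark that $\frac{1}{\beta_l}-1>0$, so that only an upper bound on the wealth is needed, simply makes explicit a step the paper leaves implicit.
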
 
\begin{proof}
  Consider the following inequalities:
  \begin{align*}
    R_n 
    &= \ln W^*_n - \ln W^{\operatorname{OJ}}_n\\
    &\le \frac{1}{\beta_l} \ln W^{*,\operatorname{OJ}}_n - \ln W^{\operatorname{OJ}}_n  \tag{Corollary~\ref{cor:WnstarWRTWnstarOJ}} \\
    &= \frac{1}{\beta_l}\lrp{ \ln W^{\operatorname{OJ}}_n + R^{\operatorname{OJ}}_n  } - \ln W^{\operatorname{OJ}}_n, 
  \end{align*}
  proving the bound in~\eqref{eq:R_nOJbound}. 

  Since on $\calE_\alpha$, $\ln W^{\operatorname{OJ}}_n \le \ln\tfrac{1}{\alpha}$, using this in~\eqref{eq:R_nOJbound} we get that on $\calE_\alpha$, 
  \[\forall n\ge 1, \qquad R_n \le \frac{1}{\beta_l} R^{\operatorname{OJ}}_n + \lrp{\frac{1}{\beta_l} - 1} \ln \frac{1}{\alpha}.  \]
  The inequality in~\eqref{eq:LILOJregret} then follows from combining the above with~\eqref{eq:LILOJ_OJregret} from Theorem~\ref{th:bddregretbound_OJ}, completing the proof.
\end{proof}

\begin{remark}\label{rem:linearOJregret}
    We note that while the regret of the wealth process $W^{\operatorname{OJ}}_n$ is $O(\ln \ln V_n)$ on the set $\calE_\alpha$ (Proposition~\ref{prop:lilregretoje_alpha}), it can be linear on the complement.  This is because the restricted Robbins’ mixture puts a large prior mass at bets ($\lambda$) close to $0$, and it has no mass at larger bets outside $[-1,1]$. Therefore, when the data are, say iid from a distribution with mean $m$ that is far from $m_0$ (and hence, the optimal bet is outside this interval), the mixture lacks weight to compete with the best-in-hindsight, and hence, suffers linear regret in those cases. 
    
    For concreteness, consider the deterministic sequence $X_i = 1$ for all $i \le n$. Then, $f_n(\lambda)=n \ln(1-\lambda(1-m_0))$, $\ln W^*_n = f_n(-\tfrac{1}{m_0}) = n \ln\tfrac{1}{m_0}$, and $\ln W^{\operatorname{OJ}}_n \le \ln W^{*,\operatorname{OJ}}_n = n \ln(2-m_0)$. Using these, $$R_n = \ln W^*_n - \ln W^{\operatorname{OJ}}_n \ge \ln W^*_n - \ln W^{*,\operatorname{OJ}}_n = -n \ln\lrp{{m_0(2-m_0)}}.$$
    Since $m_0(2-m_0) < 1$ for $m_0 \in (0,1)$, $\frac{1}{m_0(2-m_0)} > 1$, and $R_n$ is at least linear in $n$. 
    
    This linear regret affects the growth rate of this wealth process, leading to a smaller growth rate compared to that of the uniform-mixture or modified Robbins' wealth processes from Sections~\ref{sec:logregret} and~\ref{sec:regretbound}, respectively.
\end{remark}

\section{Does NSM imply NM?}\label{app:supermartingale}
Let $X_i \in [0,1]$ for all $i\in [n]$,  $\lambda_1 > 0$ and $\lambda_2 < 0$. For a fixed $\pi \in (0,1)$, suppose the mixture
\[
W_n := \pi W_n(\lambda_1) + (1 - \pi) W_n(\lambda_2)
\]
is a nonnegative supermartingale (NSM) with respect to a class of distributions $\mathcal{P}$ on $[0,1]^\infty$. We ask whether this implies that $W_n$ is in fact a nonnegative martingale (NM) for this class, and thus whether the NSM constraint automatically forces
$\mathcal{P}$ to consist only of distributions with conditional mean $m_0$. The answer is negative. We will show this by starting with 
$$
\mathcal P = \{ \mathbf P: \Exp{\mathbf P}{X_n \mid \mathcal F_{n-1}}=m_0 \text{ for all } n \geq 1 \},
$$ 
for which $(W_n)_{n\ge 1}$ is a NM,
and constructing another distribution that we can add to this set, for which $(W_n)_{n\ge 1}$ is only an NSM and not an NM. In this appendix, we use the bolded notation $\mathbf P$ to denote distributions on $[0,1]^\infty$, to distinguish them from distributions $P$ on $[0,1]$.

To proceed, first note that the supermartingale property implies 
\[
\mathbb{E}_{\mathbf P}[W_n \mid \mathcal{F}_{n-1}] \le W_{n-1}, \quad \mathbf P\text{-almost surely, } \forall \mathbf P  \in \mathcal P.
\]
Expanding using the definition of $W_n$, and on rearranging, the above condition becomes
\begin{equation}\label{eq:supermg-condition}
A_{n-1} \cdot \mathbb{E}_{\mathbf P}[X_n - m_0 \mid \mathcal{F}_{n-1}] \ge 0 \quad \mathbf P\text{-almost surely, }\forall \mathbf P\in\mathcal P,
\end{equation}
where $A_{n-1} := \pi W_{n-1}(\lambda_1)\lambda_1 + (1 - \pi) W_{n-1}(\lambda_2)\lambda_2$, which is $\mathcal{F}_{n-1}$-measurable. 

For $\delta\in(0,\min\{m_0,1-m_0\})$, define the probability law $\mathbf P_\delta$ on $\{0,1\}^\infty$ recursively by
\[
\mathbf P_\delta(X_n=1\mid \mathcal F_{n-1})= m_0 + \delta \mathrm{sign}(A_{n-1}),
\]
where we let $\mathrm{sign}(0)=0$.  Equivalently,
\[
\Exp{\mathbf P_\delta}{X_n-m_0\mid \mathcal F_{n-1}}
=
\delta\mathrm{sign}(A_{n-1}).
\]
Then
\[
A_{n-1}\cdot\Exp{\mathbf P_\delta}{X_n-m_0\mid \mathcal F_{n-1}}
=
\delta |A_{n-1}| \ge 0
\qquad \mathbf P_\delta\text{-a.s.},
\]
as required by condition~\eqref{eq:supermg-condition}.
Hence $
\Exp{\mathbf P_\delta}{W_n\mid \mathcal F_{n-1}} \le W_{n-1}$, so $(W_n)_{n\ge 0}$ is a NSM under $\mathbf P_\delta$, for every $\delta \in (0,\min\{m_0, 1-m_0\})$. However, it is not a martingale, because whenever $\mathbf P_\delta(A_{n-1}\neq 0)>0$,
\[
A_{n-1}\cdot\Exp{\mathbf P_\delta}{X_n-m_0\mid \mathcal F_{n-1}}
=
\delta |A_{n-1}|>0
\]
on a set of positive probability. 

To conclude, $W_n$ is a NM for $\mathcal P$ but only a NSM for $\mathcal P \cup  \{\mathbf P_{\delta}\}_{\delta \in (0,\min\{m_0, 1-m_0\})}$.

An analogous argument also works when $W_n$ is not just a mixture over two constants $\lambda_1,\lambda_2$ of opposite sign, but continuous mixtures as considered in this work; only the definition of $A_n$ needs to be amended.

\end{document}